\documentclass[]{style}
\usepackage{float}
\usepackage{graphicx}
\usepackage{titletoc}

\usepackage[english]{babel}
\usepackage{xcolor}
\usepackage{color}
\usepackage{wrapfig}
\usepackage[table]{xcolor}

\usepackage{amsmath}
\usepackage{amssymb}
\usepackage{amsthm}
\usepackage{graphicx}
\usepackage{booktabs}      
\usepackage{array}
\usepackage{multirow}      
\usepackage{caption}  
\usepackage{amsmath}
\usepackage[table]{xcolor}  
\usepackage{xcolor}
\usepackage{microtype}
\usepackage{parskip}

\definecolor{bg}{HTML}{E7E7FE}
\definecolor{accent}{HTML}{6B67EE}
\definecolor{accent-bright}{HTML}{6B67EE}
\definecolor{mybg}{HTML}{EAE9FF} 
\definecolor{lightbg}{HTML}{F7F7FF}
\definecolor{frame}{HTML}{9793F8}

\usepackage{listings,xcolor}
\definecolor{infillkw}{RGB}{0,0,180}
\definecolor{infillcm}{RGB}{180,30,30}
\lstdefinestyle{infillpy}{
  language=Python, basicstyle=\ttfamily\scriptsize,
  keywordstyle=\color{infillkw}, commentstyle=\color{infillcm}\bfseries,
  showstringspaces=false, breaklines=true, columns=fullflexible,
  aboveskip=2pt, belowskip=2pt,
}
\lstdefinestyle{gsmtext}{basicstyle=\scriptsize\ttfamily, breaklines=true,
   columns=fullflexible, keepspaces=true, breakindent=0pt, aboveskip=2pt, belowskip=2pt}

\newtcolorbox{figurepanel}{
    enhanced,
    colback=mybg,
    colframe=frame,
    boxrule=0.8pt,
    arc=6pt,
    left=10pt,right=10pt,top=10pt,bottom=10pt
}

\newtcolorbox{samplebox}{
    enhanced,
    colback=lightbg,
    colframe=frame,
    boxrule=0.6pt,
    arc=4pt,
    left=8pt,right=8pt,top=7pt,bottom=7pt
}
\newcommand{\nfeheader}[1]{%
    {\sffamily\bfseries\large #1}
}
\newcommand{\methodtitle}[1]{%
    {\bfseries #1}
}

\usepackage{algorithm}
\usepackage{algpseudocode}
\newcommand{\insertat}[1]{\triangleleft_{#1}}
\usepackage[noEnd=false]{algpseudocodex}

\usepackage{amsmath,amsfonts,bm}

\def\1{\bm{1}}

\DeclareMathAlphabet{\mathsfit}{\encodingdefault}{\sfdefault}{m}{sl}
\SetMathAlphabet{\mathsfit}{bold}{\encodingdefault}{\sfdefault}{bx}{n}

\usepackage{thmtools}
\usepackage{thm-restate}

\newtcolorbox{propbox}{
    enhanced,
    colback=accent!4!bg!60!white,
    colframe=accent, 
    boxrule=0pt,
    leftrule=0pt,      
    arc=0pt,
    left=8pt, right=8pt, top=8pt, bottom=8pt,
    fonttitle=\bfseries\sffamily,
    coltitle=accent-bright
}
\tcolorboxenvironment{proposition}{enhanced,
    colback=accent!4!bg!60!white,colframe=accent,boxrule=0pt,leftrule=0pt,arc=0pt,
    left=8pt, right=8pt, top=8pt,bottom=8pt,fonttitle=\bfseries\sffamily,coltitle=accent-bright
}
\tcolorboxenvironment{remark}{enhanced,
    colback=accent!4!bg!60!white,colframe=accent,boxrule=0pt,leftrule=0pt,arc=0pt,
    left=8pt, right=8pt, top=8pt,bottom=8pt,fonttitle=\bfseries\sffamily,coltitle=accent-bright
}
\tcolorboxenvironment{definition}{enhanced,
    colback=accent!4!bg!60!white,colframe=accent,boxrule=0pt,leftrule=0pt,arc=0pt,
    left=8pt, right=8pt, top=8pt,bottom=8pt,fonttitle=\bfseries\sffamily,coltitle=accent-bright
}
\tcolorboxenvironment{box*}{enhanced,
    colback=accent!4!bg!60!white,colframe=accent,boxrule=0pt,leftrule=0pt,arc=0pt,
    left=8pt, right=8pt, top=8pt,bottom=8pt,fonttitle=\bfseries\sffamily,coltitle=accent-bright
}
\tcolorboxenvironment{lemma}{enhanced,
    colback=accent!4!bg!60!white,colframe=accent,boxrule=0pt,leftrule=0pt,arc=0pt,
    left=8pt, right=8pt, top=8pt,bottom=8pt,fonttitle=\bfseries\sffamily,coltitle=accent-bright
}
\tcolorboxenvironment{restatable}{enhanced,
    colback=accent!4!bg!60!white,colframe=accent,boxrule=0pt,leftrule=0pt,arc=0pt,
    left=8pt, right=8pt, top=8pt,bottom=8pt,fonttitle=\bfseries\sffamily,coltitle=accent-bright
}
\tcolorboxenvironment{theorem}{enhanced,
    colback=accent!4!bg!60!white,colframe=accent,boxrule=0pt,leftrule=0pt,arc=0pt,
    left=8pt, right=8pt, top=8pt,bottom=8pt,fonttitle=\bfseries\sffamily,coltitle=accent-bright
}
\tcolorboxenvironment{corollary}{enhanced,
    colback=accent!4!bg!60!white,colframe=accent,boxrule=0pt,leftrule=0pt,arc=0pt,
    left=8pt, right=8pt, top=8pt,bottom=8pt,fonttitle=\bfseries\sffamily,coltitle=accent-bright
}

\declaretheorem[
  name=Definition,
  numberwithin=section,
  refname={Definition,Definition},
  Refname={Definition,Definition}
]{definition}

\declaretheorem[
  name=Proposition,
  numberwithin=section,
  refname={Proposition,Propositions},
  Refname={Proposition,Propositions}
]{proposition}

\declaretheorem[
  name=Lemma,
  numberwithin=section,
  refname={Lemma,Lemmas},
  Refname={Lemma,Lemmas}
]{lemma}

\declaretheorem[
  name=Corollary,
  numberwithin=section,
]{corollary}

\declaretheorem[]{box*}

\title{\fontsize{0.6cm}{0.72cm}\selectfont A2D2: Fine-Tuning Any-Length Discrete \\[0.3cm] Diffusion for Adaptive Decoding}

\author[1]{Sophia Tang}
\author[2]{\, Yuchen Zhu}
\author[2]{\, Molei Tao}
\author[1,3]{\, Pranam Chatterjee}

\affiliation[1]{Department of Computer and Information Science, University of Pennsylvania}
\affiliation[2]{School of Mathematics, Georgia Institute of Technology}
\affiliation[3]{Department of Bioengineering, University of Pennsylvania}

\abstract{Discrete diffusion models offer a simple and stable likelihood-based framework for sequence generation, recently extended to any-length settings via token insertion. Principled reward-guided fine-tuning for any-length discrete diffusion, however, remains largely unexplored. We introduce Fine-Tuning \textbf{A}ny-Length \textbf{D}iscrete \textbf{D}iffusion for \textbf{A}daptive Decoding (\textbf{A2D2}), a unified framework for reward-guided fine-tuning of any-length discrete diffusion models via joint optimization of the insertion and unmasking policies together with a quality-based inference schedule. We derive the Radon–Nikodym derivative for the joint insertion–unmasking path measures, enabling theoretically guaranteed convergence to the intractable reward-tilted sequence distribution without requiring target samples. Building on this, we establish unmasking and insertion quality as tractable approaches for minimizing decoding error and introduce the \textbf{Adaptive Joint Decoding (AJD)} loss, which provably yields the optimal path measure that generates the reward-tilted distribution. Empirically, A2D2 improves reward optimization while enhancing generation flexibility and accuracy over prior fixed-length fine-tuning and inference-time guidance methods.

\vspace{0.5em}%
\textbf{\sffamily\bfseries Correspondance: }\href{mailto:sophtang@seas.upenn.edu}{\texttt{sophtang@seas.upenn.edu}}, \href{mailto:pranam@upenn.edu}{\texttt{pranam@upenn.edu}}}
\begin{document}
\maketitle

\renewcommand{\footnoterule}{%
  \kern -3pt
  \hrule width \linewidth
  \kern 2.6pt
}
\vspace{-0.2cm}

\section{Introduction}
Discrete diffusion models \citep{lou2023discrete, austin2021structured} have emerged as a leading paradigm for sequence generation, overcoming key limitations of autoregressive models through bidirectional context, any-order generation \citep{shi2024simplified}, flexible reward guidance \citep{nisonoff2024unlocking, tang2025peptune}, and parallel decoding \citep{christopher2025speculative, ren2025fast}. These properties have driven strong performance across biological sequence design \citep{wang2024dplm, gruver2023protein}, reasoning \citep{ye2024beyond, zhu2025enhancing}, and efficient sampling \citep{holderrieth2025leaps, zhu2025mdns}. Among these approaches, \textbf{masked discrete diffusion models} (MDMs) \citep{sahoo2024simple, shi2024simplified, ou2024your, zheng2024masked} stand out for their simple design and stable likelihood-based objective, and have recently been extended to any-length generation by allowing masked-token insertion at arbitrary positions during the diffusion process \citep{kim2025any}. However, scaling any-length discrete diffusion to reward optimization and fine-tuning remains largely unexplored.

While principled approaches have been introduced for reward fine-tuning fixed-length discrete diffusion models \citep{tang2025tr2, zekri2025fine, wang2025fine, cao2025glid}, any-length MDMs introduce a substantially larger action space that spans both unmasking and variable-length insertions, making performance highly sensitive to the chosen trajectory. This motivates our \textbf{key question}:

\begin{center}
    \textit{Can we efficiently fine-tune any-length discrete diffusion models to sample from an intractable reward-tilted distribution while preserving high generation quality?}
\end{center}

To answer this, we introduce Fine-Tuning \textbf{A}ny-Length \textbf{D}iscrete \textbf{D}iffusion for \textbf{A}daptive Decoding (\textbf{A2D2}), a unified framework for reward-guided fine-tuning of any-length discrete diffusion models via joint optimization of the insertion and unmasking policies and quality-based inference schedule. 
\begin{figure*}
    \centering
    \includegraphics[width=0.9\linewidth]{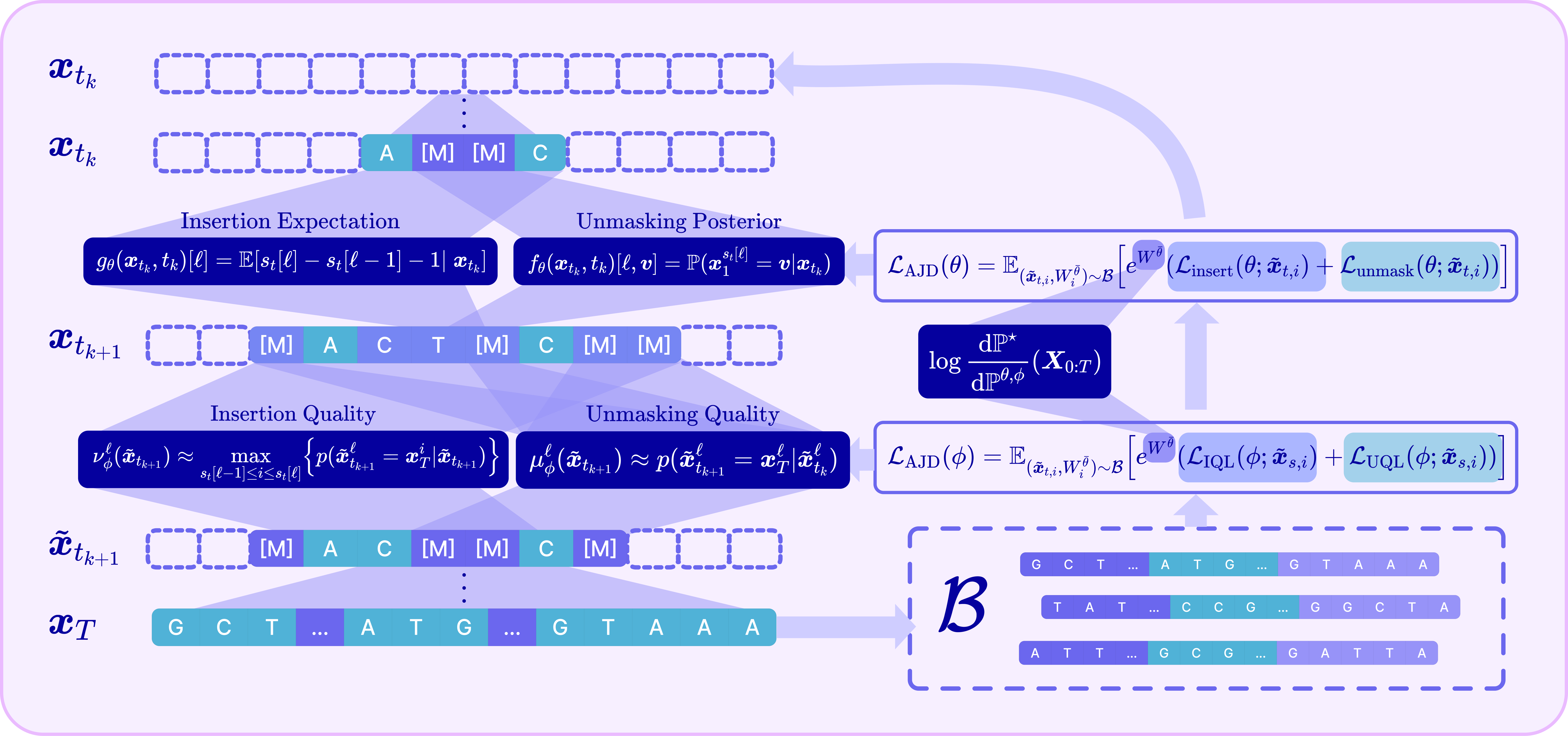}
    \caption{\textbf{Fine-Tuning Any-Length Discrete Diffusion for Adaptive Decoding.} A2D2 jointly fine-tunes the insertion and unmasking policies together with a quality classifier to sample from the optimal reward-tilted distribution $\mathbb{P}^\star$. Sequences are generated via adaptive inference and stored in a replay buffer, then used to compute the Adaptive Joint Decoding loss $\mathcal{L}_{\text{AJD}}$, which weights each sequence's contribution by an importance weight $W^{\bar{\theta},\bar{\phi}}$.}
    \label{fig:1}
\end{figure*}

Our \textbf{main contributions} can be summarized in the following points: 
\begin{enumerate}
    \item  We derive the Radon-Nikodym derivative for the joint insertion and unmasking path measures, which enables theoretically guaranteed convergence to an intractable reward-tilted sequence distribution.
    \item We establish unmasking and insertion quality as tractable methods of minimizing compounding parallelization error (CPE) and introduce the \textbf{Adaptive Joint Decoding (AJD)} loss, which provably yields the optimal path measure that \textit{minimizes error} and \textit{generates the reward-tilted distribution}. 
    \item We demonstrate that A2D2 simultaneously optimizes rewards while enhancing generation flexibility and accuracy over prior fixed-length fine-tuning and inference-time guidance approaches.
\end{enumerate}

\paragraph{Related Works} A discussion of additional related works is provided in App \ref{app:related-work}.

\section{Preliminaries}

\paragraph{Masked Discrete Diffusion Models}
The discrete diffusion paradigm models sequence generation as a continuous-time Markov chain (CTMC), which starts at an uninformative prior distribution $p_{\text{prior}}$ and makes discrete jumps over a finite time horizon $t\in [0,1]$ to generate a sample from the data distribution $p_{\text{data}}$. \textbf{Masked discrete diffusion models} (MDMs) have a unique prior distribution $p_{\text{prior}}$ consisting of fully masked sequences $(\boldsymbol{M})^L$ \citep{shi2024simplified, sahoo2024simple, ou2024your, zheng2024masked}. Given a clean data distribution $p_{\text{data}}$, the MDM's forward process progressively converts clean tokens to masked tokens according to a schedule. Then, the training process aims to reconstruct the clean sequence $\boldsymbol{x}_1\sim p_{\text{data}}$ from intermediate partially masked sequences $\boldsymbol{x}_t$ by minimizing the \textbf{denoising cross-entropy (DCE)} loss:
\begin{align}
    \mathcal{L}_{\text{DCE}}(\theta;\boldsymbol{x}_1):=\underset{t\sim \mathcal{U}(0,1)}{\mathbb{E}}\underset{\boldsymbol{x}_t\sim p_t(\cdot |\boldsymbol{x}_1)}{\mathbb{E}} \left[-\frac{\dot{\alpha}_t}{1-\alpha_t}\sum_{\ell:\boldsymbol{x}^\ell_t=\boldsymbol{M}}\log f_\theta(\boldsymbol{x}_t,t)[\ell, \boldsymbol{x}_1^{s_t[\ell]}]\right]
\end{align}
where $\alpha_t:[0,1]\to [0, 1]$ is the unmasking schedule bounded by $\alpha_0=0$ and $\alpha_1=1$ where $\alpha_t$ is the probability that a given token is unmasked at time $t$ in the forward masking process and the unmasking occurs in the reverse process if the time $t$ is greater than the unmasking time $t_u^\ell\sim \dot{\alpha}_tdt$, i.e. $t\geq t_u^\ell$. Under the standard linear schedule, $\alpha_t=t$.

\paragraph{Any-Length Discrete Diffusion}
To overcome fixed-length initialization, any-length discrete diffusion models \citep{havasi2025edit, kim2025any} \textit{insert} tokens along the generation process. We build on the \textbf{flexible-length masked diffusion model} framework of \citet{kim2025any}, which defines a \textit{joint stochastic interpolant} over insertion and unmasking.

The forward corruption process gradually masks \textit{and} removes tokens by drawing an insertion time $t_i^\ell\sim \dot{\alpha}_t\,dt$ and unmasking time $t_u^\ell \sim \boldsymbol{1}[t\geq t^\ell_i]\frac{\dot{\beta}t}{1-\beta{t_i^\ell}}\,dt$, enforcing that a token is unmasked only after insertion ($t^\ell_u>t_i^\ell$). We track the position of each clean token via $s_t[\ell]$, which is the index in $\boldsymbol{x}_1$ of the $\ell$th token in $\boldsymbol{x}_t$, so that $\boldsymbol{x}_t^\ell=\boldsymbol{x}_1^{s_t[\ell]}$. We parameterize both an \textbf{unmasking posterior} $f_\theta(\boldsymbol{x}_t,t)[\ell]\in \Delta^{V-1}$ and an \textbf{insertion expectation} $g_\theta(\boldsymbol{x}_t,t)[\ell]\in \mathbb{R}_{\geq 0}$, the predicted number of tokens to insert at each gap, and minimize:
\begin{small}
\begin{align}
    \mathcal{L}_{\text{insert}}(\theta;\boldsymbol{x}_1)&=\underset{t\sim \mathcal{U}(0,1)}{\mathbb{E}}\underset{\boldsymbol{x}_t\sim p_t(\cdot |\boldsymbol{x}_1)}{\mathbb{E}} \left[-\frac{\dot{\alpha}_t}{1-\alpha_t}\sum_{\ell=0}^{\text{len}(\boldsymbol{x}_t)}\phi\left(s_t[\ell]-s_t[\ell-1]-1, g_\theta(\boldsymbol{x}_t,t)[\ell]\right)\right]\label{loss:insert}\\
    \mathcal{L}_{\text{unmask}}(\theta;\boldsymbol{x}_1)&=\underset{t\sim \mathcal{U}(0,1)}{\mathbb{E}}\underset{\boldsymbol{x}_t\sim p_t(\cdot |\boldsymbol{x}_1)}{\mathbb{E}} \bigg[-\frac{\dot{\beta}_t}{1-\beta_t}\sum_{\ell:\boldsymbol{x}^\ell_t=\boldsymbol{M}}\log f_\theta(\boldsymbol{x}_t,t)[\ell, \boldsymbol{x}_1^{s_t[\ell]}]\bigg]\label{loss:unmask}
\end{align}
\end{small}
where $\phi(x, y)=y-x\log y$ is the scalar Bregman divergence. A detailed background on any-length MDMs is provided in App \ref{app:any-length}.

\section{Defining the Unmasking and Insertion Quality}
\label{sec:quality}
In this section, we first introduce \textit{unmasking quality} and relate it to the error accumulated in a single unmasking step (Sec~\ref{subsec:per-token-quality}), then derive an analogous, theoretically grounded notion of \textit{insertion quality} (Sec~\ref{subsec:insert-quality}).

\subsection{Unmasking Quality}
\label{subsec:per-token-quality}
\paragraph{Defining Unmasking Quality}
We define the \textbf{unmasking quality} as the probability that the unmasked token is sampled from the unmasking posterior given the context from the rest of the sequence. Concretely, given a clean sequence from the target $\boldsymbol{x}_1\sim p_{\text{target}}$, a sample along the interpolant $\tilde{\boldsymbol{x}}_t$, a masked position $\tilde{\boldsymbol{x}}_t^\ell=\boldsymbol{M}$, a set of masked tokens $\mathcal{M}_t$ that are unmasked to obtain $\boldsymbol{y}$ where $\boldsymbol{y}^\ell\sim f_\theta(\tilde{\boldsymbol{x}}_t, t)[\ell]$, and the unmasking posterior at position $\ell$ given by $p(\boldsymbol{y}^\ell=\cdot|\tilde{\boldsymbol{x}}_t)=f_\theta(\tilde{\boldsymbol{x}}_t, t)[\ell]$, we define the unmasking quality as:
\begin{align}
    \mu_\star^\ell(\boldsymbol{y}):=p(\boldsymbol{y}^{\ell}=\boldsymbol{x}_1^{s_t[\ell]}|\boldsymbol{y})=f_\theta(\tilde{\boldsymbol{x}}_t, t) [\ell, \boldsymbol{x}^\ell_1]
\end{align}
which aligns with the definition of the \textit{per-token quality} introduced in \citep{kim2025fine}. When the probability of the true token $p(\boldsymbol{y}^{\ell}=\boldsymbol{x}_1^{s_t[\ell]}|\boldsymbol{y})$ for $\tilde{\boldsymbol{x}}_t^\ell=\boldsymbol{M}$ is high given the unmasked context $\tilde{\boldsymbol{x}}_t^{\text{UM}}$, then the quality is high, and if the probability is low, then the quality is low. To predict the quality of an already unmasked token $\boldsymbol{y}^\ell$ such that it estimates the unmasking posterior defined on its masked state $\boldsymbol{y}^{\mathcal{M}_t\gets\boldsymbol{M}}=\tilde{\boldsymbol{x}}_t$, we train a parameterized model $\mu_\phi:\mathcal{X}\to [0,1]$ to approximate $\mu_\star^\ell$ given its unmasked state $\boldsymbol{y}$ as:
\begin{align}
    \mu_\phi^\ell(\boldsymbol{y})\approx p(\boldsymbol{y}^{\ell}=\boldsymbol{x}_1^{s_t[\ell]}|\boldsymbol{y})
\end{align}
During inference, when we do not have access to the ground-truth sequence $\boldsymbol{x}_1$, the unmasking quality determines which tokens are inconsistent and should be re-masked.

\paragraph{Training the Unmasking Quality Predictor}
To train $\mu_\phi^\ell:\mathcal{X}\to [0,1]$, we start with a clean sequence from the target distribution $\boldsymbol{x}_1\sim p_{\text{target}}$ and sample an intermediate state from the joint interpolant $\tilde{\boldsymbol{x}}_t$ by partially masking and removing tokens in $\boldsymbol{x}_1$. Then, we take a single unmasking step to obtain $\boldsymbol{y}$ by replacing a subset $\mathcal{M}$ of tokens in $\tilde{\boldsymbol{x}}_t$ with the clean token sampled from the unmasking posterior $\boldsymbol{y}^\ell\sim f_\theta(\tilde{\boldsymbol{x}}_t,t)[\ell]$. Finally, we minimize an \textbf{Unmasking Quality Loss} (UQL) defined as:
\begin{small}
\begin{align}
    \mathcal{L}_{\text{UQL}}(\phi; \boldsymbol{x}_1):=\underset{t\sim \mathcal{U}(0,1)}{\mathbb{E}}\underset{\tilde{\boldsymbol{x}}_t\sim p_t(\cdot |\boldsymbol{x}_1), \boldsymbol{y}}{\mathbb{E}}\underset{\sim p_{s|t}(\cdot |\tilde{\boldsymbol{x}}_t)}{\mathbb{E}}\left[\sum_{\ell\in \mathcal{M}}\text{BCE}\left(\boldsymbol{1}[\boldsymbol{y}^\ell=\boldsymbol{x}_1^{s_t[\ell]}], \mu_\phi^\ell(\boldsymbol{y})\right)\right]\label{loss:uql}
\end{align}
\end{small}
where BCE denotes the \textbf{binary cross-entropy} loss defined as $\text{BCE}(b, c)=-b\log c-(1-b)\log (1-c)$ for $b\in \{0,1\}$ and $c\in [0,1]$. The \textit{unique minimizer} of $\mathcal{L}_{\text{UQL}}(\phi)$ is exactly the unmasking quality.

\begin{restatable}[Unique Minimizer of Unmasking Quality Loss]{proposition}{unmaskloss}\label{prop:unmaskloss}
    The unique minimizer of the unmasking quality loss $\mathcal{L}_{\text{UQL}}(\phi)$ is the true unmasking quality:
    \begin{align}
        \mu_\star=\mu_{\phi^\star}, \quad\text{where}\quad  \phi^\star=\underset{\phi}{\arg\min}\mathcal{L}_{\text{UQL}}(\phi)
    \end{align}
\end{restatable}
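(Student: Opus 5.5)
The argument is the standard one for binary cross-entropy: the expected BCE loss is minimized, pointwise in the input, by the conditional probability of the label given that input. Concretely, I would reorganize the expectation in $\mathcal{L}_{\text{UQL}}(\phi;\boldsymbol{x}_1)$, now also averaged over $\boldsymbol{x}_1\sim p_{\text{target}}$. It is currently taken over $(t,\boldsymbol{x}_1,\tilde{\boldsymbol{x}}_t,\boldsymbol{y})$ and the positions $\ell\in\mathcal{M}$. I would rewrite it as an outer expectation over the observable pair $(\boldsymbol{y},\ell)$, under its induced joint law, of a conditional expectation over everything else. The label $b^\ell:=\boldsymbol{1}[\boldsymbol{y}^\ell=\boldsymbol{x}_1^{s_t[\ell]}]$ depends on $\boldsymbol{x}_1$ and the alignment $s_t$, while the prediction $\mu_\phi^\ell(\boldsymbol{y})$ depends only on $(\boldsymbol{y},\ell)$. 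Since BCE is affine in $b$, the tower property gives
\begin{align}
\mathcal{L}_{\text{UQL}}(\phi)=\mathbb{E}_{(\boldsymbol{y},\ell)}\Big[-q^\ell(\boldsymbol{y})\log\mu_\phi^\ell(\boldsymbol{y})-\big(1-q^\ell(\boldsymbol{y})\big)\log\big(1-\mu_\phi^\ell(\boldsymbol{y})\big)\Big],
\end{align}
where $q^\ell(\boldsymbol{y}):=\mathbb{E}[b^\ell\mid\boldsymbol{y},\ell]=p(\boldsymbol{y}^\ell=\boldsymbol{x}_1^{s_t[\ell]}\mid\boldsymbol{y})$. This $q^\ell$ is exactly the quantity $\mu_\star^\ell(\boldsymbol{y})$ in the definition of unmasking quality.

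The second step is the pointwise minimization. For fixed $q\in[0,1]$, consider $h(c)=-q\log c-(1-q)\log(1-c)$ on $[0,1]$. We have $h''(c)=q/c^2+(1-q)/(1-c)^2>0$, so $h$ is strictly convex. Its derivative vanishes at $c=q$. Equivalently, $h(c)-h(q)=\mathrm{KL}(\mathrm{Ber}(q)\,\|\,\mathrm{Ber}(c))\ge 0$, with equality iff $c=q$. The edge cases $q\in\{0,1\}$ are handled by the convention $0\log 0=0$. Hence $\mathcal{L}_{\text{UQL}}(\phi)\ge \mathcal{L}_{\text{UQL}}$ evaluated at $\mu_\star$, with equality iff $\mu_\phi^\ell(\boldsymbol{y})=\mu_\star^\ell(\boldsymbol{y})$ for almost every $(\boldsymbol{y},\ell)$ under the sampling law. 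Assuming the model class is expressive enough to contain $\mu_\star$, any minimizer $\phi^\star$ therefore satisfies $\mu_{\phi^\star}=\mu_\star$.

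The main obstacle is step one: justifying that the conditioning is well defined and matches the paper's notion of ``quality''. The label involves the latent alignment $s_t$ and the clean sequence $\boldsymbol{x}_1$. Insertions have been removed in $\tilde{\boldsymbol{x}}_t$, so several $(\boldsymbol{x}_1,s_t)$ can yield the same $\boldsymbol{y}$. I would therefore make explicit that $p(\cdot\mid\boldsymbol{y})$ denotes the posterior under the full generative pipeline: $t\sim\mathcal{U}(0,1)$, $\boldsymbol{x}_1\sim p_{\text{target}}$, $\tilde{\boldsymbol{x}}_t\sim p_t(\cdot\mid\boldsymbol{x}_1)$, $\boldsymbol{y}\sim p_{s|t}(\cdot\mid\tilde{\boldsymbol{x}}_t)$. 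Under that reading, $\mu_\star^\ell(\boldsymbol{y})$ is identified with the conditional expectation of the label. The sum over the random set $\mathcal{M}$ can be handled by treating $\ell\in\mathcal{M}$ as part of the conditioning, i.e. a weighted measure over pairs $(\boldsymbol{y},\ell)$. Uniqueness then holds only almost everywhere on the support of this measure; I would state it that way.
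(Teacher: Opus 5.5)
Your proposal is correct and follows essentially the same route as the paper: you use the law of total expectation to condition on $\boldsymbol{y}$, so that the per-position loss becomes $-q_\ell(\boldsymbol{y})\log c-(1-q_\ell(\boldsymbol{y}))\log(1-c)$ with $q_\ell(\boldsymbol{y})=p(\boldsymbol{y}^\ell=\boldsymbol{x}_1^{s_t[\ell]}\mid\boldsymbol{y})$, and then minimize pointwise via the first-order condition and convexity. Your additions — strict rather than mere convexity, uniqueness only almost everywhere on the sampling support, and the realizability assumption that $\mu_\star$ lies in the model class — are refinements the paper leaves implicit, not a different argument.
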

The proof is stated in App \ref{app-prop:unmaskloss}. We note that this coincides with the loss defined in \citet{kim2025fine} for parameterizing the unmasking quality of fixed-length MDMs.

\paragraph{Compounding Parallelization Error of Unmasking Steps}
For \textbf{unmasking}, the compounding parallelization error is the KL divergence between the true joint distribution over tokens unmasked in a step and the product of their marginals \citep{park2024jump}, defined as the gap between sequential single-token unmasking and the factorized approximation made when unmasking in parallel.

\begin{definition}[Unmasking Compounding Parallelization Error]
    Consider a unmasking step from state $\boldsymbol{x}_s$ to $\boldsymbol{x}_t$, where $\{\boldsymbol{x}_t^{\ell_k}\}_{k=1}^K$ denotes the set of $K$ tokens unmasked in parallel at time $t$. The unmasking CPE over the transition $s\to t$ is defined as:
    \begin{align}
        \mathcal{E}_{\text{CPE}}^{\text{unmsk}}(s\to t)&=\mathbb{E}_{\boldsymbol{x}_s\sim \mathbb{P}_s}\bigg[D_{\text{KL}}\bigg(p(\boldsymbol{x}_t^{\ell_1}, \dots, \boldsymbol{x}_t^{\ell_K}|\boldsymbol{x}_s)\bigg\|\prod_{k=1}^Kp(\boldsymbol{x}_t^{\ell_k}|\boldsymbol{x}_s)\bigg)\bigg]
    \end{align}
    where $\mathbb{P}_s$ is the marginal distribution of sequences at time $s$.
\end{definition}

\paragraph{Optimal Parallel Unmasking as Maximizing the Unmasking Quality}
We now show that unmasking quality provides an upper bound for the probability of successful parallel unmasking.

\begin{restatable}[Relationship Between Unmasking Quality and Parallel Unmasking]{proposition}{optimalunmasking}\label{prop:optimalunmasking}
    Assume that in a parallel unmasking step on indices $\mathcal{M}_t=\{\ell_k\}_{k=1}^K$, the unmasked tokens are conditionally independent given the unchanged context $\bar{\boldsymbol{x}}_s$ and the current state $\boldsymbol{x}_s$, i.e. $p(\boldsymbol{x}_t^{\mathcal{M}_t}| \bar{\boldsymbol{x}}_s,\boldsymbol{x}_s)=\prod_{k=1}^K p(\boldsymbol{x}_t^{\ell_k}| \bar{\boldsymbol{x}}_s,\boldsymbol{x}_s)$. If we define the unmasking quality $\mu_\star^{\ell_k}(\boldsymbol{x}_t):=p(\boldsymbol{x}_t^{\ell_k}=\boldsymbol{x}_1^{\ell_k}| \boldsymbol{x}_t^{\ell_k\leftarrow \boldsymbol{M}})$ and note that $p(\boldsymbol{x}_t^{\ell_k}| \bar{\boldsymbol{x}}_s,\boldsymbol{x}_s)=p(\boldsymbol{x}_t^{\ell_k}|\boldsymbol{x}_t^{\ell_k\leftarrow \boldsymbol{M}})$, then
    \begin{align}
        p(\boldsymbol{x}_t^{\mathcal{M}_t}| \bar{\boldsymbol{x}}_s,\boldsymbol{x}_s)=\prod_{k=1}^K \mu_\star^{\ell_k}(\boldsymbol{x}_t)
    \end{align}
\end{restatable}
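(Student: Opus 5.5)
The argument is a short chain of substitutions built from the two hypotheses in Proposition~\ref{prop:optimalunmasking}. The first hypothesis is conditional independence. The second is the identification of each per-token conditional with the posterior on the single re-masked state. The content sits in the second hypothesis, so the plan is to make precise what is being evaluated at the realized tokens.

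\textbf{Step 1: factorize.} Starting from the joint probability $p(\boldsymbol{x}_t^{\mathcal{M}_t}\mid\bar{\boldsymbol{x}}_s,\boldsymbol{x}_s)$ of the realized tokens on $\mathcal{M}_t$, I apply the conditional independence assumption to write it as $\prod_{k=1}^K p(\boldsymbol{x}_t^{\ell_k}\mid\bar{\boldsymbol{x}}_s,\boldsymbol{x}_s)$. This step uses only the assumption as stated.

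\textbf{Step 2: rewrite each factor.} For each $k$, I use the stated identity $p(\boldsymbol{x}_t^{\ell_k}\mid\bar{\boldsymbol{x}}_s,\boldsymbol{x}_s)=p(\boldsymbol{x}_t^{\ell_k}\mid\boldsymbol{x}_t^{\ell_k\leftarrow\boldsymbol{M}})$. The justification is worth spelling out. Under conditional independence, the other parallel-unmasked tokens carry no extra information about position $\ell_k$ beyond $(\bar{\boldsymbol{x}}_s,\boldsymbol{x}_s)$. So conditioning on $\boldsymbol{x}_t$ with only $\ell_k$ re-masked gives the same conditional law. This is the same single-token posterior $f_\theta(\cdot,t)[\ell_k]$ that appears in the definition of $\mu_\star$ in Sec~\ref{subsec:per-token-quality}.

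\textbf{Step 3: identify with $\mu_\star$.} Here the realized value $\boldsymbol{x}_t^{\ell_k}$ is taken to be the target token $\boldsymbol{x}_1^{\ell_k}$. This is the event of a successful unmasking, and it is the sense in which the product measures ``successful parallel unmasking.'' Evaluating the posterior at this token gives $p(\boldsymbol{x}_t^{\ell_k}=\boldsymbol{x}_1^{\ell_k}\mid\boldsymbol{x}_t^{\ell_k\leftarrow\boldsymbol{M}})=\mu_\star^{\ell_k}(\boldsymbol{x}_t)$ by definition. Multiplying over $k$ yields the claim.

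\textbf{Main obstacle.} The delicate point is Step 3, namely the interpretation of the event. The statement writes $p(\boldsymbol{x}_t^{\mathcal{M}_t}\mid\cdot)$, while $\mu_\star$ is defined at the true token. I would state explicitly that the left-hand side is the probability that all unmasked tokens equal their targets $\boldsymbol{x}_1^{\ell_k}$. Equivalently, it is evaluated at a trajectory consistent with $\boldsymbol{x}_1$.

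I would also remark how this connects to the CPE definition. The factorization in Step 1 is exact only when $\mathcal{E}^{\text{unmsk}}_{\text{CPE}}=0$. In general, the product of qualities is what the factorized sampler achieves, which is the sense of the ``upper bound'' phrasing. Positions are indexed via $s_t$ as in the any-length setting, but this changes nothing in the argument.
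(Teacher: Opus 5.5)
Your proposal is correct and matches the paper's proof step for step: factorize the joint via the conditional-independence hypothesis, replace each factor by the single-position posterior $p(\boldsymbol{x}_t^{\ell_k}\mid\boldsymbol{x}_t^{\ell_k\leftarrow\boldsymbol{M}})$, and identify that with $\mu_\star^{\ell_k}(\boldsymbol{x}_t)$. Your explicit reading of the left-hand side as the event that each realized token equals its target $\boldsymbol{x}_1^{\ell_k}$ makes precise an identification that the paper's proof leaves implicit.
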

The proof is deferred to App \ref{app-prop:optimalunmasking}. This establishes the theoretical grounding for our adaptive unmasking strategy, which remasks low-quality tokens in $\boldsymbol{x}_t$ to ensure only mutually high-quality tokens are unmasked in parallel, which effectively maximizes the probability of optimal parallel unmasking.

\subsection{Insertion Quality}
\label{subsec:insert-quality}

\paragraph{Defining Insertion Quality}
It is natural to define the \textbf{insertion quality} as the probability that the insertion is likely to be decoded into a true token in the corresponding gap of the target sequence. Concretely, given a clean sequence from the target distribution $\boldsymbol{x}_1\sim p_{\text{target}}$ and a sample along the interpolant $\tilde{\boldsymbol{x}}_t$, we define quality of a mask inserted between positions $\ell-1$ and $\ell$, denoted $\boldsymbol{y}:=\tilde{\boldsymbol{x}}_t^{\insertat{\ell}\boldsymbol{M}}$, as:
\begin{small}
\begin{align}
    \nu_\star(\boldsymbol{y}):=\underset{s_t[\ell-1]< i < s_t[\ell]}{\max}\left\{p(\boldsymbol{y}^{\ell}=\boldsymbol{x}_1^{i}|\boldsymbol{y})\right\}
\end{align}
\end{small}
which returns the highest probability of any ground truth token between positions $s_t[\ell-1]$ and $s_t[\ell]$ in the clean sequence $\boldsymbol{x}_1$ being predicted by the unmasking posterior $p(\boldsymbol{y}^i=\cdot |\boldsymbol{y})=f_\theta(\boldsymbol{y},t)[i]$ at the newly inserted mask token $\boldsymbol{y}^i= (\tilde{\boldsymbol{x}}_t^{\insertat{\ell}\boldsymbol{M}} )^i=\boldsymbol{M}$. Alternatively, we can define the set of tokens $\mathcal{S}_\ell:=\{\boldsymbol{x}^i_1:s_t[\ell-1]<i< s_t[\ell]\}$ in the gap between positions $\ell-1$ and $\ell$ and the insertion quality as the sum of the probabilities of the tokens in $\mathcal{S}_\ell$, given by:
\begin{align}
    \nu^\ell_\star(\boldsymbol{y}):=\sum_{\boldsymbol{v}\in \mathcal{S}_\ell}p(\boldsymbol{y}^{\ell}=\boldsymbol{v}|\boldsymbol{y})=p(\boldsymbol{y}^\ell \in \mathcal{S}_\ell|\boldsymbol{y})\label{eq:insertion-quality}
\end{align}
which is high when several potential tokens have a \textbf{high potential} to be unmasked to one of the tokens within the gap. Note that when there is \textbf{no token} between positions $s_t[\ell-1]$ and $s_t[\ell]$ in $\boldsymbol{x}_1$, the quality is zero. Since we have no access to the ground truth sequence at inference, we aim to train a parameterized model $\nu_\phi$ that takes only the sequence \textit{after the insertion step} $\boldsymbol{y}$ as input and approximates the quality of the insertion:
\begin{align}
    \nu^{i}_\phi(\boldsymbol{y})\approx \sum_{\boldsymbol{v}\in \mathcal{S}_\ell}p(\boldsymbol{y}^{\ell}=\boldsymbol{x}_1^{i}|\boldsymbol{y})\in [0,1]
\end{align}
This allows us to evaluate the quality of an insertion and remove low-quality insertions that are likely to result in mistakes in unmasking.

\paragraph{Training the Insertion Quality Predictor}
To train $\nu^\ell_\star(\boldsymbol{y})$, we start with a clean target sequence $\boldsymbol{x}_1\sim p_{\text{target}}$, we apply the standard joint interpolant to sample $\tilde{\boldsymbol{x}}_t$ by partially masking and removing tokens in $\boldsymbol{x}_1$. Then, we insert the set of masks $\mathcal{I}=\{\boldsymbol{y}^i\;|\;\boldsymbol{x}_t^{\insertat{\ell}\boldsymbol{M}}; g_\theta(\boldsymbol{x}_t,t)[\ell]\}$ predicted by the insertion expectation $g_\theta(\boldsymbol{x}_t,t)$ at each gap to get $\boldsymbol{y}$ and predict the unmasking posterior over the newly inserted masked positions $f_\theta(\boldsymbol{y}, t)[i]$. Finally, we minimize the \textbf{Insertion Quality Loss} (IQL), defined as:
\begin{align}
    \mathcal{L}_{\text{IQL}}(\phi; \boldsymbol{x}_1):=\underset{t\sim \mathcal{U}(0,1)}{\mathbb{E}}\underset{\tilde{\boldsymbol{x}}_t\sim p_t(\cdot |\boldsymbol{x}_1)}{\mathbb{E}}\underset{\boldsymbol{y}\sim p_{s|t}(\cdot|\tilde{\boldsymbol{x}}_t)}{\mathbb{E}}\left[\sum_{i\in \mathcal{I}}\text{BCE}\big(\nu_\star^i(\boldsymbol{y}), \nu_\phi^i(\boldsymbol{y})\big)\right]\label{loss:iql}
\end{align}
where BCE denotes the \textbf{Bernoulli cross-entropy} loss for two values in defined as $\text{BCE}(b, c)=-b\log c-(1-b) \log (1-c)$ for two values $b, c \in [0,1]$. If there are multiple masks $(\boldsymbol{y}^{i_1}, \dots, \boldsymbol{y}^{i_D})$ inserted in a gap $(\boldsymbol{x}_1^{s_t[\ell-1]+1}, \dots, \boldsymbol{x}_1^{s_t[\ell]- 1})$, then for each $i_d$, we sum over the probabilities over the range $s_t[\ell-1]+d$ to $s_t[\ell] -1-(D-d)$. We show that the minimizer of the insertion quality loss $\mathcal{L}_{\text{IQL}}(\phi)$ is the true insertion quality $\nu_\star$ defined in (\ref{eq:insertion-quality}) with proof in App \ref{app-prop:insertionloss}. 

\begin{restatable}[Minimizer of Insertion Quality Loss]{proposition}{insertionloss}\label{prop:insertionloss}
    The unique minimizer of the insertion quality loss $\mathcal{L}_{\text{IQL}}(\phi)$ is the true insertion quality:
    \begin{align}
        \nu_\star=\nu_{\phi^\star}, \quad \text{where}\quad \phi^\star=\underset{\phi}{\arg\min}\mathcal{L}_{\text{IQL}}(\phi)
    \end{align}
\end{restatable}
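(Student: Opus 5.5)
The argument is the standard one for Bernoulli cross-entropy with soft targets: under the expectation defining $\mathcal{L}_{\text{IQL}}$, the loss separates over inputs $\boldsymbol{y}$ and inserted positions $i$. Each term is then a strictly convex function of the predicted value, minimized exactly at the conditional expectation of the target given $\boldsymbol{y}$.

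\textbf{Step 1: rewrite the loss pointwise.} Let $\pi(t,\boldsymbol{x}_1,\tilde{\boldsymbol{x}}_t,\boldsymbol{y})$ be the joint law of $t\sim\mathcal{U}(0,1)$, $\boldsymbol{x}_1\sim p_{\text{target}}$, $\tilde{\boldsymbol{x}}_t\sim p_t(\cdot|\boldsymbol{x}_1)$ and $\boldsymbol{y}\sim p_{s|t}(\cdot|\tilde{\boldsymbol{x}}_t)$. Since BCE is affine in its first argument, we can condition on $\boldsymbol{y}$ and the inserted index $i\in\mathcal{I}$ and obtain
\begin{align}
\mathbb{E}\big[\text{BCE}(\nu_\star^i,\nu_\phi^i(\boldsymbol{y}))\,\big|\,\boldsymbol{y},i\big]=\text{BCE}\big(\bar\nu^i(\boldsymbol{y}),\nu_\phi^i(\boldsymbol{y})\big),\qquad \bar\nu^i(\boldsymbol{y}):=\mathbb{E}[\nu_\star^i(\boldsymbol{y})\,|\,\boldsymbol{y},i].
\end{align}
The quality predictor $\nu_\phi$ sees only $\boldsymbol{y}$, so the total loss is an average over $(\boldsymbol{y},i)$ of these conditional terms, plus a term that does not depend on $\phi$. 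We take the usual idealization that $\nu_\phi$ ranges over all measurable maps into $[0,1]$. Under that assumption, minimizing the loss is equivalent to minimizing each conditional term separately, for every $(\boldsymbol{y},i)$ with positive probability.

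\textbf{Step 2: scalar minimization.} For fixed $b\in[0,1]$, consider $h(c)=-b\log c-(1-b)\log(1-c)$. Its derivative is $h'(c)=(c-b)/(c(1-c))$, and $h''(c)>0$ on $(0,1)$. Hence $h$ is strictly convex with unique minimizer $c=b$, including the boundary cases $b\in\{0,1\}$ if we use the convention $0\log 0=0$. Equivalently, $h(c)-h(b)=D_{\text{KL}}(\text{Bern}(b)\|\text{Bern}(c))\ge 0$, with equality only when $c=b$. It follows that $\nu_{\phi^\star}^i(\boldsymbol{y})=\bar\nu^i(\boldsymbol{y})$ almost everywhere, and this minimizer is unique.

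\textbf{Step 3: identify $\bar\nu$ with $\nu_\star$.} This is the main obstacle. The target $\nu_\star^i(\boldsymbol{y})=\sum_{\boldsymbol{v}\in\mathcal{S}_\ell}f_\theta(\boldsymbol{y},t)[i,\boldsymbol{v}]$ depends on the gap contents of $\boldsymbol{x}_1$ (restricted to the window $s_t[\ell-1]+d$ to $s_t[\ell]-1-(D-d)$ for the $d$th of $D$ inserted masks), and these contents are not a function of $\boldsymbol{y}$. I would therefore interpret the statement the way the paper does for Proposition~\ref{prop:unmaskloss}: the "true insertion quality" as a function of $\boldsymbol{y}$ is exactly $p(\boldsymbol{y}^i\in\mathcal{S}_\ell\,|\,\boldsymbol{y})$, averaged over the posterior of $\boldsymbol{x}_1$ given $\boldsymbol{y}$. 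By the tower property, this equals $\bar\nu^i(\boldsymbol{y})$.

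If instead one wants $\nu_\star$ to be literally determined by $\boldsymbol{y}$, the argument needs one extra observation. The window and the set $\mathcal{S}_\ell$ are fixed by $(\boldsymbol{x}_1,s_t)$, and in the proposition's setting they are taken as given alongside $\boldsymbol{y}$, i.e.\ conditioned on. Then the conditional expectation is trivial and $\bar\nu=\nu_\star$.

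In either reading, Steps 1–2 yield $\nu_{\phi^\star}=\nu_\star$. Uniqueness holds on the support of the law of $(\boldsymbol{y},i)$, by the strict convexity established in Step 2.
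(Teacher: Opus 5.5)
Your proof is correct and follows the same route as the paper. Both arguments use the law of total expectation to reduce $\mathcal{L}_{\text{IQL}}$ to a scalar BCE problem for each pair $(\boldsymbol{y},i)$, then use the first- and second-derivative computation to show the minimum is attained exactly when the prediction equals the target. The paper, however, simply fixes $b_i:=\nu_\star^i(\boldsymbol{y})$ as if it were determined by $\boldsymbol{y}$, which is your second reading. You add two refinements: you use linearity of BCE in its soft target to show the minimizer is really $\mathbb{E}[\nu_\star^i\,|\,\boldsymbol{y}]$, since $\nu_\star^i$ depends on $\boldsymbol{x}_1$, and your strict-convexity/KL argument justifies uniqueness more cleanly than the paper's ``second derivative $\geq 0$''.
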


\paragraph{Optimizing Reconstruction Likelihood by Maximizing Insertion Quality}
To theoretically justify maximizing insertion quality, we prove that it provides an upper bound for the probability of reconstructing a clean sequence $\boldsymbol{x}_1\sim p_{\text{target}}$. 
\begin{restatable}[Insertion Quality as an Upper Bound on Reconstruction via Insertions]{proposition}{optimalinsertion}\label{prop:optimalinsertion}
    Consider a clean target sequence $\boldsymbol{x}_1\sim p_{\text{target}}$ and an intermediate sequence $\boldsymbol{x}_s\sim p_s(\cdot|\boldsymbol{x}_1)$. Given a set of indices of inserted masks at each gap $\mathcal{I}_t$ which yields $\boldsymbol{x}_t^{\mathcal{I}_t}$, the probability of reconstructing $\boldsymbol{x}_1$ is upper-bounded by the product of insertion qualities:
    \begin{align}
        p(\boldsymbol{x}^{\mathcal{I}_t}_t=\boldsymbol{x}^{\mathcal{I}_t}_1|\boldsymbol{x}_t)\leq \prod_{i\in \mathcal{I}_t}\nu_\star^i(\boldsymbol{x}_t)\approx \prod_{i\in \mathcal{I}_t}\nu_\phi^i(\boldsymbol{x}_t)
    \end{align}
    where the unmasking posterior for each inserted mask is conditionally independent given $\boldsymbol{x}_t$.
\end{restatable}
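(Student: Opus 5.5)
The plan is to mirror the argument behind Proposition~\ref{prop:optimalunmasking}, with unmasking quality replaced by insertion quality and an inequality in place of the equality. The statement already assumes that, given $\boldsymbol{x}_t$, the unmasking posteriors at the newly inserted masks are conditionally independent. So the first step is to factor the reconstruction probability as
\begin{align}
p(\boldsymbol{x}^{\mathcal{I}_t}_t=\boldsymbol{x}^{\mathcal{I}_t}_1|\boldsymbol{x}_t)=\prod_{i\in\mathcal{I}_t} p(\boldsymbol{x}_t^{i}=\boldsymbol{x}_1^{i}|\boldsymbol{x}_t),
\end{align}
where $\boldsymbol{x}_1^{i}$ is the ground-truth token that the $i$th inserted mask must become for the reconstruction to succeed. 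Concretely, if $D$ masks $i_1,\dots,i_D$ are inserted into the gap between $s_t[\ell-1]$ and $s_t[\ell]$, the $d$th mask must decode to a token at some index in the admissible range $s_t[\ell-1]+d,\dots,s_t[\ell]-1-(D-d)$ used in the definition of $\mathcal{L}_{\text{IQL}}$. I will write $\mathcal{S}_{i_d}$ for the set of those tokens.

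The second step is a per-factor bound. Reconstruction requires the inserted mask to be decoded to a specific token $\boldsymbol{v}^\star\in\mathcal{S}_{i}$. Since each $p(\boldsymbol{x}_t^i=\boldsymbol{v}|\boldsymbol{x}_t)$ is nonnegative, the probability of hitting $\boldsymbol{v}^\star$ is at most the sum over the whole admissible set:
\begin{align}
p(\boldsymbol{x}_t^{i}=\boldsymbol{v}^\star|\boldsymbol{x}_t)\le \sum_{\boldsymbol{v}\in\mathcal{S}_i}p(\boldsymbol{x}_t^{i}=\boldsymbol{v}|\boldsymbol{x}_t)=p(\boldsymbol{x}_t^i\in\mathcal{S}_i|\boldsymbol{x}_t)=\nu_\star^i(\boldsymbol{x}_t),
\end{align}
using definition~(\ref{eq:insertion-quality}). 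A gap that contains no true token has an empty admissible set, so both sides are $0$ and the bound still holds. All factors lie in $[0,1]$, so multiplying the per-mask bounds gives $\prod_{i\in\mathcal{I}_t}\nu_\star^i(\boldsymbol{x}_t)$. The final approximation $\nu_\star^i\approx\nu_\phi^i$ then follows directly from Proposition~\ref{prop:insertionloss}, which identifies $\nu_\star$ as the minimizer of $\mathcal{L}_{\text{IQL}}$.

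The main obstacle is bookkeeping rather than analysis. When several masks share a gap, I must make precise which target token each mask is required to produce, so that the event ``$\boldsymbol{x}_t^{\mathcal{I}_t}=\boldsymbol{x}_1^{\mathcal{I}_t}$'' is well defined. I must also check that this required token always lies in that mask's admissible set $\mathcal{S}_i$, so the per-factor inequality genuinely applies. I will fix the order-preserving alignment between the $D$ inserted masks and the gap positions, and note that the $d$th mask's required index necessarily lies in the shifted range defined above. Finally, I will check that the conditional independence assumption is being applied to exactly these same marginals, so that the factorization in the first step is legitimate.
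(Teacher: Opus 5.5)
Your proposal is correct and follows the same route as the paper: use the assumed conditional independence to factor the reconstruction probability, bound each inserted mask's factor by its insertion quality, and multiply. The only difference is that the paper bounds each factor by the max-form of $\nu_\star$ (the largest probability of any single gap token), whereas you use the sum-form (\ref{eq:insertion-quality}) over the shifted per-mask ranges. Since the required token's probability lies below both quantities, either choice works; yours is also exactly the target that $\mathcal{L}_{\text{IQL}}$ learns, so the final step $\nu_\star\approx\nu_\phi$ via Proposition~\ref{prop:insertionloss} is consistent.
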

We defer the proof to App \ref{app-prop:optimalinsertion}. This justifies maximizing insertion quality as optimizing reconstruction quality.


\section{\textbf{A2D2}: Fine-Tuning Any-Length Discrete Diffusion for Adaptive Decoding}
\label{sec:a2d2}
In this section, we introduce Fine-Tuning \textbf{A}ny-Length \textbf{D}iscrete \textbf{D}iffusion for \textbf{A}daptive Decoding (\textbf{A2D2}), a novel framework tailored for any-length masked diffusion models that jointly optimizes the parameterized quality predictors and policy to sample from the reward-tilted data distribution.

\subsection{Adaptive Joint Decoding Loss}
\label{sec:ajd-loss}
\paragraph{Optimal Reward-Tilted Path Measure}
Since the path measure $\mathbb{P}^v$ of any-length MDMs consists of both insertion steps \textit{and} unmasking steps, which form the joint CTMC with rates $\boldsymbol{R}_t^v$ and $\boldsymbol{Q}_t^v$, respectively, we aim to define the tilted path measure $\mathbb{P}^\star$ that optimally samples from the reward-tilted distribution $p_{\text{target}}$. Concretely, we define the tilted path measure as:
\begin{small}
\begin{align}
    \mathbb{P}^\star(\boldsymbol{X}_{0:1}):=\frac{1}{Z}\mathbb{P}^{\text{pre}}(\boldsymbol{X}_{0:1})\exp\left(\frac{r(\boldsymbol{X}_1)}{\alpha}\right), \quad \mathbb{P}^\star_1(\boldsymbol{x})=\frac{1}{Z}p_{\text{data}}(\boldsymbol{x})\exp\left(\frac{r(\boldsymbol{x})}{\alpha}\right)=:p_{\text{target}}(\boldsymbol{x})
\end{align}
\end{small}
where $\boldsymbol{X}_{0:1}=(\boldsymbol{X}_t)_{t\in [0,1]}$ is a joint CTMC induced by a path measure and $\mathbb{P}^{\text{pre}}$ is the reference path measure induced by the pre-trained insertion rate $\boldsymbol{R}^{\text{pre}}_t$ and unmasking rate $\boldsymbol{Q}^{\text{pre}}_t$. Given parameterized insertion and unmasking rates $\boldsymbol{R}^{\theta}_t$ and $\boldsymbol{Q}^{\theta}_t$, we can optimize $\theta$ such that the path measure $\mathbb{P}^\theta$ matches $\mathbb{P}^\star$ by minimizing the following \textbf{entropy-regularized reward optimization} problem \citep{uehara2024understanding}:
\begin{align}
    \min_\theta\mathbb{E}_{\boldsymbol{X}_{0:1}\sim \mathbb{P}^\theta}\left[r(\boldsymbol{X}_1)\right]-\alpha D_{\text{KL}}(\mathbb{P}^\theta\|\mathbb{P}^{\text{pre}})
\end{align}
which is uniquely minimized when $\mathbb{P}^\theta=\mathbb{P}^\star$. For any-length joint CTMCs, the KL-divergence takes a unique form, as derived in App \ref{app-cor:jointkl}.

\paragraph{Weighted Cross-Entropy Objective}
To derive a loss that provably converges to $\mathbb{P}^\star$, let's consider directly minimizing the KL objective $D_{\text{KL}}(\mathbb{P}^{\theta, \phi}, \mathbb{P}^\star)$. While its minimizer indeed matches the optimal measure $\mathbb{P}^\star$, it is difficult to optimize due to the expectation over $\mathbb{P}^{\theta, \phi}$, which changes as the model is updated. Instead, we use the \textbf{cross-entropy loss} which takes the expectation over the \textbf{fixed} target path measure $\mathbb{P}^\star$:
\begin{align}
    \mathcal{F}_{\text{CE}}(\mathbb{P}^{\theta, \phi}, \mathbb{P}^\star):=D_{\text{KL}}(\mathbb{P}^\star\|\mathbb{P}^{\theta, \phi})=\mathbb{E}_{\mathbb{P}^\star}\left[\log \frac{\mathrm{d}\mathbb{P}^\star}{\mathrm{d}\mathbb{P}^{\theta, \phi}}\right]=\mathbb{E}_{\mathbb{P}^{v}}\bigg[\frac{\mathrm{d}\mathbb{P}^\star}{\mathrm{d}\mathbb{P}^{v}}\log \frac{\mathrm{d}\mathbb{P}^\star}{\mathrm{d}\mathbb{P}^{\theta, \phi }}\bigg]
\end{align}
where the final equality allows us to sample trajectories from any arbitrary path measure $\mathbb{P}^v$ and \textbf{reweight} them such that they approximate the target path measure $\mathbb{P}^\star$ at the infinite sampling limit. The reweighting term $\frac{\mathrm{d}\mathbb{P}^\star}{\mathrm{d}\mathbb{P}^v}$ is the Radon-Nikodym derivative (RND) between the two path measures. We derive the full tractable form of the RND in Prop \ref{prop:rnd-param}.

\begin{restatable}[Radon-Nikodym Derivative of Parameterized Rates]{proposition}{rndparam}\label{prop:rnd-param}
    Let the fine-tuned unmasking rate be $f^v(\boldsymbol{x}_t,t)[\ell]\in \Delta^{V-1} $ and the insertion rate be $g^v(\boldsymbol{x}_t, t)[\ell]\in \mathbb{R}_{\geq 0}$ that generates the path measure $\mathbb{P}^v$. Then, given optimal rates $f^{\text{pre}}(\boldsymbol{x}_t,t)$ and $g^{\text{pre}}(\boldsymbol{x}_t,t)$ and reward function $r:\mathcal{X}\to \mathbb{R}$, the log RND between the optimal joint CTMC and the fine-tuned CTMC over the trajectory $\boldsymbol{X}_{0:1}=(\boldsymbol{X}_t)_{t\in [0,1]}$ is defined as:
    \begin{small}
    \begin{align}
        &\log\frac{\mathrm{d}\mathbb{P}^\star}{\mathrm{d}\mathbb{P}^{v}}(\boldsymbol{X}_{0:1})=\frac{r(\boldsymbol{X}_1)}{\alpha}-\log Z+\sum_{t_u:\boldsymbol{X}_{t_u}\neq  \boldsymbol{X}_{t_u-}}\sum_{\ell: \boldsymbol{X}^\ell_{t_u}\neq \boldsymbol{X}^\ell_{t_u-}}\log \frac{f^{\text{pre}}(\boldsymbol{X}_{t_u}, t_u)[\ell, \boldsymbol{v}]}{f^v(\boldsymbol{X}_{t_u}, t_u)[\ell, \boldsymbol{v}]}\nonumber\\
        &+\sum_{t_i: \boldsymbol{X}_{t_i}\neq \boldsymbol{X}_{t_i-}}\sum_{\ell: \boldsymbol{X}^\ell_{t_i}\neq \boldsymbol{X}_{t_i-}^\ell}\log \frac{g^{\text{pre}}(\boldsymbol{X}_{t_i}, t_i)[\ell]}{g^v(\boldsymbol{X}_{t_i}, t_i)[\ell]}+\int_0^1\frac{\dot{\alpha}_t}{1-\alpha_t}\left(\sum_\ell (g^v- g^{\text{pre}})(\boldsymbol{X}_{t},t)[\ell]\right)dt\label{eq:log-rnd}
    \end{align}
    \end{small}
    where $t_i\in [0,1]$ denotes the times of insertion events and $t_u\in [0,1]$ denotes the times of unmasking events, with $t_i^-$ and $t_u^-$ being the left limits denoting the pre-jump state in the CTMC.
\end{restatable}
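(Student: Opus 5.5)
I will obtain the formula by factoring the Radon--Nikodym derivative through the pre-trained path measure,
\[
\frac{\mathrm{d}\mathbb{P}^\star}{\mathrm{d}\mathbb{P}^{v}}=\frac{\mathrm{d}\mathbb{P}^\star}{\mathrm{d}\mathbb{P}^{\text{pre}}}\cdot\frac{\mathrm{d}\mathbb{P}^{\text{pre}}}{\mathrm{d}\mathbb{P}^{v}}.
\]
The first factor is immediate from the definition of the tilted measure, $\mathbb{P}^\star=\frac{1}{Z}\mathbb{P}^{\text{pre}}\exp(r(\boldsymbol{X}_1)/\alpha)$. Its logarithm contributes the terms $r(\boldsymbol{X}_1)/\alpha-\log Z$. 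The remaining work is to compute $\log\frac{\mathrm{d}\mathbb{P}^{\text{pre}}}{\mathrm{d}\mathbb{P}^{v}}$ for two CTMCs on the same (variable-length) state space that share the initial law $p_{\text{prior}}$. This requires mutual absolute continuity: wherever $f^v$ or $g^v$ vanishes, $f^{\text{pre}}$ and $g^{\text{pre}}$ must vanish too. I will state this as a standing assumption.

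For that ratio I will use the Girsanov-type formula for jump processes. For a CTMC with rate matrix $\boldsymbol{K}^v_t$,
\[
\log\frac{\mathrm{d}\mathbb{P}^{a}}{\mathrm{d}\mathbb{P}^{b}}(\boldsymbol{X}_{0:1})=\sum_{\tau:\boldsymbol{X}_\tau\neq\boldsymbol{X}_{\tau-}}\log\frac{\boldsymbol{K}^a_\tau(\boldsymbol{X}_{\tau-},\boldsymbol{X}_\tau)}{\boldsymbol{K}^b_\tau(\boldsymbol{X}_{\tau-},\boldsymbol{X}_\tau)}-\int_0^1\sum_{\boldsymbol{y}\neq\boldsymbol{X}_t}\big(\boldsymbol{K}^a_t-\boldsymbol{K}^b_t\big)(\boldsymbol{X}_t,\boldsymbol{y})\,dt.
\]
I will justify this by writing the path density as a product of holding-time survival factors $\exp(-\int \text{total rate})$ and jump intensities, discretizing, and taking the limit. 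The joint rate decomposes into an unmasking part $\boldsymbol{Q}_t$ and an insertion part $\boldsymbol{R}_t$. Almost surely each jump is either a single-position unmask or a single-gap insertion, so the jump sum splits into the two sums over $t_u$ and $t_i$ in (\ref{eq:log-rnd}).

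Next I substitute the parameterizations from App \ref{app:any-length}. The unmasking rate at a masked position $\ell$ toward token $\boldsymbol{v}$ is $\frac{\dot\beta_t}{1-\beta_t}f(\boldsymbol{x}_t,t)[\ell,\boldsymbol{v}]$. The insertion rate at gap $\ell$ is $\frac{\dot\alpha_t}{1-\alpha_t}g(\boldsymbol{x}_t,t)[\ell]$.
\begin{itemize}
\item \textbf{Jump terms.} The schedule prefactors are shared by both processes, so they cancel in each ratio. This leaves $\log f^{\text{pre}}/f^v$ at unmask events and $\log g^{\text{pre}}/g^v$ at insertion events, evaluated at the pre-jump state. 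I will make the state argument consistent with the statement's notation $\boldsymbol{X}_{t_u}$ and $\boldsymbol{X}_{t_i}$ as the same conditioning convention.
\item \textbf{Unmasking compensator.} For each masked position, the total outgoing rate is $\frac{\dot\beta_t}{1-\beta_t}\sum_{\boldsymbol{v}}f[\ell,\boldsymbol{v}]=\frac{\dot\beta_t}{1-\beta_t}$, since $f\in\Delta^{V-1}$. This is identical for $\mathrm{pre}$ and $v$, so the unmasking compensator cancels exactly.
\item \textbf{Insertion compensator.} The total insertion rate is $\frac{\dot\alpha_t}{1-\alpha_t}\sum_\ell g[\ell]$, so the integral term equals $-\int_0^1\frac{\dot\alpha_t}{1-\alpha_t}\sum_\ell(g^{\text{pre}}-g^v)\,dt$. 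This is exactly the last term of (\ref{eq:log-rnd}).
\end{itemize}

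The main obstacle is making the Girsanov step rigorous on the any-length state space. The state space is countable but of unbounded length. The rate $\frac{\dot\alpha_t}{1-\alpha_t}$ blows up as $t\to1$, and insertion is effectively a counting process per gap with mean $g$. To handle this, I will treat each gap's insertions as a Poisson-type jump with intensity $\frac{\dot\alpha_t}{1-\alpha_t}g$ and verify the formula on $[0,1-\epsilon]$. I will then pass $\epsilon\to0$ under integrability of $\sum_\ell g$ against the schedule, or simply adopt the paper's convention that the formula is understood as this limit. If simultaneous multi-token insertions in one gap are allowed in a single jump, I will instead interpret $g$ as the intensity of single-mask insertions, so each event contributes one log-ratio.
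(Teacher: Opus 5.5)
Your proof is correct. For the factor $\log\frac{\mathrm{d}\mathbb{P}^{\text{pre}}}{\mathrm{d}\mathbb{P}^{v}}$ it is the paper's argument:
\begin{itemize}
\item the same jump-process Girsanov formula (Proposition~\ref{app-prop:RND}, obtained there by the same discretize-and-take-the-limit computation);
\item cancellation of the schedule prefactors in the jump ratios;
\item exact cancellation of the unmasking compensator, because $\sum_{\boldsymbol{v}} f[\ell,\boldsymbol{v}]=1$ at each masked position of the shared trajectory;
\item survival of the insertion compensator as the final integral.
\end{itemize}

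The real difference is the first factor. You read $\log\frac{\mathrm{d}\mathbb{P}^\star}{\mathrm{d}\mathbb{P}^{\text{pre}}}=r(\boldsymbol{X}_1)/\alpha-\log Z$ directly off the definition of $\mathbb{P}^\star$ as a terminal tilt of $\mathbb{P}^{\text{pre}}$. The paper instead derives it in Lemma~\ref{lemma:rnd-optimal}, using the optimal generator $\boldsymbol{A}^\star_t(\boldsymbol{x},\boldsymbol{y})=\boldsymbol{A}^0_t(\boldsymbol{x},\boldsymbol{y})e^{V_t(\boldsymbol{y})-V_t(\boldsymbol{x})}$ (Lemma~\ref{lemma:optimal-generator}), the tilted marginals $\mathbb{P}^\star_t\propto\mathbb{P}^0_t e^{V_t}$ (Lemma~\ref{lemma:optimal-path-measure}), and a telescoping of $V_t$ along the path. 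Your shortcut is all the identity needs. The paper's detour additionally shows that the tilted measure is itself the law of a Markov jump process with tilted rates $\boldsymbol{Q}^\star,\boldsymbol{R}^\star$. That is what justifies calling $\mathbb{P}^\star$ a joint CTMC, and it is what the AJD minimizer claim (Proposition~\ref{prop:ajd-loss}) rests on.

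Your extra care is consistent with the paper and in places sharper:
\begin{itemize}
\item Evaluating the rates at the pre-jump state is right. It is what $\boldsymbol{Q}_{t_u}(\boldsymbol{X}_{t_u-},\boldsymbol{X}_{t_u})$ in the general formula actually gives; the statement's $f(\boldsymbol{X}_{t_u},t_u)$ is loose notation.
\item Truncating to $[0,1-\epsilon]$ handles the blow-up of $\dot\alpha_t/(1-\alpha_t)$ as $t\to 1$, which the paper passes over.
\item Attributing each insertion to a single gap $\ell$ means you are computing the RND of gap-labelled processes, since inside a run of masks several gaps yield the same string. This matches the per-gap Poisson bookkeeping in Algorithm~\ref{alg:quality-sample}.
\end{itemize}
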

The proof is given in App \ref{app-prop:rndparam}. Then, defining $W^v(\boldsymbol{X}_1):=\log\frac{\mathrm{d}\mathbb{P}^\star}{\mathrm{d}\mathbb{P}^{\bar{\theta},\bar{\phi}}}(\boldsymbol{X}_{0:1})$, we can write the cross-entropy loss as $\mathcal{F}_{\text{CE}}(\mathbb{P}^{\theta, \phi}, \mathbb{P}^\star)=\mathbb{E}_{\mathbb{P}^{v}}[\frac{1}{Z}e^{W^v}\log \frac{\mathrm{d}\mathbb{P}^\star}{\mathrm{d}\mathbb{P}^{\theta, \phi}}]$.

\paragraph{Adaptive Joint Decoding Loss}
Now, we derive our \textbf{Adaptive Joint Decoding (AJD)} loss, which allows us to optimize the cross-entropy loss without computing the inner RND $\frac{\mathrm{d}\mathbb{P}^\star}{\mathrm{d}\mathbb{P}^{\theta, \phi}}$ over trajectories after each update to the model parameters. 
\begin{align}
    &\min_{\theta, \phi}\mathbb{E}_{\mathbb{P}^{v}}\left[\frac{1}{Z}e^{W^v}\log \frac{\mathrm{d}\mathbb{P}^\star}{\mathrm{d}\mathbb{P}^{\theta, \phi}}\right]=\min_{\theta, \phi}\mathbb{E}_{\mathbb{P}^{v}}\left[\frac{1}{Z}e^{W^v}[-\log \mathbb{P}^{\theta, \phi}]\right]\nonumber\\
    &=\min_{\theta, \phi}\mathbb{E}_{\mathbb{P}^{v}}\left[\frac{1}{Z}e^{W^v}\left[\sum_{t_i:\boldsymbol{X}_{t_i}\neq \boldsymbol{X}_{t_i-}}-\log \mathbb{P}^{\theta, \phi}(\boldsymbol{X}_{t_i}|\boldsymbol{X}_{t_i^-} )+\sum_{t_u:\boldsymbol{X}_{t_u}\neq \boldsymbol{X}_{t_u-} }-\log \mathbb{P}^{\theta, \phi}(\boldsymbol{X}_{t_u}|\boldsymbol{X}_{t_u^-})\right]\right]
\end{align}
where the first equality follows from dropping the $\log \mathbb{P}^\star$ term independent of $\theta, \phi$, and the second equality follows from decomposing the probability path into the sum of probabilities of each insertion and unmasking step at times $t_i$ and $t_u$, respectively. Rather than explicitly computing the sum over the full trajectory, we observe that each inner term can be reframed as an expectation over samples from the interpolant $\tilde{\boldsymbol{x}}_t\sim p_t(\cdot|\boldsymbol{X}_1)$ given a clean sample $\boldsymbol{X}_1$ from $\boldsymbol{X}_{0:1}\sim \mathbb{P}^v$. Given the clean sample $\boldsymbol{x}_1$, the optimal $f_\theta$, $g_\theta$, $\mu_\phi$, and $\nu_\phi$ can be obtained by minimizing $\mathcal{L}_{\text{unmask}}(\theta;\boldsymbol{x}_1)$ (\ref{loss:unmask}), $\mathcal{L}_{\text{insert}}(\theta;\boldsymbol{x}_1)$ (\ref{loss:insert}),  $\mathcal{L}_{\text{UQL}}(\phi;\boldsymbol{x}_1)$ (\ref{loss:uql}), and $\mathcal{L}_{\text{IQL}}(\phi;\boldsymbol{x}_1)$ (\ref{loss:iql}), respectively. This yields our \textbf{Adaptive Joint Decoding (AJD)} loss, defined as:
\begin{align}
    \mathcal{L}_{\text{AJD}}(\theta, \phi):=\underset{\boldsymbol{X}_{0:1}\sim \mathbb{P}^v}{\mathbb{E}}\left[\frac{1}{Z}\mathrm{e}^{W^v}\big[\mathcal{L}_{\text{unmask}}(\theta;\boldsymbol{X}_1)+\mathcal{L}_{\text{insert}}(\theta;\boldsymbol{x}_1)+\lambda_{\text{quality}}(\mathcal{L}_{\text{UQL}}(\phi;\boldsymbol{X}_1)+\mathcal{L}_{\text{IQL}}(\phi;\boldsymbol{X}_1))\big]\right]\label{loss:ajd}
\end{align}
where the normalizing constant $Z$ is approximated as $Z\approx \mathbb{E}_{\boldsymbol{X}_{0:1}\sim \mathbb{P}^v}\left[\mathrm{e}^{W^v}\right]$. This loss can be seen as an extension of the weighted denoising cross-entropy loss used for fixed-length discrete diffusion sampling \citep{zhu2025mdns} and fine-tuning \citep{tang2025tr2}. This unique minimizer of the AJD loss is the optimal unmasking generator $\boldsymbol{Q}^\star$ and insertion generator $\boldsymbol{R}^\star$ of the reward-tilted path measure $\mathbb{P}^\star$, with proof deferred to App \ref{app:ajd-proofs}.

\subsection{Fine-Tuning Any-Order Discrete Diffusion}

\paragraph{Off-Policy Reinforcement Learning}
To optimize the AJD loss, we leverage an \textbf{off-policy RL} strategy with the fixed path measure $\mathbb{P}^v:=\mathbb{P}^{\bar{\theta}, \bar{\phi}}$ defined by detached parameters $\bar{\theta}:=\texttt{stopgrad}(\theta)$ and $\bar{\phi}:=\texttt{stopgrad}(\phi)$. Each fine-tuning iteration: \textbf{(1)} we sample a batch of $B$ sequences $\boldsymbol{x}_1$ while computing their log RND $W^{\bar{\theta}, \bar{\phi}}$ and store them in a replay buffer $\mathcal{B}=\{(\boldsymbol{x}_{1, i},W^{\bar{\theta}, \bar{\phi}})\}_{i=1}^B$, \textbf{(2)} for each $\boldsymbol{x}_{1, i}$, we sample $R$ intermediate sequences from the interpolant $\tilde{\boldsymbol{x}}_{t, j}\sim p_t(\cdot|\boldsymbol{x}_{1, i})$, and \textbf{(3)} we compute the AJD loss (\ref{loss:ajd}) and update the parameters $\theta, \phi$.

\paragraph{Alternating Optimization of Policy and Quality Predictors}
Since the number of parameters $\theta$ in the insertion and unmasking policy is orders of magnitude larger than the number of parameters $\phi$ in the quality prediction heads, optimizing both models simultaneously results in unstable training and sub-optimal convergence of the quality prediction model. Following prior work, we alternate between updating $\theta$ for $N_{\text{alt}}$ iterations with $\phi$ frozen and updating $\phi$ for $N_{\text{alt}}$ iterations with $\theta$ frozen \citep{zhou2025co}. Since the quality heads are initialized from scratch, we sample the replay buffer using only the policy model $\theta$ for $N_{\text{warmup}}$ iterations. Beyond stabilizing training, this alternating schedule acts as an implicit form of loss balancing: because $\theta$ and $\phi$ are never updated in the same pass, the relative weighting $\lambda_{\text{quality}}$ between the policy losses ($\mathcal{L}_{\text{unmask}}, \mathcal{L}_{\text{insert}}$) and the quality losses ($\mathcal{L}_{\text{UQL}}, \mathcal{L}_{\text{IQL}}$) no longer needs to be tuned. Full pseudo-code for fine-tuning with A2D2 is given in Algorithm \ref{alg:a2d2}.

\paragraph{Adaptive Inference with A2D2}
Adaptive inference with the unmasking and insertion policy $f_\theta, g_\theta$ and quality predictors $\mu_\phi, \nu_\phi$ starts from an empty sequence of pad tokens $\boldsymbol{x}_0$ and at each discrete time step $[t_k, t_{k+1}]$, performs:
\begin{enumerate}
    \item[(i)] \textbf{Adaptive Unmasking}: We sample a subset of mask tokens $\mathcal{M}$ to unmask via the parameterized unmasking posterior $f_\theta(\boldsymbol{x}_t,t)[\ell]$. Then, we predict the unmasking quality for each token $\mu_\phi(\boldsymbol{x}_t,t)[\ell]$ and re-mask low-quality tokens that either fall below a threshold or until the expected number of masked tokens at time $t$ is reached.
    \item[(ii)] \textbf{Adaptive Insertion}: We insert a set of masks $\mathcal{I}$ according by sampling insertion counts $I^\ell_t\sim \text{Poisson}(g_\theta(\boldsymbol{x}_t,t)[\ell]\cdot\Delta t)$ for each gap. Then, we predict the insertion quality $\nu_\phi(\boldsymbol{x}_t,t)[i]$ for each inserted mask and subsequently remove low-quality masks. 
\end{enumerate}
We stop when no masks remain, and the insertion expectation falls below $0.5$ or when the total number of time steps is reached. The full inference procedure is detailed in Algorithm \ref{alg:quality-sample}.

\section{Experiments}
We evaluate \textbf{A2D2} on two reward-guided sequence generation tasks where any-order, any-length discrete diffusion is particularly well-suited: drug-like small-molecule design (Sec~\ref{exp:molecule}), multi-objective therapeutic peptide generation (Sec~\ref{exp:peptide}), and language reasoning for math word problems and code infilling (Sec~\ref{exp:language}).

\begin{table*}[t]
\caption{\textbf{Drug-like small molecule design results.} Metrics were computed for $1000$ generated sequences, and the mean and standard deviations were computed across $3$ seeds. }
\label{table:molecule-results}
\begin{center}
\begin{small}
\resizebox{\linewidth}{!}{
\begin{tabular}{@{}lccccccc@{}}
\toprule
\textbf{Method} &  Validity ($\%$ total) & Uniqueness ($\%$ valid) & Quality ($\%$ total) & QED ($\uparrow$) & Synthetic Accessibility ($\downarrow$) &  Diversity ($\uparrow$)  \\
\midrule
GenMol w/o confidence & $98.467_{\pm 0.094}$ & $99.763_{\pm 0.048}$ & $73.667_{\pm 1.159}$ & $0.763_{\pm 0.002}$ & $3.024_{\pm 0.017}$ & $0.859_{\pm 0.000}$ \\
GenMol w/ confidence & $99.733_{\pm 0.047}$ & $99.532_{\pm 0.125}$ & $84.000_{\pm 0.566}$ & $0.813_{\pm 0.004}$ & $2.881_{\pm 0.020}$ & $0.817_{\pm 0.000}$ \\
\midrule
\rowcolor{mybg} Pre-trained (Any Length) &  $95.800_{\pm 0.294}$ & $99.582_{\pm 0.086}$ & $44.167_{\pm 0.574}$ & $0.641_{\pm 0.002}$ & $3.401_{\pm 0.032}$  & $0.908_{\pm 0.000}$  \\
\rowcolor{mybg} A2D2 w/o quality & $96.400_{\pm  0.535}$ & $62.484_{\pm 0.452}$ &$76.067_{\pm 0.896}$ & $0.795_{\pm 0.005}$ & $2.306_{\pm 0.024}$ & $0.730_{\pm 0.003}$ \\
\rowcolor{mybg} A2D2 w/o insertion quality & $92.200_{\pm 0.082}$ & $93.746_{\pm 0.335}$ & $71.067_{\pm 0.411}$ & $0.752_{\pm 0.003}$ & $2.654_{\pm 0.014}$ & $0.832_{\pm 0.001}$ \\
\rowcolor{mybg} A2D2 w/o unmasking quality & $98.600_{\pm 0.082}$ & $85.801_{\pm 0.387}$ & $82.367_{\pm 0.736}$ & $0.818_{\pm 0.003}$ & $2.753_{\pm 0.026}$ & $0.789_{\pm  0.002}$ \\
\rowcolor{mybg} \textbf{A2D2 w/ both quality} & $94.533_{\pm 0.665}$ & $93.695_{\pm 0.925}$ & $71.300_{\pm 0.852}$ & $0.762_{\pm 0.004}$ & $2.870_{\pm 0.018}$ & $0.843_{\pm  0.001}$ \\
\bottomrule
\end{tabular}
}
\end{small}
\end{center}
\end{table*}

\begin{table*}[t]
\caption{\textbf{Multi-objective peptide design results.} Metrics were computed for $1000$ generated peptide sequences, except for multi-objective guidance, which was evaluated on 100 sequences in the Pareto-optimal set, and the mean and standard deviations were computed across $3$ seeds. \textbf{Pre-trained}: unconditional sampling from the fixed-length SMILES MDM from \citet{tang2025peptune} \textbf{(Fixed)} and our any-length model \textbf{(Any)}. \textbf{Multi-Objective Guidance (Fixed)}: 100 iterations of inference-time multi-objective guidance on the fixed-length model with PepTune \citep{tang2025peptune}. \textbf{Off-Policy RL (Fixed)}: fixed-length model fine-tuned for 500 iterations using off-policy RL fine-tuning \citep{tang2025tr2}. \textbf{A2D2 w/o quality}: any-length model fine-tuned with the AJD loss but without quality predictor training or sampling. \textbf{A2D2}: any-length model fine-tuned with the AJD loss and quality predictor optimization. * Only valid sequences are added to the optimal set for multi-objective guidance.}
\label{table:peptide-multi}
\begin{center}
\begin{small}
\resizebox{\linewidth}{!}{
\begin{tabular}{@{}llcccccc@{}}
\toprule
\textbf{Target Protein} & \textbf{Method} & Validity ($\%$) & Binding Affinity ($\uparrow$) & Solubility ($\uparrow$) & Non-hemolysis ($\uparrow$) & Non-fouling ($\uparrow$) & Permeability ($\uparrow$) \\
\midrule
TfR & Pre-trained (Fixed) & $32.446_{\pm 0.662}$ & $8.776_{\pm 0.023}$ & $0.707_{\pm  0.001}$ & $0.901_{\pm 0.003}$ & $0.215_{\pm 0.001}$ & $-7.145_{\pm 0.003}$ \\
& Pre-trained (Any) & $10.064_{\pm 0.360}$ & $7.756_{\pm 0.015}$ & $0.689_{\pm 0.010}$ & $0.851_{\pm 0.001}$ & $0.169_{\pm 0.004}$ & $-7.185_{\pm 0.009}$ \\
& Multi-Objective Guidance (Fixed) & N/A* & $9.043_{\pm 0.168}$ & $0.717_{\pm 0.004}$ & $0.897_{\pm 0.009}$ & $0.206_{\pm 0.001}$ & $-7.126_{\pm 0.004}$ \\
& Off-Policy RL (Fixed) & $33.433_{\pm 1.190}$ & $10.614_{\pm 0.011}$ & $0.655_{\pm 0.004}$ & $0.810_{\pm 0.002}$ & $0.224_{\pm 0.002}$ & $-7.109_{\pm  0.005}$ \\
\rowcolor{mybg} & A2D2 w/o quality & $41.267_{\pm 1.520}$ & $10.241_{\pm 0.012}$ & $0.792_{\pm 0.011}$ & $0.831_{\pm  0.004}$ & $0.096_{\pm 0.001}$ & $-6.102_{\pm 0.006}$ \\
\rowcolor{mybg} & \textbf{A2D2 w/ both quality} & $48.600_{\pm 1.445}$ & $10.190_{\pm 0.033}$ & $0.776_{\pm 0.003}$ & $0.831_{\pm 0.003}$ & $0.118_{\pm 0.002}$ & $-6.338_{\pm 0.003}$ \\
\midrule
GLP-1R & Pre-trained (Fixed) & $32.446_{\pm 0.662}$ & $8.781_{\pm 0.018}$ & $0.707_{\pm 0.001}$ & $0.901_{\pm 0.003}$ & $0.215_{\pm 0.001}$ & $-7.145_{\pm 0.003}$ \\
& Pre-trained (Any) & $10.064_{\pm 0.360}$ & $8.008_{\pm 0.019}$ & $0.689_{\pm 0.010}$ & $0.851_{\pm 0.001}$ & $0.169_{\pm 0.004}$ & $-7.185_{\pm 0.009}$ \\
& Multi-Objective Guidance (Fixed) & N/A* & $8.897_{\pm 0.041}$ & $0.721_{\pm 0.010}$ & $0.896_{\pm 0.015}$ & $0.214_{\pm 0.014}$ & $-7.111_{\pm 0.025}$ \\
& Off-Policy RL (Fixed) & $48.767_{\pm 0.492}$ & $9.377_{\pm 0.025}$ & $0.684_{\pm 0.003}$ & $0.804_{\pm 0.010}$ & $0.247_{\pm 0.002}$ & $-7.019_{\pm 0.009}$ \\
\rowcolor{mybg} & A2D2 w/o quality & $25.033_{\pm 2.014}$ & $9.698_{\pm 0.003}$ & $0.830_{\pm 0.003}$ & $0.849_{\pm 0.004}$ & $0.618_{\pm 0.006}$ & $-7.272_{\pm 0.003}$ \\
\rowcolor{mybg} & \textbf{A2D2 w/ both quality} & $29.167_{\pm 1.634}$ & $9.520_{\pm 0.017}$ & $0.854_{\pm 0.002}$ & $0.840_{\pm 0.002}$ & $0.601_{\pm 0.007}$ & $-7.209_{\pm 0.008}$ \\
\bottomrule
\end{tabular}
}
\end{small}
\end{center}
\end{table*}

\subsection{Drug-Like Small Molecule Design}
\label{exp:molecule}
\paragraph{Setup and Baselines}
We pre-train an any-length MDM on the SAFE dataset \citep{noutahi2024gotta}, containing $\sim950$M molecules from ZINC \citep{irwin2012zinc} and Unichem \citep{chambers2013unichem} in SAFE notation ($V=1880$). We fine-tune with \textbf{A2D2} using QED and SA as rewards, evaluating with the following metrics: \textbf{quality} (fraction of valid, unique, drug-like, and synthesizable molecules), \textbf{validity}, \textbf{uniqueness}, and \textbf{diversity} (mean Tanimoto distance between Morgan fingerprints). Following \citet{jin2020multi}, molecules are drug-like if QED \citep{bickerton2012quantifying} $> 0.6$ and synthesizable if SA \citep{ertl2009estimation} $\leq 4$, following the evaluation in prior work \citep{noutahi2024gotta, lee2025genmol}. We compare against the pre-trained MDM and ablate A2D2's insertion and unmasking quality components (App~\ref{app:molecule-exp}). We additionally benchmark against \textbf{GenMol} \citep{lee2025genmol}, the state-of-the-art \textit{fixed-length} masked diffusion model for SAFE generation. Confidence-based sampling with GenMol uses a confidence score computed as the sum of the log-probability the model assigned to its own prediction with scaled Gumbel noise. Additional experiment details and ablations are provided in App \ref{app:molecule-exp}.

\paragraph{Results}  
Fine-tuning any-length MDMs with our AJD-weighted loss yields substantial improvements in molecular rewards over the pre-trained any-length baseline, raising QED from $0.641$ to $0.762$ and lowering SA from $3.40$ to $2.87$, while improving quality from $44.2\%$ to $71.3\%$ (Table~\ref{table:molecule-results}). Notably, A2D2 without unmasking and insertion quality yields lower uniqueness, indicating that quality-guided inference is needed to maintain sample diversity.  These gains close the gap to GenMol \citep{lee2025genmol} in validity, diversity, and quality, while operating in the strictly more general any-length setting, where both the generable space of molecules and the per-step action space are substantially larger. This performance holds across all A2D2 variants, confirming that our learned quality predictors recover the benefit of GenMol's confidence-based sampling.

\begin{figure*}[t]
    \centering
    \includegraphics[width=\linewidth]{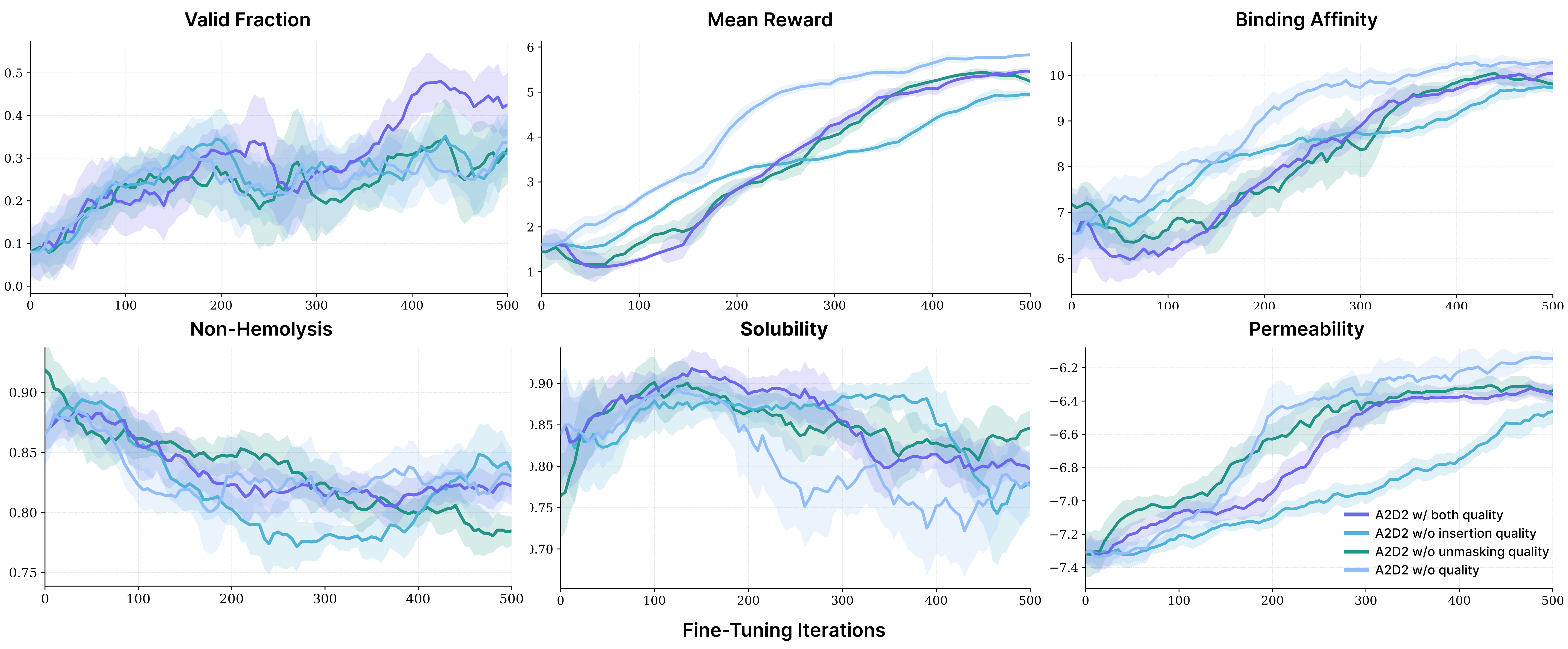}
    \caption{\textbf{Multi-objective peptide fine-tuning curves with and without quality-based adaptive inference.} Any-length peptide SMILES models are fine-tuned by optimizing the five rewards: binding affinity to TfR, solubility, non-hemolysis, non-fouling, and permeability. Curves show the fraction of valid peptides, the mean reward, and per-property rewards over fine-tuning iterations for the full model with both insertion and unmasking quality (\textbf{indigo}) against ablations that remove the insertion quality (\textbf{light blue}), the unmasking quality (\textbf{teal}), or both qualities (\textbf{pale blue}).}
    \label{fig:peptide-curves}
    \vspace{-10pt}
\end{figure*}

\subsection{Multi-Objective Therapeutic Peptide Generation}
\label{exp:peptide}
\paragraph{Setup and Baselines}
We pre-train an any-length MDM on $11$M peptide SMILES, comprising $7,451$ sequences from CycPeptMPDB \citep{li2023cycpeptmpdb}, $825,632$ unique peptides from SmProt \citep{li2021smprot}, and $\sim\!10$M modified peptides from CycloPs \citep{duffy2011cyclops, feller2025peptide}. We fine-tune with A2D2 on five therapeutic properties: binding affinity to a target protein, solubility, non-hemolysis, non-fouling, and permeability, and evaluate validity using the \texttt{SMILES2PEPTIDE} decoder from \citet{tang2025peptune}. As no prior work addresses inference-time guidance or fine-tuning for any-length discrete diffusion, we compare against fixed-length masked diffusion baselines: \textbf{Multi-Objective Guidance (Fixed)}, which uses Monte Carlo Tree Guidance (MCTG) \citep{tang2025peptune} for inference-time multi-objective guidance, and \textbf{Off-Policy RL (Fixed)}, which uses the off-policy RL algorithm from \citet{tang2025tr2} for fine-tuning fixed-length MDMs. We also compare against a pre-trained fixed-length peptide SMILES MDM \citep{tang2025peptune}, our pre-trained any-length MDM, and an A2D2 ablation without insertion and unmasking quality (\textbf{A2D2 w/o quality}). Additional experiment details and ablations are provided in App \ref{app:peptide-exp}.

\paragraph{Results}
Fine-tuning any-length MDMs with our AJD weighted loss yields substantial reward gains across all objectives, with or without quality-based adaptive inference (Fig.~\ref{fig:peptide-curves}). At equal sampling steps, \textbf{A2D2} outperforms the any-length pre-trained baseline by a wide margin on every therapeutic property, and surpasses both the fixed-length RL fine-tuning and the substantially more expensive multi-objective guidance baselines on \textit{almost all objectives} (Table~\ref{table:peptide-multi}). For TfR, it increases binding affinity from $7.756$ to $10.190$ while improving solubility and permeability; for GLP-1R, it raises non-fouling and solubility above the guidance and RL baselines, all without inference-time search. Crucially, quality-based adaptive inference markedly improves sequence validity while optimizing the \textit{same} rewards: A2D2 reaches $48.60\%$ validity on TfR, exceeding every baseline and improving over the any-length pre-trained model nearly fivefold. This confirms that explicitly maximizing quality translates into more accurate, decodable generation at no cost to reward optimization.

\subsection{Language Reasoning Experiment}
\label{exp:language}
\paragraph{Setup and Baselines}
Due to the lack of publicly available pre-trained any-length MDM checkpoints, we adapt the 8 billion parameter LLaDA-8B-Base \citep{nie2026large} fixed-length MDM to an any-length MDM following the same procedure as \citet{kim2025any} by adding time embedding layers and an insertion prediction head and training the frozen backbone parameters with attached LoRA adapters on an equal split of OpenWebText \citep{Gokaslan2019OpenWeb} and Proof-Pile-2 \citep{azerbayev2023llemma}. Pre-training ran for less than 3 days ($240,000$ steps, $1.10$ epochs) on 8 NVIDIA B200 GPUs. Then, we performed instruction fine-tuning (IFT) for math word problems with GSM8K \citep{cobbe2021training} and code infilling with opc-sft-stage-2 \citep{huang2025opencoder}. The IFT models are evaluated with GSM8K-test and HumanEval-infill \citep{bavarian2022efficient} as the \textbf{Pretrained + IFT} baseline. Then, we additionally perform RL fine-tuning with A2D2 with and without the insertion and unmasking quality heads. Additional experiment details are provided in App \ref{app:language}.

\begin{wraptable}{r}{0.4\textwidth}
\caption{\textbf{Reasoning experiment results.} GSM8k performance is evaluated with Pass@1 (\%) out of $1319$ total questions. Code infilling performance is evaluated with exact match to ground truth code (\%) out of $1033$ total examples. }
\label{table:language-results}
\begin{center}
\begin{small}
\resizebox{\linewidth}{!}{
\begin{tabular}{@{}lcccc@{}}
\toprule
& \multicolumn{4}{c}{\textbf{GSM8K}}  \\
\midrule

\textbf{Sampling Steps} & $\mathbf{128}$ & $\mathbf{256}$ & $\mathbf{512}$ & $\mathbf{1024}$ \\
\midrule
Pre-trained + IFT & $35.71$ & $37.98$ & $39.88$ & $41.39$ \\
\rowcolor{mybg} \textbf{A2D2} & $60.96$ & $61.03$ & $57.54$ & $57.62$  \\
\midrule
& \multicolumn{4}{c}{\textbf{HumanEval In-Filling}} \\
\midrule
\textbf{Sampling Steps} & $\mathbf{128}$ & $\mathbf{256}$ & $\mathbf{512}$ & $\mathbf{1024}$ \\
\midrule 
Pre-trained + IFT & $44.14$ & $44.82$ & $47.53$ & $48.89$ \\
\rowcolor{mybg} \textbf{A2D2} & $49.37$ & $53.44$ & $53.73$ & $57.12$ \\
\bottomrule
\end{tabular}
}
\end{small}
\end{center}
\end{wraptable}

\paragraph{Results}
Across both reasoning tasks, RL fine-tuning with \textbf{A2D2} yields large gains over the \textbf{Pre-trained + IFT} baseline at every matched inference budget (Table~\ref{table:language-results}). On GSM8K, A2D2 raises Pass@1 from $35.71\%$ to $60.96\%$ at $128$ sampling steps, a $25.3$-point improvement, and the advantage persists at $256$, $512$, and $1024$ steps. Notably, A2D2 reaches its best accuracy with only $128$–$256$ steps and exceeds the baseline's strongest result ($41.39\%$ at $1024$ steps) by roughly $20$ points while using a fraction of the steps, demonstrating that reward alignment with A2D2 translates directly into greater inference efficiency. The slight decline after $256$ steps, which then plateaus at $512$ and $1024$ steps, further indicates that, after fine-tuning, additional decoding steps are unnecessary and that A2D2 concentrates accuracy in the low-sampling-step regime. On HumanEval infilling, A2D2 again dominates the baseline at all budgets, improving exact match from $44.14\%$ to $49.37\%$ at $128$ steps and from $48.89\%$ to $57.12\%$ at $1024$ steps. Unlike GSM8K, infilling performance continues to improve monotonically with more sampling steps. Together, these results show that A2D2 simultaneously improves task accuracy and decoding efficiency for any-length language reasoning, mirroring the gains observed in the molecule and peptide settings.

\section{Conclusion}
In this work, we introduce Fine-Tuning \textbf{A}ny-Length \textbf{D}iscrete \textbf{D}iffusion for \textbf{A}daptive Decoding (\textbf{A2D2}), a novel framework that unlocks reward-guided fine-tuning for any-length discrete diffusion models with adaptive quality-based inference. Driven by the importance of the inference schedule in any-length decoding, we define unmasking and insertion quality, establish their theoretical connection to compounding parallelization error and reconstruction likelihood, and co-optimize the policy and quality predictors to provably generate the optimal any-length path measure that samples from an intractable reward-tilted distribution. Our approach is a significant step toward any-length generation for diverse reward-based tasks.

\section*{Declarations}
\paragraph{Acknowledgements} We thank Mark III Systems for providing database and hardware support that has contributed to the research reported within this manuscript. 

\paragraph{Author Contributions} S.T. devised and developed model architectures and theoretical formulations and performed experiments. S.T. drafted the manuscript and designed the figures. Y.Z. advised on model architectures and theoretical formulations. P.C. and M.T. supervised and directed the study, and reviewed and finalized the manuscript.

\paragraph{Data and Materials Availability} The codebase is freely accessible to the academic community at \url{https://github.com/sophtang/A2D2} and at \url{https://huggingface.co/ChatterjeeLab/A2D2}.

\paragraph{Funding Statement} This research was supported by NIH grant R35GM155282 to the lab of P.C.. Yuchen Zhu and Molei Tao are grateful for partial supports by NSF Grant DMS-2513699 (YZ \& MT), DOE Grants NA0004261 (MT), SC0026274 (YZ \& MT),  Richard Duke Fellowship (YZ \& MT), and Simons Institute for the Theory of Computing at UC Berkeley (MT).

\bibliographystyle{acl_natbib.bst}
\bibliography{citation.bib}

\clearpage
\beginappendix
\startcontents[app]
\printcontents[app]{l}{1}{\setcounter{tocdepth}{2}}

\newpage

\section{Related Work}
\label{app:related-work}

\paragraph{Inference-Time Scaling of Discrete Diffusion}
Inference-time scaling of diffusion models seeks to maximize the capabilities of pre-trained diffusion models for specialized tasks during inference, such as sampling from a reward-tilted distribution \citep{skreta2025feynman, singhal2025general, chen2025solving}, constrained sampling \citep{fishman2023diffusion}, or entropy maximization \citep{de2025provable}. Specifically in the discrete state space, search-based methods \citep{tang2025peptune, tang2025tr2, li2025dynamic, ramesh2025test}, importance-weighting techniques \citep{chatterjee2018sample}, reward-gradient methods \citep{song2020score, bansal2023universal}, and classifier-based and classifier-free guidance \citep{nisonoff2024unlocking, rector-brooks2025steering, wang2024diffusion, schiff2024simple, rojas2025theory, guo2024plug} have been explored. 

\paragraph{Training-Time Fine-Tuning of Discrete Diffusion}
Training-time fine-tuning methods instead update the model parameters to optimize a downstream reward. One line of work backpropagates rewards through the denoising trajectory, as in DRAKES \citep{wang2025fine}, which makes the otherwise non-differentiable discrete trajectories differentiable via the Gumbel-Softmax trick to optimize sequences that are both high-reward and natural-like. A complementary line adapts policy-gradient reinforcement learning to discrete diffusion: SEPO \citep{zekri2025fine} provides a framework that handles non-differentiable rewards via self-normalized importance sampling, while a growing body of work adapts GRPO to masked diffusion language models, including diffu-GRPO \citep{zhao2026d1} and Coupled GRPO \citep{gong2025diffucoder}. TR2-D2 \citep{tang2025tr2} instead casts fine-tuning as stochastic optimal control, using Monte Carlo tree search to curate replay buffers of high-reward trajectories for off-policy RL, and extends to multi-objective reward optimization.

\paragraph{Optimization of Inference Schedule for Discrete Diffusion}
Closest to our work is PRISM \citep{kim2025fine}, which introduces \textit{per-token quality} as a learned measure that determines whether to re-mask already unmasked tokens during inference. However, it remains limited to fixed-length masked diffusion for optimizing sampling of the data distribution. Co-GRPO \citep{zhou2025co}, which introduces a reward fine-tuning strategy for fixed-length masked diffusion that simultaneously optimizes the inference schedule and policy model with a shared reward signal via group-relative policy optimization (GRPO). Jump Your Steps (JYS) \citep{park2024jump} optimizes the times at which discrete jumps are made during inference by minimizing a KL upper bound on the compounding decoding error for fixed-length masked diffusion models. Related inference-time mechanisms include \textit{planner}-guided decoding, which learns a planner to choose which tokens to reveal at each step \citep{liu2024think, peng2025path, peng2025planner}, as well as remasking and corrector-style methods that iteratively revise intermediate predictions to improve sample quality \citep{wang2025remasking, zhao2024informed, tian2025finish}.

\section{Extended Theoretical Background}
\label{app:ext-background}

\subsection{Notation} 
We denote the CTMC of a diffusion trajectory as $\boldsymbol{X}_{0:1}:=(\boldsymbol{X}_t)_{t\in [0,1]}$ which lies in the variable-length discrete state space $\mathcal{X}=\bigcup_l\mathcal{V}^l$, where $\mathcal{V}:=\{1, \dots, V\}^l$ of sequences of length $l$ and vocabulary size $V$, where $l$ is the variable sequence length and $L$ is the maximum sequence length. An any-length discrete diffusion model is characterized by a path measure $\mathbb{P}\in \mathcal{P}(C([0,T]; \mathcal{X}))$ over trajectories that follow a joint generator $\boldsymbol{A}_t=\boldsymbol{Q}_t+\boldsymbol{R}_t$, where $\boldsymbol{Q}_t$ is the unmasking rate and $\boldsymbol{R}_t$ is the insertion rate. Specifically, we denote the pre-trained discrete diffusion model as $\mathbb{P}^{\text{pre}}$ and the optimal target model as $\mathbb{P}^\star$. For a corrupted sequence $\boldsymbol{x}_t$ at time $t$, the unmasking posterior is given by $f_\theta(\boldsymbol{x},t)$ with indices $[\ell, \boldsymbol{v}]$ denoting the probability of a single token at position $\ell$ and the insertion rate is given by $g_\theta(\boldsymbol{x}_t, t)[\ell]$ with index $[\ell]$ indicating the insertion expectation between positions $\ell-1$ and $\ell$ of the sequence. 

\subsection{Any-Length Masked Diffusion}
\label{app:any-length}
In this work, we build upon the any-length masked diffusion model framework introduced in \citet{kim2025any}, which leverages a pair of joint stochastic interpolants for the insertion and unmasking schedules. Here, we will provide a comprehensive background on the any-length MDM framework and highlight its distinction from standard fixed-length MDMs. 

\paragraph{Forward Noising Process}
In standard MDM, the forward process involves applying mask tokens gradually across the sequence until the sequence is fully masked at time $t=0$. For any-length MDMs, the forward process consists not only of the masking step but also of a \textbf{deletion} step, which serves as the inverse operation for the insertion step during inference. Therefore, we must define both the insertion time $t_i^\ell$ and the unmasking time $t_u^\ell$ for each token position $\boldsymbol{x}^\ell$ as:
\begin{align}
    t_i^\ell\sim \dot{\alpha}_tdt, \quad t_u^\ell\sim \boldsymbol{1}[t\geq t^\ell_i]\frac{\dot{\beta}}{1-\beta_{t^\ell_i}}dt, \quad \boldsymbol{x}^\ell_t=\begin{cases}
        (\text{empty}), & 0< t<t^\ell_i\\
        \boldsymbol{M}, & t^\ell_i\leq t\leq t^\ell_u\\
        \boldsymbol{x}_1^{s_t[\ell]}, & t^\ell_u\leq t\leq 1
    \end{cases}
\end{align}
where (empty) indicates a position that has not been inserted yet. We denote the index of a token in the partial subsequence with $s_t$ given by:
\begin{align}
    s_t:=\{\ell\in \{1, \dots, \text{len}(\boldsymbol{x}_1)\}|\;t^\ell_i\leq t\}
\end{align}
where $s_t[\ell]$ is the index of the token $\boldsymbol{x}^\ell_1$ in $\boldsymbol{x}_t$ at time $t$ and we have $\text{len}(s_t)\leq \text{len}(\boldsymbol{x}_t)$. 

\paragraph{Parameterization}
Just like fixed-length MDMs, the masking step in any-length MDMs has an inverse unmasking step that is parameterized with an \textbf{unmasking posterior} $f_\theta(\boldsymbol{x}_t,t)[\ell]:\mathcal{V}^L\times [0,1]\to \Delta^D$ that predicts the distribution over the token vocabulary for each masked position. This model is trained with the standard denoising cross-entropy loss defined as:
\begin{align}
    \mathcal{L}_{\text{unmask}}(\theta;\boldsymbol{x}_1)&=\underset{t\sim \mathcal{U}(0,1)}{\mathbb{E}}\underset{\boldsymbol{x}_t\sim p_t(\cdot |\boldsymbol{x}_1)}{\mathbb{E}} \bigg[-\frac{\dot{\beta}_t}{1-\beta_t}\sum_{\ell:\boldsymbol{x}^\ell_t=\boldsymbol{M}}\log f_\theta(\boldsymbol{x}_t,t)[\ell, \boldsymbol{x}_1^{s_t[\ell]}]\bigg]
\end{align}
Since any-length MDMs contain the additional deletion step, we parameterize the inverse \textbf{insertion} step with a \textbf{insertion expectation} $g_\theta(\boldsymbol{x}_t,t)[\ell]: \mathcal{V}^L\times[0,1] \to\mathbb{R}_{>0}$ which predicts the number of tokens that still need to be inserted between tokens $\boldsymbol{x}^{\ell-1}_t$ and $\boldsymbol{x}^{\ell}_t$. This model is trained by minimizing the scalar \textbf{Bregman divergence} $\phi(x, y)=y-x\log y$ between the predicted insertions and the true number of tokens that need to be inserted at each position:
\begin{align}
    \mathcal{L}_{\text{insert}}(\theta;\boldsymbol{x}_1)&=\underset{t\sim \mathcal{U}(0,1)}{\mathbb{E}}\underset{\boldsymbol{x}_t\sim p_t(\cdot |\boldsymbol{x}_1)}{\mathbb{E}} \left[-\frac{\dot{\alpha}_t}{1-\alpha_t}\sum_{\ell=0}^{\text{len}(\boldsymbol{x}_t)}\phi\left(s_t[\ell]-s_t[\ell-1]-1, g_\theta(\boldsymbol{x}_t,t)[\ell]\right)\right]
\end{align}

Minimizing both losses $\mathcal{L}_{\text{unmask}}(\theta)$ and $\mathcal{L}_{\text{insert}}(\theta)$ yields the \textit{unique} optimal value, which defines the rate matrices of the reverse process given by:
\begin{small}
\begin{align}
    \boldsymbol{R}_t(\boldsymbol{x}, \boldsymbol{x}^{\insertat{\ell}\boldsymbol{M}})&=\frac{\dot{\alpha}_t}{1-\alpha_t}\cdot\mathbb{E}[s_t[\ell]-s_t[\ell-1]-1|\;\boldsymbol{x}_t=\boldsymbol{x}]=\frac{\dot{\alpha}_t}{1-\alpha_t}\cdot g_\theta(\boldsymbol{x},t)[\ell]\tag{Insertion Rate}\\
    \boldsymbol{Q}_t(\boldsymbol{x}, \boldsymbol{x}^{\ell \gets \boldsymbol{v}})&=\frac{\dot{\beta}_t}{1-\beta_t}\cdot \mathbb{P}(\boldsymbol{x}_1^{s_t[\ell]}=\boldsymbol{v}|\boldsymbol{x}_t=\boldsymbol{x})=\frac{\dot{\beta}_t}{1-\beta_t}\cdot f_\theta(\boldsymbol{x}, t)[\ell, \boldsymbol{v}]\tag{Unmasking Rate}
\end{align}
\end{small}

\paragraph{Adaptive Sampling Guarantee}
A \textbf{crucial feature} of any-length MDMs is that the values of the rate matrix are \textit{decoupled from the choice of masking schedule $\beta_t$}. This means that the unmasking posterior learned by $f_\theta(\boldsymbol{x}_t,t)$ and insertion expectation $g_\theta(\boldsymbol{x}_t,t)$ capture all possible unmasking transitions that interpolate between the empty prior $p_0$ and the target distribution $p_{\text{target}}$. Given this, \citet{kim2025any} proves that any adaptive any-length MDM sampler for the $p_{\text{target}}$ will generate a sequence of intermediate states $\{\boldsymbol{x}_{t_i}\}_{i=1}^{N_{\text{steps}}}$ where the final sequence $\boldsymbol{x}_1$ is a sample from $p_{\text{target}}$.

\section{Theoretical Proofs}
\label{app:theory-proof}

\subsection{Observations}
\label{app:observations}
Throughout the theoretical proofs, we make the following \textbf{key observations}:
\begin{enumerate}
    \item [(i)] The unmasking rate $\boldsymbol{Q}_t$ and insertion rate $\boldsymbol{R}_t$ do not overlap, such that:
    \begin{align}
        \boldsymbol{Q}_t(\boldsymbol{x}, \boldsymbol{y})\boldsymbol{R}_t(\boldsymbol{x}, \boldsymbol{y})=0, \quad \forall t, \boldsymbol{x}, \boldsymbol{y}
    \end{align}
    which follows from the fact that insertion and unmasking actions are disjoint.
    \item[(ii)] The total generator $\boldsymbol{A}_t$ decomposes additively:
    \begin{align}
        \boldsymbol{A}_t(\boldsymbol{x}, \boldsymbol{y})=\boldsymbol{Q}_t(\boldsymbol{x}, \boldsymbol{y})+\boldsymbol{R}_t(\boldsymbol{x}, \boldsymbol{y}), \quad \forall t, \boldsymbol{x}, \boldsymbol{y}
    \end{align}
    \item[(iii)] Both insertion and unmasking rates are \textit{valid} such that:
    \begin{align}
        \boldsymbol{Q}_t(\boldsymbol{x}, \boldsymbol{x})=-\sum_{y \neq x}\boldsymbol{Q}_t(\boldsymbol{x}, \boldsymbol{y}), \quad \sum_{y \neq x}\boldsymbol{Q}_t(\boldsymbol{x}, \boldsymbol{y})< \infty\\
        \boldsymbol{R}_t(\boldsymbol{x}, \boldsymbol{x})=-\sum_{y \neq x}\boldsymbol{R}_t(\boldsymbol{x}, \boldsymbol{y}), \quad \sum_{y \neq x}\boldsymbol{R}_t(\boldsymbol{x}, \boldsymbol{y})< \infty
    \end{align}
    which ensures that the CTMC is well-defined such that reward tilting preserves the generator structure.
    \item[(iv)] The tilted generators $\boldsymbol{Q}^\star_t$ and $\boldsymbol{R}_t^\star$ and the reference generators $\boldsymbol{Q}^0_t$ and $\boldsymbol{R}_t^0$ have the same support, such that:
    \begin{align}
        \boldsymbol{Q}^0_t(\boldsymbol{x}, \boldsymbol{y})=0 \implies \boldsymbol{Q}^\star_t(\boldsymbol{x}, \boldsymbol{y})=0\\
        \boldsymbol{R}^0_t(\boldsymbol{x}, \boldsymbol{y})=0 \implies \boldsymbol{R}^\star_t(\boldsymbol{x}, \boldsymbol{y})=0
    \end{align}
    which ensures that the path measures $\mathbb{P}^0$ and $\mathbb{P}^\star$ are mutually absolutely continuous.
\end{enumerate}

\subsection{Proof of Proposition \ref{prop:unmaskloss}}
\label{app-prop:unmaskloss}
\begin{tcolorbox}[sharp corners, colback=mybg, boxrule=0pt]
\unmaskloss*
\end{tcolorbox}

\textit{Proof.} First, we recall the \textit{Unmasking Quality Loss} $\mathcal{L}_{\text{UQL}}$ defined in (\ref{loss:uql}) as:
\begin{align}
    \mathcal{L}_{\text{UQL}}(\phi; \boldsymbol{x}_1):=\underset{t\sim \mathcal{U}(0,1)}{\mathbb{E}}\underset{\tilde{\boldsymbol{x}}_t\sim p_t(\cdot |\boldsymbol{x}_1), \boldsymbol{y}}{\mathbb{E}}\underset{\sim p_{s|t}(\cdot |\tilde{\boldsymbol{x}}_t)}{\mathbb{E}}\left[\sum_{\ell\in \mathcal{M}}\text{BCE}\left(\boldsymbol{1}[\boldsymbol{y}^\ell=\boldsymbol{x}_1^{s_t[\ell]}], \mu_\phi^\ell(\boldsymbol{y})\right)\right]
\end{align}
Given a clean sequence $\boldsymbol{x}_1\sim p_{\text{data}}$, let $\tilde{\boldsymbol{x}}_t$ be a sample from the posterior $\tilde{\boldsymbol{x}}_t\sim p_t(\cdot|\boldsymbol{x}_1)$. Then, we sample a subset $\mathcal{M}_t$ of masked indices in $\tilde{\boldsymbol{x}}_t$ to unmask to obtain $\boldsymbol{y}$ where $\forall \ell \in \mathcal{M}_t$, we sample $\boldsymbol{y}^\ell \sim f_\theta(\tilde{\boldsymbol{x}}_t, t)[\ell]$.

Fix a position $\ell \in \mathcal{M}_t$ and consider the random binary variable $b_\ell$ defined by categorically sampling from $f_\theta(\tilde{\boldsymbol{x}}_t, t)[\ell]$ and the model predicted token quality $c_\ell(\boldsymbol{y}):=\mu_\phi^\ell(\boldsymbol{y})$:
\begin{align}
    b^\ell:=\boldsymbol{1}[\boldsymbol{y}^\ell=\boldsymbol{x}_1^{s_t[\ell]}]\in \{0,1\}, \quad c_\ell(\boldsymbol{y}):=\mu_\phi^\ell(\boldsymbol{y})\in [0,1]
\end{align}
For every position $\ell\in \mathcal{M}_t$, we minimize the following loss:
\begin{align}
    \mathcal{L}_{\text{UQL}}(\phi; \ell):=\mathbb{E}_{\boldsymbol{x}_1, \tilde{\boldsymbol{x}}_t, \boldsymbol{y}}\left[\text{BCE}\left(b_\ell, c_\ell(\boldsymbol{y})\right)\right]
\end{align}
Applying the law of total expectation, we have:
\begin{align}
    \mathbb{E}_{\boldsymbol{x}_1, \tilde{\boldsymbol{x}}_t, \boldsymbol{y}}\left[\text{BCE}\left(b_\ell, c_\ell(\boldsymbol{y})\right)\right]=\mathbb{E}_{\boldsymbol{y}}\underbrace{\left[\mathbb{E}_{\boldsymbol{x}_1, \tilde{\boldsymbol{x}}_t}\big[\text{BCE}(b_\ell, c_\ell(\boldsymbol{y}))\big|\boldsymbol{y}\big]\right]}_{:=D_\ell(\boldsymbol{y}, c_\ell)}
\end{align}
Let us define the conditional random variable:
\begin{align}
    q_\ell(\boldsymbol{y}):=p(b_\ell=1|\boldsymbol{y})=p(\boldsymbol{y}^\ell=\boldsymbol{x}_1^{s_t[\ell]}|\boldsymbol{y})=\mathbb{E}_{\boldsymbol{x}_1, \tilde{\boldsymbol{x}}_t}\left[b_\ell |\boldsymbol{y}\right]
\end{align}
Then, we can write the BCE loss $D_\ell(\boldsymbol{y}, c)$ as:
\begin{align}
    D_\ell(\boldsymbol{y}, c_\ell)=-q_\ell(\boldsymbol{y})\log c_\ell-(1-q_\ell(\boldsymbol{y}))\log(1-c_\ell)
\end{align}
Differentiating with respect to the parameters $\phi$ denoted as $c_\ell$ and setting the derivative equal to $0$, we get that the minimizer is:
\begin{align}
    \frac{d}{dc_\ell}D_\ell(\boldsymbol{y}, c_\ell)=-\frac{q_\ell(\boldsymbol{y})}{c_\ell}+\frac{1-q_\ell(\boldsymbol{y})}{1-c_\ell}\implies \frac{q_\ell(\boldsymbol{y})}{c^\star_\ell}=\frac{1-q_\ell(\boldsymbol{y})}{1-c^\star_\ell}\implies c_\ell^\star=q_\ell(\boldsymbol{y})
\end{align}
Taking the second derivative with respect to $c_\ell$, we show that this minimizer is unique:
\begin{align}
    \frac{d^2}{dc_\ell^2}D_\ell(\boldsymbol{y}, c_\ell)=\frac{q_\ell(\boldsymbol{y})}{c_\ell^2}+\frac{1-q_\ell(\boldsymbol{y})}{(1-c_\ell)^2}\geq 0
\end{align}
Since we defined the true unmasking quality as $\mu_\star^\ell(\boldsymbol{y}):=p(\boldsymbol{y}^\ell=\boldsymbol{x}_1^{s_t[\ell]}|\boldsymbol{y})$, the minimizer $c^\star$ exactly matches the true unmasking posterior:
\begin{align}
    c_\ell^\star=q_\ell(\boldsymbol{y})=\mu_\star^\ell(\boldsymbol{y})=\underset{c_\ell\in [0,1]}{\arg\min}\left\{D_\ell(\boldsymbol{y}, c_\ell)\right\}
\end{align}
Since for all pairs $(\boldsymbol{y}, \ell)$, we have shown that the BCE loss $D_\ell(\boldsymbol{y}, c_\ell)$ is minimized at $\mu_\phi^\ell(\boldsymbol{y})=\mu_\star^\ell(\boldsymbol{y})$, so the function that minimizes the full objective $\mathcal{L}_{\text{UQL}}(\phi):=\mathbb{E}_{\boldsymbol{y}}[D_\ell(\boldsymbol{y}, c_\ell)]$ must be the true function $\mu_\star$. \hfill $\square$

\subsection{Proof of Proposition \ref{prop:insertionloss}}
\label{app-prop:insertionloss}
\begin{tcolorbox}[sharp corners, colback=mybg, boxrule=0pt]
\insertionloss*
\end{tcolorbox}

\textit{Proof.} First, we recall the \textit{Insertion Quality Loss} $\mathcal{L}_{\text{IQL}}$ defined in (\ref{loss:iql}) as:
\begin{align}
    \mathcal{L}_{\text{IQL}}(\phi):=\mathbb{E}_{\boldsymbol{x}_1, \tilde{\boldsymbol{x}}_t, \boldsymbol{y}}\left[\sum_{i\in \mathcal{I}}\text{BCE}\big(\nu_\star^i(\boldsymbol{y}), \nu_\phi^i(\boldsymbol{y})\big)\right]
\end{align}
where the BCE is defined as: 
\begin{align}
    \text{BCE}(b, c)=-b\log c-(1-b) \log (1-c), \quad b,c\in [0,1]
\end{align} 
Since the sum $\sum_{i\in \mathcal{I}}\text{BCE}\big(\nu_\star^i(\boldsymbol{y}), \nu_\phi^i(\boldsymbol{y})\big)$ is inside the expectation, and the loss is calculated independently for each position $i$ given $\nu_\phi^i(\boldsymbol{y})$, minimizing $\mathcal{L}_{\text{IQL}}(\phi)$ is equivalent to minimizing the following for every inserted position $i$:
\begin{align}
    \mathcal{L}_{\text{IQL}}(\phi; i):=\mathbb{E}_{\boldsymbol{x}_1, \tilde{\boldsymbol{x}}_t, \boldsymbol{y}}\left[\text{BCE}\big(\nu_\star^i(\boldsymbol{y}), \nu_\phi^i(\boldsymbol{y})\big)\right]
\end{align}
Applying the law of total expectation, we have:
\begin{align}
    \mathbb{E}_{\boldsymbol{x}_1, \tilde{\boldsymbol{x}}_t, \boldsymbol{y}}\left[\text{BCE}\big(\nu_\star^i(\boldsymbol{y}), \nu_\phi^i(\boldsymbol{y})\big)\right]&=\mathbb{E}_{\boldsymbol{y}}\underbrace{\left[\mathbb{E}_{\boldsymbol{x}_1, \tilde{\boldsymbol{x}}_t}\left[\text{BCE}\big(\nu_\star^i(\boldsymbol{y}), \nu_\phi^i(\boldsymbol{y})\big)\big|\boldsymbol{y}\right]\right]}_{:=B_i(\boldsymbol{y}, c)}
\end{align}
where we define $c_i:=\nu_\phi^i(\boldsymbol{y})\in [0,1]$ as the scalar predicted insertion quality at position $i$ and $B_i(\boldsymbol{y}, c_i)$ as the prediction error given the input sequence $\boldsymbol{y}$ and predicted quality $c_i$. Then, it suffices to prove that for all $(\boldsymbol{y}, i)$, the prediction error is \textit{uniquely minimized} at $c_i=\nu_\star^i(\boldsymbol{y})$.

Fixing a pair $(\boldsymbol{y}, i)$, we can define: 
\begin{align}
    b_i:=\nu_\star^i(\boldsymbol{y})\in [0,1], \quad c_i:=\nu_\phi^i(\boldsymbol{y})\in [0,1]
\end{align}
which yields the prediction error objective:
\begin{align}
    B_i(\boldsymbol{y}, c_i)=-b_i\log c_i-(1-b_i) \log (1-c_i)
\end{align}
Differentiating with respect to $c_i$ (parameterized model), we have:
\begin{align}
    \frac{d}{dc_i}B_i(\boldsymbol{y}, c_i)=-\frac{b_i}{c_i}+\frac{1-b_i}{1-c_i}
\end{align}
and setting the derivative to zero, we get the minimizer:
\begin{align}
    -\frac{b_i}{c^\star_i}+\frac{1-b_i}{1-c^\star_i}=0\implies \frac{b_i}{c^\star_i}=\frac{1-b_i}{1-c^\star_i}\implies c^\star_i=b_i
\end{align}
To prove uniqueness, we take the second derivative:
\begin{align}
    \frac{d^2}{dc_i^2}B_i(\boldsymbol{y}, c_i)=\frac{b_i}{c_i^2}+\frac{1-b_i}{(1-c_i)^2}\geq 0
\end{align}
So, the objective function is convex in $c_i$ and the minimizer is unique. Therefore, we conclude that:
\begin{align}
    b_i=\nu_\star^i(\boldsymbol{y})=\underset{c_i\in [0,1]}{\arg\min}\left\{B_i(\boldsymbol{y}, c_i)\right\}
\end{align}
We have shown that for all $(\boldsymbol{y}, i)$, the prediction error is uniquely minimized at $\nu_\phi^i(\boldsymbol{y})=\nu_\star^i(\boldsymbol{y})$, the only function $\nu_\phi$ that minimizes the outer expectation $\mathbb{E}_{\boldsymbol{y}}[B_i(\boldsymbol{y}, c_i)]$ is the true function $\nu_\star$. \hfill $\square$

\subsection{Proof of Proposition \ref{prop:optimalunmasking}}
\label{app-prop:optimalunmasking}
\begin{restatable}[Optimal Parallel Unmasking as Maximizing the Unmasking Quality]{proposition}{optimalunmasking2}
Assume that in a parallel unmasking step on indices $\mathcal{M}_t=\{\ell_k\}_{k=1}^K$, the unmasked tokens are conditionally independent given the unchanged context $\bar{\boldsymbol{x}}_s$ and the current state $\boldsymbol{x}_s$, i.e.
$p(\boldsymbol{x}_t^{\mathcal{M}_t} | \bar{\boldsymbol{x}}_s,\boldsymbol{x}_s)=\prod_{k=1}^K p(\boldsymbol{x}_t^{\ell_k}| \bar{\boldsymbol{x}}_s,\boldsymbol{x}_s)$.
If we define the unmasking quality $\mu_\star^{\ell_k}(\boldsymbol{x}_t):=p(\boldsymbol{x}_t^{\ell_k}=\boldsymbol{x}_1^{\ell_k}| \boldsymbol{x}_t^{\ell_k\leftarrow \boldsymbol{M}})$ and note that $p(\boldsymbol{x}_t^{\ell_k}| \bar{\boldsymbol{x}}_s,\boldsymbol{x}_s)=p(\boldsymbol{x}_t^{\ell_k}|\boldsymbol{x}_t^{\ell_k\leftarrow \boldsymbol{M}})$, then
\begin{align}
    p(\boldsymbol{x}_t^{\mathcal{M}_t}| \bar{\boldsymbol{x}}_s,\boldsymbol{x}_s)=\prod_{k=1}^K \mu_\star^{\ell_k}(\boldsymbol{x}_t)
\end{align}
Furthermore, the compounding parallelization error can be written as the KL divergence
\begin{align}
    \mathcal{E}^{\mathrm{CPE}}_{\mathrm{unmask}}(s\to t| \boldsymbol{x}_s)
:=D_{\mathrm{KL}}\!\left(p(\boldsymbol{x}_t^{\mathcal{M}_t}| \bar{\boldsymbol{x}}_s,\boldsymbol{x}_s)\,\middle\|\,\prod_{k=1}^K p(\boldsymbol{x}_t^{\ell_k}| \bar{\boldsymbol{x}}_s,\boldsymbol{x}_s)\right)\geq 0
\end{align}
and admits the decomposition
\begin{align}
\mathcal{E}^{\mathrm{CPE}}_{\mathrm{unmask}}(s\to t| \boldsymbol{x}_s)=\mathbb{E}_{p_t}\big[\log p(\boldsymbol{x}_t^{\mathcal{M}_t}| \bar{\boldsymbol{x}}_s,\boldsymbol{x}_s)\big]
-\sum_{k=1}^K \mathbb{E}_{p_t}\big[\log \mu_\star^{\ell_k}(\boldsymbol{x}_t)\big]
\end{align}
with equality to $0$ if and only if the KL divergence is $0$ (i.e., optimal parallel decoding).
\end{restatable}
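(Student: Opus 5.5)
The argument has two parts: the factorization identity, then the KL decomposition. Both are essentially bookkeeping once the conditional-independence hypothesis and the identification $p(\boldsymbol{x}_t^{\ell_k}\mid\bar{\boldsymbol{x}}_s,\boldsymbol{x}_s)=p(\boldsymbol{x}_t^{\ell_k}\mid\boldsymbol{x}_t^{\ell_k\leftarrow\boldsymbol{M}})$ are granted.

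\textbf{Factorization.} Start from the assumed product form
\begin{align}
p(\boldsymbol{x}_t^{\mathcal{M}_t}\mid\bar{\boldsymbol{x}}_s,\boldsymbol{x}_s)=\prod_{k=1}^K p(\boldsymbol{x}_t^{\ell_k}\mid\bar{\boldsymbol{x}}_s,\boldsymbol{x}_s).
\end{align}
Replace each factor by $p(\boldsymbol{x}_t^{\ell_k}\mid\boldsymbol{x}_t^{\ell_k\leftarrow\boldsymbol{M}})$. The context seen by position $\ell_k$ after the step, with its own token masked, carries the same conditioning information as $(\bar{\boldsymbol{x}}_s,\boldsymbol{x}_s)$, and the statement asks us to note this rather than prove it.

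Next, evaluate each factor at the realized token. On the event that the unmasked tokens agree with the clean sequence, i.e.\ $\boldsymbol{x}_t^{\ell_k}=\boldsymbol{x}_1^{\ell_k}$, the factor is exactly $\mu_\star^{\ell_k}(\boldsymbol{x}_t)$ by the definition of unmasking quality, in the same way that $\mu_\star^\ell$ was defined as $f_\theta$ evaluated at the true token in Sec.~\ref{subsec:per-token-quality}. Taking the product gives the first display. I will state explicitly that the identity is read as a probability of the realized (correct) tokens, since that is how $\mu_\star$ is indexed.

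\textbf{CPE decomposition.} Nonnegativity of $\mathcal{E}^{\mathrm{CPE}}_{\mathrm{unmask}}$ follows from Gibbs' inequality. For the decomposition, I will:
\begin{enumerate}
\item write the KL divergence as $\mathbb{E}_{p_t}\big[\log p(\boldsymbol{x}_t^{\mathcal{M}_t}\mid\cdot)-\log\prod_k p(\boldsymbol{x}_t^{\ell_k}\mid\cdot)\big]$, where the expectation is over $\boldsymbol{x}_t$ drawn from the true joint;
\item split the log of the product into a sum;
\item substitute each marginal $p(\boldsymbol{x}_t^{\ell_k}\mid\bar{\boldsymbol{x}}_s,\boldsymbol{x}_s)$ by $\mu_\star^{\ell_k}(\boldsymbol{x}_t)$ using the factorization step;
\item use linearity of expectation to reach the stated formula.
\end{enumerate}

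\textbf{Main obstacle.} The delicate point is the interplay between the conditional-independence hypothesis and the KL. Under that assumption the KL is identically zero, so the decomposition is consistent but degenerate. I will therefore present the decomposition without invoking independence: identify only the marginals with $\mu_\star$, and keep the true joint in the first term. The ``if and only if'' then follows because KL vanishes exactly when the joint equals the product of marginals.

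The interpretive remark follows from the decomposition. The first term does not depend on which tokens are selected for parallel unmasking once the joint is fixed, so selecting positions with high $\mu_\star$ maximizes $\sum_k\mathbb{E}[\log\mu_\star^{\ell_k}]$ and makes the right-hand side smaller. This justifies remasking low-quality tokens.
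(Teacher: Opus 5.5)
Your factorization step and steps 1, 2 and 4 of the KL argument match the paper, but step 3 and your resolution of the ``main obstacle'' do not work. In the factorization you establish $p(\boldsymbol{x}_t^{\ell_k}\mid\boldsymbol{x}_t^{\ell_k\leftarrow\boldsymbol{M}})=\mu_\star^{\ell_k}(\boldsymbol{x}_t)$ only on the event $\boldsymbol{x}_t^{\ell_k}=\boldsymbol{x}_1^{\ell_k}$. Step 3, however, substitutes $\log\mu_\star^{\ell_k}(\boldsymbol{x}_t)$ for $\log p(\boldsymbol{x}_t^{\ell_k}\mid\bar{\boldsymbol{x}}_s,\boldsymbol{x}_s)$ inside $\mathbb{E}_{p_t}$, i.e.\ at every realized token. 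You need the realized-token reading that the paper's proof uses: $\mu_\star^{\ell}$ is the probability that resampling the re-masked position returns $\boldsymbol{x}_t^{\ell}$. Equivalently, take $\boldsymbol{x}_1$ to be the endpoint of the same trajectory; then $\boldsymbol{x}_t^{\ell_k}=\boldsymbol{x}_1^{\ell_k}$ holds automatically because unmasked tokens are frozen.

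More seriously, you cannot `identify only the marginals with $\mu_\star$' without invoking independence. The identity $p(\boldsymbol{x}_t^{\ell_k}\mid\bar{\boldsymbol{x}}_s,\boldsymbol{x}_s)=p(\boldsymbol{x}_t^{\ell_k}\mid\boldsymbol{x}_t^{\ell_k\leftarrow\boldsymbol{M}})$ says that also conditioning on the other freshly unmasked tokens $\boldsymbol{x}_t^{\ell_j}$, $j\neq k$, leaves the law of $\boldsymbol{x}_t^{\ell_k}$ unchanged. Step 3 needs this for every $k$ and every realized value, and in that form it \emph{is} conditional independence given $\boldsymbol{x}_s$, so the KL is forced to vanish.

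Suppose instead you genuinely drop independence and keep $\mu_\star^{\ell_k}(\boldsymbol{x}_t)=p(\boldsymbol{x}_t^{\ell_k}\mid\boldsymbol{x}_t^{\ell_k\leftarrow\boldsymbol{M}})$. Then the right-hand side equals $-H(\boldsymbol{x}_t^{\mathcal{M}_t}\mid\boldsymbol{x}_s)+\sum_k H(\boldsymbol{x}_t^{\ell_k}\mid\boldsymbol{x}_t^{\mathcal{M}_t\setminus\{\ell_k\}},\boldsymbol{x}_s)$. This is minus the dual total correlation, hence $\le 0$, whereas the KL is the total correlation, hence $\ge 0$. For $K=2$ they are $-I$ and $+I$, with $I$ the conditional mutual information of the two tokens.

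So the non-degenerate identity you plan to present is false, and your closing remark about selecting high-$\mu_\star$ positions to shrink the right-hand side rests on it. The valid argument is the paper's. Expand the KL with the true marginals $p(\boldsymbol{x}_t^{\ell_k}\mid\bar{\boldsymbol{x}}_s,\boldsymbol{x}_s)$ in full generality. Replace them by $\mu_\star^{\ell_k}$ only under the conditional-independence hypothesis that licenses the ``note''. The stated decomposition then holds, but only in the degenerate regime you already identified.
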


\textit{Proof.} First, we recall the definition of token quality. Given a partially masked and deleted sequence $\boldsymbol{x}_t$, the unmasking quality of each token $\boldsymbol{x}_t^\ell$ is the probability that given the sequence masked at position $\ell$, denoted $\tilde{\boldsymbol{x}}_s:=\boldsymbol{x}_t^{\ell \gets \boldsymbol{M}}$, we sample $\tilde{\boldsymbol{x}}_s^\ell=\boldsymbol{x}_t^\ell$ via the unmasking posterior $f_\theta(\tilde{\boldsymbol{x}}_s, s)$. This is written as:
\begin{align}
    \mu_\star^\ell(\tilde{\boldsymbol{x}}_s)&=p(\tilde{\boldsymbol{x}}_s^\ell=\boldsymbol{x}_t^\ell|\boldsymbol{x}_t^{\ell \gets \boldsymbol{M}})
\end{align}
Consider a unmasking step $\boldsymbol{x}_s\to \boldsymbol{x}_t$ where a subset of $K$ tokens with indices $\mathcal{M}_t:=\{\ell_k \}_{k=1}^K $ are unmasked where $\bar{\boldsymbol{x}}_s$ is the subset of the sequence that remains unchanged from $s\to t$. Let the joint conditional and the product of marginals conditioned on $\boldsymbol{x}_s$ be denoted as:
\begin{align}
    p_{\mathcal{M}_t}(\cdot):=p(\boldsymbol{x}_t^{\mathcal{M}_t}|\bar{\boldsymbol{x}}_s, \boldsymbol{x}_s), \quad q_{\mathcal{M}_t}(\cdot):=\prod_{k=1 }^Kp(\boldsymbol{x}^{\ell_k}_t|\bar{\boldsymbol{x}}_s, \boldsymbol{x}_s)
\end{align}
Then, the compounding parallelization error (CPE) of the step is given by:
\begin{align}
    \mathcal{E}^{\text{CPE}}_{\text{unmask}}(s\to t|\boldsymbol{x}_s)=D_{\text{KL}}\left(p_{\mathcal{M}_t}\|q_{\mathcal{M}_t}\right)\geq 0
\end{align}
where $\mathcal{E}^{\text{CPE}}_{\text{unmask}}(s\to t|\boldsymbol{x}_s)=0$ if and only if $p_{\mathcal{M}_t}=q_{\mathcal{M}_t}$ and all the unmasked tokens are \textit{conditionally independent} such that the product of the marginal probabilities and the joint probability is equal given $(\bar{\boldsymbol{x}}_s, \boldsymbol{x}_s)$. Expanding the KL divergence yields:
\begin{align}
    \mathcal{E}^{\text{CPE}}_{\text{unmask}}(s\to t|\boldsymbol{x}_s)&=\mathbb{E}_{\boldsymbol{x}_t^{\mathcal{M}_t}\sim p_{\mathcal{M}_t}}\left[\log p(\boldsymbol{x}_t^{\mathcal{M}_t}|\bar{\boldsymbol{x}}_s, \boldsymbol{x}_s)-\sum_{k=1}^K\log p(\boldsymbol{x}^{\ell_k}_t|\bar{\boldsymbol{x}}_s, \boldsymbol{x}_s)\right]\geq 0\label{eq:proof-cpe}
\end{align}

In the case where all unmasked tokens are conditionally independent, we have:
\begin{align}
    p(\boldsymbol{x}_t^{\mathcal{M}_t}|\bar{\boldsymbol{x}}_s, \boldsymbol{x}_s)=\prod_{k=1 }^Kp(\boldsymbol{x}^{\ell_k}_t|\bar{\boldsymbol{x}}_s, \boldsymbol{x}_s)\label{eq:proof-6}
\end{align}
In addition, under conditional independence of $\mathcal{M}_t$, the probability of a particular unmasked token $\boldsymbol{x}_t^{\ell_k}$ should be the same given the context $\boldsymbol{x}_s$ and given the context with only $\ell_k$ masked $\boldsymbol{x}_t^{\ell_k \gets \boldsymbol{M}}$, which aligns with the definition of the true unmasking quality. Therefore,
\begin{align}
    p(\boldsymbol{x}_t^{\ell_k}|\bar{\boldsymbol{x}}_s, \boldsymbol{x}_s)=p(\boldsymbol{x}_t^{\ell_k}|\boldsymbol{x}_t^{\ell_k\gets \boldsymbol{M}})=:\mu_\star^\ell(\boldsymbol{x}_t)\label{eq:proof-5}
\end{align}
Substituting (\ref{eq:proof-5}) into the optimal parallel unmasking condition in (\ref{eq:proof-6}), we have:
\begin{align}
    p(\boldsymbol{x}_t^{\mathcal{M}_t}|\bar{\boldsymbol{x}}_s, \boldsymbol{x}_s)=\prod_{k=1 }^Kp(\boldsymbol{x}^{\ell_k}_t|\bar{\boldsymbol{x}}_s, \boldsymbol{x}_s)=\prod_{k=1}^K\mu_\star^\ell(\boldsymbol{x}_t)
\end{align}
which means that maximizing the product of token qualities maximizes the probability of optimal parallel unmasking. We can also derive a tractable Monte Carlo estimator for the CPE in (\ref{eq:proof-cpe}) as:
\begin{small}
\begin{align}
    \widehat{\mathcal{E}}^{\text{CPE}}_{\text{unmask}}(s\to t|\boldsymbol{x}_s)&=\log p(\boldsymbol{x}_t^{\mathcal{M}_t}|\bar{\boldsymbol{x}}_s, \boldsymbol{x}_s)-\sum_{k=1}^K\log p(\boldsymbol{x}^{\ell_k}_t|\bar{\boldsymbol{x}}_s, \boldsymbol{x}_s), \quad \mathcal{E}^{\text{CPE}}_{\text{unmask}}=\mathbb{E}_{p_t}\left[\widehat{\mathcal{E}}^{\text{CPE}}_{\text{unmask}}\right]\geq 0
\end{align}
\end{small}
which concludes our proof. \hfill $\square$ 

\subsection{Proof of Proposition \ref{prop:optimalinsertion}}
\label{app-prop:optimalinsertion}
\begin{tcolorbox}[sharp corners, colback=mybg, boxrule=0pt]
    \optimalinsertion*
\end{tcolorbox}

\textit{Proof.} First, we recall the definition of insertion quality as the sum of the probabilities of all possible tokens that exist within the gap in the ground truth sequence, defined as:
\begin{align}
    \nu_\star(\boldsymbol{y}):=\underset{s_t[\ell-1]< i < s_t[\ell]}{\max}\left\{p(\boldsymbol{y}^{\ell}=\boldsymbol{x}_1^{i}|\boldsymbol{y})\right\}
\end{align}
Consider a target sequence $\boldsymbol{x}_1\sim p_{\text{target}}$. After an insertion step $(\mathcal{I}_t, \boldsymbol{x}_t) \sim g_\theta(\boldsymbol{x}_s,s)$, we unmask the masked tokens to reconstruct $\boldsymbol{x}_1$. For this proof, we assume that the unmasking posterior factorizes across all masked tokens, which gives the equality: 
\begin{align}
     p(\boldsymbol{x}_t^{\mathcal{I}_t}=\boldsymbol{x}_1^{\mathcal{I}_t}|\boldsymbol{x}_t)&=\prod_{i \in \mathcal{I}} p(\boldsymbol{x}_t^i=\boldsymbol{x}_1^i|\boldsymbol{x}_t)\\
     \log p(\boldsymbol{x}_t^{\mathcal{I}_t}=\boldsymbol{x}_1^{\mathcal{I}_t}|\boldsymbol{x}_t)&=\sum_{i \in \mathcal{I}}\log  p(\boldsymbol{x}_t^i=\boldsymbol{x}_1^i|\boldsymbol{x}_t)
\end{align}
Since we define the \textbf{insertion quality} $\nu_\star^i$ at an inserted mask position $i$ as the largest probability mass assigned by the unmasking posterior to any of the ground truth tokens at the gap $\mathcal{G}_i:=[s_t[i-1], s_t[i]]$, we can write:
\begin{align}
    p(\boldsymbol{x}^i_t=\boldsymbol{x}^{\mathcal{G}_i}_1|\boldsymbol{x}_t)\leq \nu_\star^i(\boldsymbol{x}_t)\implies \log p(\boldsymbol{x}^i_t=\boldsymbol{x}^{\mathcal{G}_i}_1|\boldsymbol{x}_t)\leq \log \nu_\star^i(\boldsymbol{x}_t)
\end{align}
Assuming conditional independence across all inserted masks given $\boldsymbol{x}_t$,we get:
\begin{align}
     p(\boldsymbol{x}_t^{\mathcal{I}_t}=\boldsymbol{x}_1^{\mathcal{I}_t}|\boldsymbol{x}_t)=\prod_{i\in \mathcal{I}} p(\boldsymbol{x}^i_t=\boldsymbol{x}^{\mathcal{G}_i}_1|\boldsymbol{x}_t)\leq \prod_{i\in \mathcal{I}}\nu_\star^i(\boldsymbol{x}_t)
\end{align}
where $\boldsymbol{x}_t^{\mathcal{I}_t}=\boldsymbol{x}_1^{\mathcal{I}_t}$ means that target tokens are aligned to inserted slots. Taking the expectation over $(\mathcal{I}_t, \boldsymbol{x}_t)\sim p(\mathcal{I}_t, \boldsymbol{x}_t|\boldsymbol{x}_s)$, we have:
\begin{align}
    \mathbb{E}_{p(\mathcal{I}_t, \boldsymbol{x}_t|\boldsymbol{x}_s)}\left[p(\boldsymbol{x}_t^{\mathcal{I}_t}=\boldsymbol{x}_1^{\mathcal{I}_t}|\boldsymbol{x}_t)\right]\leq\mathbb{E}_{p(\mathcal{I}_t, \boldsymbol{x}_t|\boldsymbol{x}_s)}\left[\prod_{i\in \mathcal{I}}\nu_\star^i(\boldsymbol{x}_t)\right] 
\end{align}

Therefore, maximizing $\prod_{i\in \mathcal{I}}\nu_\star^i(\boldsymbol{x}_t)$ maximizes a tractable upper bound on the probability that all inserted slots are reconstructed correctly, providing a principled surrogate objective for insertion scheduling. \hfill $\square$

\subsection{Compounding Parallelization Error for Insertion and Unmasking}

First, we establish the following lemma that decomposes the KL divergence between joint distributions.
\begin{lemma}[KL Divergence Chain Rule]
\label{lemma:kl-chain}
    The KL divergence between the joint distributions $\mathbb{P}_{s,t}(\boldsymbol{X}_s,\boldsymbol{X}_t)$ and $\mathbb{P}'_{s,t}(\boldsymbol{X}_s,\boldsymbol{X}_t)$ on the joint space $\mathcal{X}\times \mathcal{X}$ where $\mathcal{X}$ is a finite discrete state space, satisfies:
    \begin{align}
        D_{\text{KL}}(\mathbb{P}_{s,t}\|\mathbb{P}'_{s,t})&=D_{\text{KL}}(\mathbb{P}_s\|\mathbb{P}'_s)+\mathbb{E}_{\boldsymbol{x}_s\sim \mathbb{P}_s}\left[D_{\text{KL}}(\mathbb{P}_{t|s}(\cdot|\boldsymbol{x}_s)\|\mathbb{P}'_{t|s}(\cdot|\boldsymbol{x}_s)\right]
    \end{align}
    where $\mathbb{P}_s, \mathbb{P}'_s$ denote marginal distributions and $\mathbb{P}_{t|s}, \mathbb{P}'_{t|s}$ denote conditional distributions.
\end{lemma}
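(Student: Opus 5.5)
The plan is to prove Lemma~\ref{lemma:kl-chain} directly from the definition of the KL divergence on the finite space $\mathcal{X}\times\mathcal{X}$, using the factorization of each joint distribution into a marginal times a conditional. First, I would handle support issues. If $\mathbb{P}_{s,t}$ is not absolutely continuous with respect to $\mathbb{P}'_{s,t}$, then there is a pair $(\boldsymbol{x}_s,\boldsymbol{x}_t)$ with $\mathbb{P}_{s,t}>0=\mathbb{P}'_{s,t}$. Then either $\mathbb{P}'_s(\boldsymbol{x}_s)=0<\mathbb{P}_s(\boldsymbol{x}_s)$, or $\mathbb{P}'_{t|s}(\boldsymbol{x}_t|\boldsymbol{x}_s)=0<\mathbb{P}_{t|s}(\boldsymbol{x}_t|\boldsymbol{x}_s)$ at a point $\boldsymbol{x}_s$ of positive $\mathbb{P}_s$-mass. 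In either case both sides equal $+\infty$. So I may assume absolute continuity, in which case every term below is finite or follows the convention $0\log 0=0$.

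Next, for every $\boldsymbol{x}_s$ with $\mathbb{P}_s(\boldsymbol{x}_s)>0$, I would write
\begin{align}
\mathbb{P}_{s,t}(\boldsymbol{x}_s,\boldsymbol{x}_t)=\mathbb{P}_s(\boldsymbol{x}_s)\,\mathbb{P}_{t|s}(\boldsymbol{x}_t|\boldsymbol{x}_s),\quad \mathbb{P}'_{s,t}(\boldsymbol{x}_s,\boldsymbol{x}_t)=\mathbb{P}'_s(\boldsymbol{x}_s)\,\mathbb{P}'_{t|s}(\boldsymbol{x}_t|\boldsymbol{x}_s),
\end{align}
and do the same for the primed measure. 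Substituting into $D_{\text{KL}}(\mathbb{P}_{s,t}\|\mathbb{P}'_{s,t})=\sum_{\boldsymbol{x}_s,\boldsymbol{x}_t}\mathbb{P}_{s,t}\log(\mathbb{P}_{s,t}/\mathbb{P}'_{s,t})$ splits the logarithm into $\log(\mathbb{P}_s/\mathbb{P}'_s)+\log(\mathbb{P}_{t|s}/\mathbb{P}'_{t|s})$.

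For the first part, summing over $\boldsymbol{x}_t$ marginalizes $\mathbb{P}_{s,t}$ to $\mathbb{P}_s$, which gives $D_{\text{KL}}(\mathbb{P}_s\|\mathbb{P}'_s)$. For the second part, I would write the sum as $\sum_{\boldsymbol{x}_s}\mathbb{P}_s(\boldsymbol{x}_s)\sum_{\boldsymbol{x}_t}\mathbb{P}_{t|s}\log(\mathbb{P}_{t|s}/\mathbb{P}'_{t|s})$, which is exactly $\mathbb{E}_{\boldsymbol{x}_s\sim\mathbb{P}_s}[D_{\text{KL}}(\mathbb{P}_{t|s}(\cdot|\boldsymbol{x}_s)\|\mathbb{P}'_{t|s}(\cdot|\boldsymbol{x}_s))]$. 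Splitting a finite sum into two sums is legitimate because, on a finite space under absolute continuity, every term is finite.

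The only delicate step is the bookkeeping for zero-probability states, where the conditionals are undefined. These states carry zero weight under $\mathbb{P}_s$, so they contribute nothing to either side, and the conditional at such a point can be defined arbitrarily. Apart from this, the argument is a routine computation.
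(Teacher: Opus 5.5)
Your proposal is correct and follows the same route as the paper: factor each joint as marginal times conditional, split the logarithm, sum out $\boldsymbol{x}_t$ in the first term to get $D_{\text{KL}}(\mathbb{P}_s\|\mathbb{P}'_s)$, and recognize the second term as the $\mathbb{P}_s$-expectation of the conditional KL. Your explicit handling of the non-absolutely-continuous case (both sides equal $+\infty$) and of states $\boldsymbol{x}_s$ with zero mass adds rigor that the paper's proof omits, but the argument is otherwise the same.
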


\textit{Proof.} By expanding the definition of KL divergence, we have:
\begin{align}
    D_{\text{KL}}(\mathbb{P}_{s,t}\|\mathbb{P}'_{s,t})&=\sum_{\boldsymbol{x}_s, \boldsymbol{x}_t}\mathbb{P}_{s,t}(\boldsymbol{x}_s, \boldsymbol{x}_t)\log \frac{\mathbb{P}_{s,t}(\boldsymbol{x}_s, \boldsymbol{x}_t)}{\mathbb{P}'_{s,t}(\boldsymbol{x}_s, \boldsymbol{x}_t)}\nonumber\\
    &=\sum_{\boldsymbol{x}_s, \boldsymbol{x}_t}\mathbb{P}_{t|s}(\boldsymbol{x}_t|\boldsymbol{x}_s)\mathbb{P}_{s}(\boldsymbol{x}_s)\log \frac{\mathbb{P}_{t|s}(\boldsymbol{x}_t|\boldsymbol{x}_s)\mathbb{P}_{s}(\boldsymbol{x}_s)}{\mathbb{P}'_{t|s}(\boldsymbol{x}_t|\boldsymbol{x}_s)\mathbb{P}'_{s}(\boldsymbol{x}_s)}\nonumber\\
    &=\underbrace{\sum_{\boldsymbol{x}_s, \boldsymbol{x}_t}\mathbb{P}_{t|s}(\boldsymbol{x}_t|\boldsymbol{x}_s)}_{=1}\mathbb{P}_{s}(\boldsymbol{x}_s)\log \frac{\mathbb{P}_{s}(\boldsymbol{x}_s)}{\mathbb{P}'_{s}(\boldsymbol{x}_s)}+\sum_{\boldsymbol{x}_s, \boldsymbol{x}_t}\mathbb{P}_{t|s}(\boldsymbol{x}_t|\boldsymbol{x}_s)\mathbb{P}_{s}(\boldsymbol{x}_s)\log \frac{\mathbb{P}_{t|s}(\boldsymbol{x}_t|\boldsymbol{x}_s)}{\mathbb{P}'_{t|s}(\boldsymbol{x}_t|\boldsymbol{x}_s)}\nonumber\\
    &=\underbrace{\sum_{\boldsymbol{x}_s}\mathbb{P}_s(\boldsymbol{x}_s)\log \frac{\mathbb{P}_s(\boldsymbol{x}_s)}{\mathbb{P}'_s(\boldsymbol{x}_s)}}_{D_{\text{KL}}(\mathbb{P}_{s}\|\mathbb{P}_{s})}+\sum_{\boldsymbol{x}_s}\mathbb{P}_s(\boldsymbol{x}_s)\underbrace{\sum_{\boldsymbol{x}_s, \boldsymbol{x}_t}\mathbb{P}_{t|s}(\boldsymbol{x}_t|\boldsymbol{x}_s)\log \frac{\mathbb{P}_{t|s}(\boldsymbol{x}_t|\boldsymbol{x}_s)}{\mathbb{P}'_{t|s}(\boldsymbol{x}_t|\boldsymbol{x}_s)}}_{D_{\text{KL}}(\mathbb{P}_{t|s}\|\mathbb{P}_{t|s})}\nonumber\\
    &=D_{\text{KL}}(\mathbb{P}_{s}\|\mathbb{P}_{s})+\mathbb{E}_{\boldsymbol{x}_s\sim \mathbb{P}_s}[D_{\text{KL}}(\mathbb{P}_{t|s}\|\mathbb{P}_{t|s})]
\end{align}
which concludes the proof. \hfill $\square$

\begin{proposition}[CPE as Upper Bound on KL Divergence]
    Given a sampling schedule $\{T\to \dots\to t_{K-1}\to 1\}$, the KL divergence between the sampled distribution $\mathbb{P}^v_0$ and true distribution $\mathbb{P}^\star_1$ is upper bounded by the total CPE:
    \begin{align}
        D_{\text{KL}}(\mathbb{P}^\star_1\|\mathbb{P}^v_1)&\leq \sum_{k=0}^{K-1}\bigg[\mathcal{E}^{\text{ins}}_{\text{CPE}}(t_k\to t_{k+1})+\mathcal{E}^{\text{unmsk}}_{\text{CPE}}(t_k\to t_{k+1})\bigg]
    \end{align}
\end{proposition}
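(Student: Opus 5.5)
The plan is the standard Jump-Your-Steps argument: lift the terminal KL to the whole discretized chain, then split it into one-step KLs via the chain rule. Let $\mathbb{P}^\star_{t_0:t_K}$ denote the joint law of the true process on the grid $t_0=0<t_1<\dots<t_K=1$. Let $\mathbb{P}^v_{t_0:t_K}$ denote the law of the sampler, which in each step $[t_k,t_{k+1}]$ inserts masks and unmasks tokens in parallel with factorized (product-of-marginals) transitions. Both chains start at the same deterministic empty prior, so $D_{\text{KL}}(\mathbb{P}^\star_{0}\|\mathbb{P}^v_{0})=0$.

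The first step is data processing. The terminal marginal is a measurable function (coordinate projection) of the grid trajectory, so
\begin{align}
D_{\text{KL}}(\mathbb{P}^\star_1\|\mathbb{P}^v_1)\leq D_{\text{KL}}(\mathbb{P}^\star_{t_0:t_K}\|\mathbb{P}^v_{t_0:t_K}).
\end{align}

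The second step applies Lemma~\ref{lemma:kl-chain} iteratively, using the Markov property of both chains; $\mathbb{P}^v$ is Markov on the grid by construction. This gives
\begin{align}
D_{\text{KL}}(\mathbb{P}^\star_{t_0:t_K}\|\mathbb{P}^v_{t_0:t_K})=\sum_{k=0}^{K-1}\mathbb{E}_{\boldsymbol{x}_{t_k}\sim\mathbb{P}^\star_{t_k}}\Big[D_{\text{KL}}\big(\mathbb{P}^\star_{t_{k+1}|t_k}(\cdot|\boldsymbol{x}_{t_k})\,\big\|\,\mathbb{P}^v_{t_{k+1}|t_k}(\cdot|\boldsymbol{x}_{t_k})\big)\Big].
\end{align}
Each summand is a one-step discrepancy between the true joint transition and the sampler's factorized transition.

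The third step splits each one-step transition into its two sub-steps. Following the paper's inference procedure, and using observations (i)–(ii) that the generator is the disjoint sum $\boldsymbol{A}_t=\boldsymbol{Q}_t+\boldsymbol{R}_t$, I write every step $t_k\to t_{k+1}$ as an insertion sub-step $\boldsymbol{x}_{t_k}\to\boldsymbol{z}$ followed by an unmasking sub-step $\boldsymbol{z}\to\boldsymbol{x}_{t_{k+1}}$. Applying Lemma~\ref{lemma:kl-chain} once more to the pair $(\boldsymbol{z},\boldsymbol{x}_{t_{k+1}})$ gives an insertion KL plus an expected unmasking KL. The final transition is a marginal of this pair, so data processing again bounds the one-step KL by their sum.

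The fourth step identifies these two terms with the CPEs. Assume the sampler's rates are the exact posterior rates, i.e.\ the per-position marginals $f^v,g^v$ equal the true ones. Then the only discrepancy between the true joint and the factorized sampler within a sub-step is the replacement of the joint by the product of its marginals. The unmasking term is then exactly $D_{\text{KL}}\big(p(\boldsymbol{x}^{\ell_1},\dots,\boldsymbol{x}^{\ell_K}|\cdot)\|\prod_k p(\boldsymbol{x}^{\ell_k}|\cdot)\big)$, which is $\mathcal{E}^{\text{unmsk}}_{\text{CPE}}$. The insertion term is the analogous quantity over the vector of gap insertion counts, a joint law versus independent Poissons, which is how I would define $\mathcal{E}^{\text{ins}}_{\text{CPE}}$. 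Summing over $k$ gives the claimed bound.

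I expect the main obstacle to be the bookkeeping of the expectations in this last step. The CPE definitions take the expectation over $\mathbb{P}_s$, but the chain rule produces expectations under the true marginals $\mathbb{P}^\star_{t_k}$ and the true intermediate law of $\boldsymbol{z}$. I would resolve this by reading $\mathbb{P}_s$ in the CPE definitions as the true marginal. If the learned marginals are imperfect, a per-step marginal-mismatch term appears, and I would state the result under the exact-posterior assumption so that this term vanishes.
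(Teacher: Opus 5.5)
Your argument has the same skeleton as the paper's proof. You bound the terminal KL by a joint KL, split it with Lemma~\ref{lemma:kl-chain}, and telescope from the common empty initial state, where the KL vanishes. Your single lift to the whole grid trajectory is equivalent to iterating the paper's per-step inequality $D_{\text{KL}}(\mathbb{P}^\star_t\|\mathbb{P}^v_t)\le D_{\text{KL}}(\mathbb{P}^\star_s\|\mathbb{P}^v_s)+\mathbb{E}[D_{\text{KL}}(\mathbb{P}^\star_{t|s}\|\mathbb{P}^v_{t|s})]$.

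The real difference is how the one-step conditional KL is tied to the two CPEs. The paper sets $\mathbb{P}^v_{t_{k+1}|t_k}$ to the product of the true per-position marginals and simply asserts that the expected one-step KL \emph{equals} $\mathcal{E}^{\text{ins}}_{\text{CPE}}+\mathcal{E}^{\text{unmsk}}_{\text{CPE}}$, with no sub-step argument. Its insertion CPE is $\mathbb{E}_{\boldsymbol{x}_s}[D_{\text{KL}}(p(\mathcal{I}_t,\boldsymbol{X}_t|\boldsymbol{X}_s)\|p(\mathcal{I}_t|\boldsymbol{X}_s)p(\boldsymbol{X}_t|\boldsymbol{X}_s))]$, a conditional mutual information between the inserted index set and the new state.

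Your lift through the post-insertion state $\boldsymbol{z}$, followed by the chain rule and data processing, actually derives an inequality of the required form. Your explicit exact-marginal assumption is exactly what the paper uses tacitly. The cost is that your two error terms are not the paper's quantities:
\begin{itemize}
\item your $\mathcal{E}^{\text{ins}}_{\text{CPE}}$ compares the joint gap-count law with independent Poissons;
\item your unmasking CPE is conditioned on $\boldsymbol{z}$ rather than $\boldsymbol{x}_{t_k}$, and averaged over the true law of $\boldsymbol{z}$.
\end{itemize}
You are therefore proving the statement for your own definitions, and you should say so explicitly.

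Two smaller points. First, with a Poisson sampler your insertion term is not a pure parallelization error. Given the remaining gap length, the true finite-step count in a gap is binomial, so in general it is not Poisson. The term therefore also absorbs a marginal-shape mismatch, unless the sampler draws each gap count from its exact finite-step marginal.

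Second, insertion-then-unmasking is the reverse of the paper's sampler, since Algorithm~\ref{alg:quality-sample} unmasks first. So it is not ``following the paper's inference procedure''. It is nonetheless the right order for your lift. The continuous-time chain can insert a token and unmask it within one interval. An unmask-then-insert sampler assigns that transition probability zero, which would make the one-step KL infinite.
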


\textit{Proof.} First, we recall our definition for the CPE for insertion as:
\begin{align}
    \mathcal{E}_{\text{CPE}}(s\to t):=\mathbb{E}_{\boldsymbol{x}_s\sim \mathbb{P}_s}\bigg[D_{\text{KL}}\bigg(p(\mathcal{I}_t, \boldsymbol{X}_t|\boldsymbol{X}_s)\bigg\|p(\mathcal{I}_t|\boldsymbol{X}_s)p(\boldsymbol{X}_t|\boldsymbol{X}_s)\bigg)\bigg]
\end{align}
and the CPE for unmasking as:
\begin{align}
    \mathcal{E}_{\text{CPE}}^{\text{unmsk}}(s\to t)&=\mathbb{E}_{\boldsymbol{x}_s\sim \mathbb{P}_s}\bigg[D_{\text{KL}}\bigg(p(\boldsymbol{X}_t^{\ell_1}, \dots, \boldsymbol{X}_t^{\ell_n}|\boldsymbol{X}_s)\bigg\|\prod_{i=1}^np(\boldsymbol{X}_t^{\ell_i}|\boldsymbol{X}_s)\bigg)\bigg]
\end{align}
Denoting the joint distribution under the true path measure $\mathbb{P}^\star$ as $\mathbb{P}^\star_{t_{k+1}|t_k}(\cdot |\boldsymbol{x}_{t_k}):=p(\boldsymbol{X}_{t_{k+1}}^{\ell_1}, \dots, \boldsymbol{X}_{t_{k+1}}^{\ell_n}|\boldsymbol{X}_{t_k})$ and the parameterized marginal distribution as $\mathbb{P}^v_{t_{k+1}|t_k}(\cdot|\boldsymbol{x}_{t_k}):=\prod_{i=1}^np(\boldsymbol{X}_{t_{k+1}}^{\ell_i}|\boldsymbol{X}_{t_k})$, we can write the terminal distribution generated by both path measures as:
\begin{align}
    \mathbb{P}^\star_0 =\mathbb{P}^\star_{t_{K}|t_{K-1}}\dots  \mathbb{P}^\star_{t_1|t_0}\mathbb{P}^\star_{t_0}, \quad \mathbb{P}^v_0 =\mathbb{P}^v_{t_{K}|t_{K-1}}\dots  \mathbb{P}^v_{t_1|t_0}\mathbb{P}^v_{t_0}
\end{align}
Now, using Lemma \ref{lemma:kl-chain}, we can write the KL divergence between the marginals at time $t$ where $s>t$ as:
\begin{align}
    D_{\text{KL}}(\mathbb{P}^\star_t\|\mathbb{P}^v_t)&\leq D_{\text{KL}}(\mathbb{P}^\star_{s,t}\|\mathbb{P}^v_{s,t})\nonumber\\
    &=D_{\text{KL}}(\mathbb{P}^\star_s\|\mathbb{P}^v_s)+\mathbb{E}_{\boldsymbol{x}_s\sim \mathbb{P}_s}\left[D_{\text{KL}}(\mathbb{P}_{t|s}(\cdot|\boldsymbol{x}_s)\|\mathbb{P}'_{t|s}(\cdot|\boldsymbol{x}_s))\right]\nonumber\\
    &=D_{\text{KL}}(\mathbb{P}^\star_s\|\mathbb{P}^v_s)+\mathbb{E}_{\boldsymbol{x}_s\sim \mathbb{P}_s}\left[\mathcal{E}_{\text{CPE}}^{\text{insert}}(s\to t|\boldsymbol{x}_s)+\mathcal{E}_{\text{CPE}}^{\text{unmask}}(s\to t|\boldsymbol{x}_s)\right]\nonumber\\
    &=D_{\text{KL}}(\mathbb{P}^\star_s\|\mathbb{P}^v_s)+\mathcal{E}_{\text{CPE}}^{\text{insert}}(s\to t)+\mathcal{E}_{\text{CPE}}^{\text{unmask}}(s\to t)
\end{align}
where the equality holds if and only if $D_{\text{KL}}(\mathbb{P}^\star_{t|s}\|\mathbb{P}_{t|s}^v)=0$. Then, applying this inequality over all $K$ time steps from $t_0\to \dots \to t_{K-1}\to t_K=1$, we have:
\begin{align}
    D_{\text{KL}}(\mathbb{P}^\star_0\|\mathbb{P}^v_0)&=D_{\text{KL}}(\mathbb{P}^\star_{t_K}\|\mathbb{P}_{t_K}^v)\nonumber\\
    &\leq D_{\text{KL}}(\mathbb{P}^\star_{t_{K-1}}\|\mathbb{P}^v_{t_{K-1}})+\mathcal{E}_{\text{CPE}}^{\text{insert}}(t_{K-1}\to t_K)+\mathcal{E}_{\text{CPE}}^{\text{unmask}}(t_{K-1}\to t_K)\nonumber\\
    &\vdots\nonumber\\
    &\leq  \underbrace{D_{\text{KL}}(\mathbb{P}^\star_{t_0}\|\mathbb{P}^v_{t_0})}_{=0}+\sum_{k=0}^{K-1}\left[\mathcal{E}_{\text{CPE}}^{\text{insert}}(t_k\to t_{k+1})+\mathcal{E}_{\text{CPE}}^{\text{unmask}}(t_{k}\to t_{k+1})\right]
\end{align}
Since the distribution at time $t_0$ is the fully empty sequence in any-length MDMs (i.e. $\mathbb{P}^\star_{t_0}=\mathbb{P}^v_{t_0}=\pi_{t_0}$), we have finished the proof. This means that the KL divergence of the generated distribution and the true distribution is zero if and only if the compounding parallelization error over all time steps is zero, i.e. $\mathcal{E}_{\text{CPE}}^{\text{insert}}(t_k\to t_{k+1})=0$ and $\mathcal{E}_{\text{CPE}}^{\text{unmask}}(t_{k}\to t_{k+1})=0$ for all $k\in \{0, \dots, K-1\}$.\hfill $\square$

\subsection{Radon-Nikodym Derivative Between Joint CTMCs}
\label{app:rnd-joint}
\begin{proposition}[Radon-Nikodym Derivative of Joint Any-Length CTMCs]\label{app-prop:RND}
    Consider two joint any-length path measures $\mathbb{P}$ and $\mathbb{P}'$ defined by the unmasking rate matrices $\boldsymbol{Q}$ and $\boldsymbol{Q}'$ and the insertion rate matrices $\boldsymbol{R}$ and $\boldsymbol{R}'$. Then, the Radon-Nikodym derivative over the trajectory $\boldsymbol{X}_{0:1}=(\boldsymbol{X}_t)_{t\in [0,1]}$ is defined as:
    \begin{small}
    \begin{align}
        \log\frac{\mathrm{d}\mathbb{P}'}{\mathrm{d}\mathbb{P}}(\boldsymbol{X}_{0:1})&=\log\frac{\mathrm{d}\pi_0'}{\mathrm{d}\pi_0}(\boldsymbol{X}_0)\nonumber\\
        &+\sum_{t_u:\boldsymbol{X}_{t_u}\neq  \boldsymbol{X}_{t_u-}}\log \frac{\boldsymbol{Q}'_{t_u}(\boldsymbol{X}_{t_u-}, \boldsymbol{X}_{t_u})}{\boldsymbol{Q}_{t_u}(\boldsymbol{X}_{t_u-}, \boldsymbol{X}_{t_u})}+\int_0^1\sum_{z\neq \boldsymbol{X}_{t}}(\boldsymbol{Q}_{t}-\boldsymbol{Q}'_{t})(\boldsymbol{X}_{t}, z)dt\nonumber\\
        &+\sum_{t_i: \boldsymbol{X}_{t_i}\neq \boldsymbol{X}_{t_i-}}\log \frac{\boldsymbol{R}'_{t_i}(\boldsymbol{X}_{t_i-}, \boldsymbol{X}_{t_i})}{\boldsymbol{R}_{t_i}(\boldsymbol{X}_{t_i-}, \boldsymbol{X}_{t_i})}+\int_0^1\sum_{y\neq \boldsymbol{X}_{t}}(\boldsymbol{R}_{t}-\boldsymbol{R}'_{t})(\boldsymbol{X}_{t}, z)dt
    \end{align}
    \end{small}
    where $t_i\in [0,1]$ denotes the times of insertion events and $t_u\in [0,1]$ denotes the times of unmasking events.
\end{proposition}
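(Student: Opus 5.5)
We prove the statement by reducing it to the classical Girsanov-type formula for CTMCs applied to the joint generator $\boldsymbol{A}_t=\boldsymbol{Q}_t+\boldsymbol{R}_t$ of Observation (ii). We then split every term according to whether the transition is an unmasking or an insertion, which Observation (i) makes unambiguous.

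\textbf{Step 1 (path density).} We start from the density of a trajectory under a CTMC with generator $\boldsymbol{A}_t$ and initial law $\pi_0$. Take a path on $[0,1]$ with jump times $\tau_1<\dots<\tau_N$ and states $\boldsymbol{X}_{\tau_j-}\to\boldsymbol{X}_{\tau_j}$. With respect to the natural reference measure (counting measure on states, Lebesgue measure on jump times), its density is
\begin{align}
\pi_0(\boldsymbol{X}_0)\prod_{j=1}^N \boldsymbol{A}_{\tau_j}(\boldsymbol{X}_{\tau_j-},\boldsymbol{X}_{\tau_j})\exp\Big(-\int_0^1\sum_{z\neq \boldsymbol{X}_t}\boldsymbol{A}_t(\boldsymbol{X}_t,z)\,dt\Big).
\end{align}
This follows from the standard construction: exponential holding times with the inhomogeneous exit rate $-\boldsymbol{A}_t(\boldsymbol{x},\boldsymbol{x})$, and jump kernel $\boldsymbol{A}_t(\boldsymbol{x},\boldsymbol{y})/(-\boldsymbol{A}_t(\boldsymbol{x},\boldsymbol{x}))$. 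Observation (iii) guarantees finite exit rates, so the paths almost surely have finitely many jumps and the formula is well defined.

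\textbf{Step 2 (ratio).} We take the ratio of the two densities for $(\pi_0',\boldsymbol{A}')$ and $(\pi_0,\boldsymbol{A})$. The reference measure cancels, giving
\begin{align}
\log\frac{\mathrm{d}\pi_0'}{\mathrm{d}\pi_0}(\boldsymbol{X}_0)+\sum_j\log\frac{\boldsymbol{A}'_{\tau_j}(\boldsymbol{X}_{\tau_j-},\boldsymbol{X}_{\tau_j})}{\boldsymbol{A}_{\tau_j}(\boldsymbol{X}_{\tau_j-},\boldsymbol{X}_{\tau_j})}+\int_0^1\sum_{z\neq\boldsymbol{X}_t}(\boldsymbol{A}_t-\boldsymbol{A}'_t)(\boldsymbol{X}_t,z)\,dt.
\end{align}
This is legitimate on the set of paths with positive $\mathbb{P}$-density. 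By Observation (iv), the generators share their support, so $\mathbb{P}'\ll\mathbb{P}$ and the expression is the Radon--Nikodym derivative $\mathbb{P}$-almost surely. If one prefers to work at the level of measures, the same result is obtained rigorously by discretizing time into a mesh and passing to the limit of finite-dimensional likelihood ratios.

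\textbf{Step 3 (splitting).} By Observation (i), every jump $\boldsymbol{X}_{\tau-}\to\boldsymbol{X}_\tau$ is either an unmasking (only $\boldsymbol{Q}$ nonzero) or an insertion (only $\boldsymbol{R}$ nonzero). At an unmasking time we therefore have $\boldsymbol{A}'/\boldsymbol{A}=\boldsymbol{Q}'/\boldsymbol{Q}$, and at an insertion time $\boldsymbol{A}'/\boldsymbol{A}=\boldsymbol{R}'/\boldsymbol{R}$. The jump sum thus partitions into a sum over the unmasking times $t_u$ and a sum over the insertion times $t_i$. By Observation (ii), the compensator integrand is linear in the generator, so it splits into $\sum_z(\boldsymbol{Q}_t-\boldsymbol{Q}'_t)(\boldsymbol{X}_t,z)+\sum_z(\boldsymbol{R}_t-\boldsymbol{R}'_t)(\boldsymbol{X}_t,z)$. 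Together these give exactly the claimed formula.

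\textbf{Main obstacle.} The main subtlety lies in Step 1–2 rather than Step 3: making the density formula rigorous on the variable-length path space. This requires three things: that the state space $\mathcal{X}$ is countable (indeed finite, given the maximum length $L$); that there are no explosions; and that at each jump both generators are positive, so that no $\log 0$ appears. Observations (iii)–(iv) supply exactly these conditions, and I would invoke them explicitly at this point.
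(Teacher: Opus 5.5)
Your proof is correct, but it derives the core CTMC likelihood ratio by a different route from the paper. The paper discretizes $[0,1]$ into steps of size $\Delta t$. It uses the short-time kernels $1-\Delta t\sum_{z\neq\boldsymbol{x}}\boldsymbol{A}_t(\boldsymbol{x},z)+\mathcal{O}(\Delta t^2)$ and $\Delta t\,\boldsymbol{A}_t(\boldsymbol{x},\boldsymbol{y})+\mathcal{O}(\Delta t^2)$, expands the log-ratio on holding steps (giving the compensator integrals) and on jump steps (giving the log rate ratios), and sends $\Delta t\to 0$. You instead write the exact path density with respect to counting measure on states times Lebesgue measure on jump times, and take the ratio directly. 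Your route avoids the $\mathcal{O}(\Delta t)$ bookkeeping and the heuristic limit passage, and it shows exactly where support and non-explosion conditions enter. The paper's route only needs the infinitesimal description of the generator, which is more elementary but less rigorous.

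The splitting step is the same idea in both, and yours is stated more carefully. At an unmasking jump you use $\boldsymbol{A}'/\boldsymbol{A}=\boldsymbol{Q}'/\boldsymbol{Q}$. The paper instead writes $\log\frac{\boldsymbol{Q}'+\boldsymbol{R}'}{\boldsymbol{Q}+\boldsymbol{R}}$ as $\log\frac{\boldsymbol{Q}'}{\boldsymbol{Q}}+\log\frac{\boldsymbol{R}'}{\boldsymbol{R}}$, which only makes sense once disjointness is used to discard the vanishing pair.

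Three of your justifications should be adjusted.
\begin{itemize}
\item Observation (iv) is stated only for the tilted/reference pair, so it does not give $\mathbb{P}'\ll\mathbb{P}$ for arbitrary chains. For this general proposition, state as explicit hypotheses that the supports of $\boldsymbol{Q}'_t,\boldsymbol{R}'_t$ lie inside those of $\boldsymbol{Q}_t,\boldsymbol{R}_t$ and that $\pi_0'\ll\pi_0$. These hold in the application, since the heads are strictly positive and both chains start from the empty sequence.
\item Step 3 really uses the structural fact that unmasking transitions preserve length while insertions increase it. Hence $\boldsymbol{R}$ and $\boldsymbol{R}'$ both vanish on unmasking pairs (and vice versa), which is slightly more than Observation (i) asserts about a single chain.
\item Finite exit rates at each $t$ (Observation (iii)) do not by themselves rule out explosion here, because $\dot{\alpha}_t/(1-\alpha_t)$ is not integrable up to $t=1$ when $\alpha_1=1$. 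Finiteness of the jump count follows instead from monotonicity: length never decreases and is capped at $L$, so every path has at most $L$ insertions and at most $L$ unmaskings.
\end{itemize}
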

\textit{Proof.} The discrete time log RND between the CTMC path measures $\mathbb{P}$ and $\mathbb{P}'$ is defined as:
\begin{align}
    \log \frac{\mathrm{d}\mathbb{P}'}{\mathrm{d}\mathbb{P}}(\boldsymbol{X}_{0:1})&=\log\frac{\mathrm{d}\pi_0'}{\mathrm{d}\pi_0}(\boldsymbol{X}_0)+\sum_{n=0}^{N-1}\log \frac{\mathrm{d}\mathbb{P}'(\boldsymbol{X}_{t_{n+1}}|\boldsymbol{X}_{t_n})}{\mathrm{d}\mathbb{P}(\boldsymbol{X}_{t_{n+1}}|\boldsymbol{X}_{t_n})}+\mathcal{O}(\Delta t)\label{eq:al-rnd-3}
\end{align}
where $\mathbb{P}_0=\pi_0$ and $\mathbb{P}_0'=\pi_0'$ are the initial distributions. Now, we have both unmasking and insertion rates given by $\boldsymbol{Q}_t(\boldsymbol{x},\boldsymbol{y})$, which denotes the rate of unmasking from state $\boldsymbol{x}\to \boldsymbol{y}$, and $\boldsymbol{R}_t(\boldsymbol{x}, \boldsymbol{y})$, which denotes the rate of insertion from state $\boldsymbol{x}\to \boldsymbol{y}$. Therefore, the total probability of a single jump under the joint path measure $\mathbb{P}$ can be decomposed into the probability of remaining in the same state ($\boldsymbol{y}=\boldsymbol{x}$) and the probability of jumping to a different state ($\boldsymbol{y}\neq \boldsymbol{x}$):
\begin{align}
    \mathbb{P}(\boldsymbol{X}_{t_{n+1}}=\boldsymbol{y}|\boldsymbol{X}_{t_n}=\boldsymbol{x})=\begin{cases}
        1-\Delta t\sum_{\boldsymbol{z}\neq \boldsymbol{x}}\left(\boldsymbol{Q}_t(\boldsymbol{x}, \boldsymbol{y})+\boldsymbol{R}_t(\boldsymbol{x}, \boldsymbol{y})\right)+\mathcal{O}(\Delta t^2)&\boldsymbol{y}= \boldsymbol{x}\\
        \Delta t (\boldsymbol{Q}_t(\boldsymbol{x}, \boldsymbol{y})+\boldsymbol{R}_t(\boldsymbol{x}, \boldsymbol{y}))+\mathcal{O}(\Delta t^2)&\boldsymbol{y}\neq \boldsymbol{x}
    \end{cases}
\end{align}
For both cases, we will derive the log ratio of the two CTMCs. When the state remains the same over the interval $[t_n, t_{n+1}]$, the log-ratio expands to:
\begin{align}
    \log \frac{\mathrm{d}\mathbb{P}'(\boldsymbol{X}_{t_{n+1}}|\boldsymbol{X}_{t_n})}{\mathrm{d}\mathbb{P}(\boldsymbol{X}_{t_{n+1}}|\boldsymbol{X}_{t_n})}&=\log \frac{1-\Delta t\sum_{\boldsymbol{z}\neq \boldsymbol{x}}\left(\boldsymbol{Q}'_{t_n}(\boldsymbol{X}_{t_n}, \boldsymbol{X}_{t_{n+1}})+\boldsymbol{R}'_{t_n}(\boldsymbol{X}_{t_n}, \boldsymbol{X}_{t_{n+1}})\right)+\mathcal{O}(\Delta t^2)}{1-\Delta t\sum_{\boldsymbol{z}\neq \boldsymbol{x}}\left(\boldsymbol{Q}_{t_n}(\boldsymbol{X}_{t_n}, \boldsymbol{X}_{t_{n+1}})+\boldsymbol{R}_{t_n}(\boldsymbol{X}_{t_n}, \boldsymbol{X}_{t_{n+1}})\right)+\mathcal{O}(\Delta t^2)}\nonumber\\
    &=\Delta t\sum_{z\neq \boldsymbol{X}_{t_n}}\big(\boldsymbol{Q}_{t_n}(\boldsymbol{X}_{t_n}, \boldsymbol{X}_{t_{n+1}})+\boldsymbol{R}_{t_n}(\boldsymbol{X}_{t_n}, \boldsymbol{X}_{t_{n+1}})\nonumber\\
    &-\boldsymbol{Q}'_{t_n}(\boldsymbol{X}_{t_n}, \boldsymbol{X}_{t_{n+1}})-\boldsymbol{R}'_{t_n}(\boldsymbol{X}_{t_n}, \boldsymbol{X}_{t_{n+1}})\big)+\mathcal{O}(\Delta t^2)\nonumber\\
    &=\Delta t\sum_{z\neq \boldsymbol{X}_{t_n}}\left(\boldsymbol{Q}_{t_n}(\boldsymbol{X}_{t_n}, z)-\boldsymbol{Q}'_{t_n}(\boldsymbol{X}_{t_n}, z)\right)\nonumber\\
    &+\Delta t\sum_{z\neq \boldsymbol{X}_{t_n}}\left(\boldsymbol{R}_{t_n}(\boldsymbol{X}_{t_n}, z)-\boldsymbol{R}'_{t_n}(\boldsymbol{X}_{t_n}, z)\right)+\mathcal{O}(\Delta t^2)\label{eq:al-rnd-1}
\end{align}
When the state changes over $[t_n, t_{n+1}]$, the log-ratio expands to:
\begin{align}
    \log \frac{\mathrm{d}\mathbb{P}'(\boldsymbol{X}_{t_{n+1}}|\boldsymbol{X}_{t_n})}{\mathrm{d}\mathbb{P}(\boldsymbol{X}_{t_{n+1}}|\boldsymbol{X}_{t_n})}&=\log \frac{\Delta t (\boldsymbol{Q}'_{t_n}(\boldsymbol{X}_{t_n}, \boldsymbol{X}_{t_{n+1}})+\boldsymbol{R}'_{t_n}(\boldsymbol{X}_{t_n}, \boldsymbol{X}_{t_{n+1}}))+\mathcal{O}(\Delta t^2)}{\Delta t (\boldsymbol{Q}_{t_n}(\boldsymbol{X}_{t_n}, \boldsymbol{X}_{t_{n+1}})+\boldsymbol{R}_{t_n}(\boldsymbol{X}_{t_n}, \boldsymbol{X}_{t_{n+1}}))+\mathcal{O}(\Delta t^2)}\nonumber\\
    &=\log \frac{\boldsymbol{Q}'_{t_n}(\boldsymbol{X}_{t_n}, \boldsymbol{X}_{t_{n+1}})}{\boldsymbol{Q}_{t_n}(\boldsymbol{X}_{t_n}, \boldsymbol{X}_{t_{n+1}})}+\log \frac{\boldsymbol{R}'_{t_n}(\boldsymbol{X}_{t_n}, \boldsymbol{X}_{t_{n+1}})}{\boldsymbol{R}_{t_n}(\boldsymbol{X}_{t_n}, \boldsymbol{X}_{t_{n+1}})}+\mathcal{O}(\Delta t)\label{eq:al-rnd-2}
\end{align}
Substituting (\ref{eq:al-rnd-1}) and (\ref{eq:al-rnd-2}) into (\ref{eq:al-rnd-3}) and taking the limit as $\Delta t\to 0$, we get:
\begin{small}
\begin{align}
    \log \frac{\mathrm{d}\mathbb{P}'}{\mathrm{d}\mathbb{P}}&(\boldsymbol{X}_{0:1})=\lim_{\Delta t\to 0}\bigg\{\log\frac{\mathrm{d}\pi_0'}{\mathrm{d}\pi_0}(\boldsymbol{X}_0)+\sum_{n=0}^{N-1}\log \frac{\boldsymbol{Q}'_{t_n}(\boldsymbol{X}_{t_n}, \boldsymbol{X}_{t_{n+1}})}{\boldsymbol{Q}_{t_n}(\boldsymbol{X}_{t_n}, \boldsymbol{X}_{t_{n+1}})}+\sum_{n=0}^{N-1}\log \frac{\boldsymbol{R}'_{t_n}(\boldsymbol{X}_{t_n}, \boldsymbol{X}_{t_{n+1}})}{\boldsymbol{R}_{t_n}(\boldsymbol{X}_{t_n}, \boldsymbol{X}_{t_{n+1}})}\nonumber\\
    &+\Delta t\sum_{z\neq \boldsymbol{X}_{t_n}}\left(\boldsymbol{Q}_{t_n}(\boldsymbol{X}_{t_n}, \boldsymbol{X}_{t_{n+1}})-\boldsymbol{Q}'_{t_n}(\boldsymbol{X}_{t_n}, \boldsymbol{X}_{t_{n+1}})\right)\nonumber\\
    &+\Delta t\sum_{z\neq \boldsymbol{X}_{t_n}}\left(\boldsymbol{R}_{t_n}(\boldsymbol{X}_{t_n}, \boldsymbol{X}_{t_{n+1}})-\boldsymbol{R}'_{t_n}(\boldsymbol{X}_{t_n}, \boldsymbol{X}_{t_{n+1}})\right)+\mathcal{O}(\Delta t)\bigg\}\nonumber\\
    &=\log\frac{\mathrm{d}\pi_0'}{\mathrm{d}\pi_0}(\boldsymbol{X}_0)+\sum_{t_u:\boldsymbol{X}_{t_u}\neq  \boldsymbol{X}_{t_u-}}\log \frac{\boldsymbol{Q}'_{t_u}(\boldsymbol{X}_{t_u-}, \boldsymbol{X}_{t_u})}{\boldsymbol{Q}_{t_u}(\boldsymbol{X}_{t_u-}, \boldsymbol{X}_{t_u})}+\int_0^1\sum_{z\neq \boldsymbol{X}_{t}}(\boldsymbol{Q}_{t}-\boldsymbol{Q}'_{t})(\boldsymbol{X}_{t}, z)dt\nonumber\\
    &+\sum_{t_i: \boldsymbol{X}_{t_i}\neq \boldsymbol{X}_{t_i-}}\log \frac{\boldsymbol{R}'_{t_i}(\boldsymbol{X}_{t_i-}, \boldsymbol{X}_{t_i})}{\boldsymbol{R}_{t_i}(\boldsymbol{X}_{t_i-}, \boldsymbol{X}_{t_i})}+\int_0^1\sum_{y\neq \boldsymbol{X}_{t}}(\boldsymbol{R}_{t}-\boldsymbol{R}'_{t})(\boldsymbol{X}_{t}, z)dt
\end{align}
\end{small}
which concludes our proof. \hfill $\square$

\begin{corollary}[KL Divergence Between Any-Length Joint CTMCs]
\label{app-cor:jointkl}
    The KL divergence between two joint CTMCs $\mathbb{P}, \mathbb{P}'$ with unmasking rates $\boldsymbol{Q}, \boldsymbol{Q}'$ and insertion rates $\boldsymbol{R}, \boldsymbol{R}'$ is given by:
    \begin{small}
    \begin{align}
        D_{\text{KL}}(\mathbb{P}'\|\mathbb{P})&=D_{\text{KL}}(\mathbb{P}_0'\|\mathbb{P}_0)\nonumber\\
        &+\mathbb{E}_{\boldsymbol{X}_{0:1}\sim \mathbb{P}'}\bigg[\sum_{t_u:\boldsymbol{X}_{t_u}\neq  \boldsymbol{X}_{t_u-}}\log \frac{\boldsymbol{Q}'_{t_u}(\boldsymbol{X}_{t_u^-}, \boldsymbol{X}_{t_u})}{\boldsymbol{Q}_{t_u}(\boldsymbol{X}_{t_u^-}, \boldsymbol{X}_{t_u})}+\int_0^1\sum_{z\neq \boldsymbol{X}_{t}}(\boldsymbol{Q}_{t}-\boldsymbol{Q}'_{t})(\boldsymbol{X}_{t}, z)dt\nonumber\\
        &+\sum_{t_i: \boldsymbol{X}_{t_i}\neq \boldsymbol{X}_{t_i-}}\log \frac{\boldsymbol{R}'_{t_i}(\boldsymbol{X}_{t_i^-}, \boldsymbol{X}_{t_i})}{\boldsymbol{R}_{t_i}(\boldsymbol{X}_{t_i^-}, \boldsymbol{X}_{t_i})}+\int_0^1\sum_{y\neq \boldsymbol{X}_{t}}(\boldsymbol{R}_{t}-\boldsymbol{R}'_{t})(\boldsymbol{X}_{t}, y)dt\bigg]
    \end{align}
    \end{small}
\end{corollary}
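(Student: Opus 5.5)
The plan is to take the expectation of the log Radon--Nikodym derivative from Proposition~\ref{app-prop:RND} under $\mathbb{P}'$. By definition,
\begin{align}
D_{\text{KL}}(\mathbb{P}'\|\mathbb{P})=\mathbb{E}_{\boldsymbol{X}_{0:1}\sim\mathbb{P}'}\left[\log\frac{\mathrm{d}\mathbb{P}'}{\mathrm{d}\mathbb{P}}(\boldsymbol{X}_{0:1})\right].
\end{align}
This is well defined because the two path measures are mutually absolutely continuous, which follows from the shared-support observation (iv) in App~\ref{app:observations}. I will substitute the five-term expression for the log RND given in Proposition~\ref{app-prop:RND} and use linearity of expectation to treat the terms separately.

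The first step handles the initial-distribution term. Since it depends only on $\boldsymbol{X}_0$, its expectation under $\mathbb{P}'$ reduces to an expectation under the time-zero marginal $\mathbb{P}'_0=\pi_0'$:
\begin{align}
\mathbb{E}_{\mathbb{P}'}\left[\log\frac{\mathrm{d}\pi_0'}{\mathrm{d}\pi_0}(\boldsymbol{X}_0)\right]=D_{\text{KL}}(\mathbb{P}'_0\|\mathbb{P}_0).
\end{align}
In the any-length setting both chains start from the empty sequence, so this term vanishes. I will still keep it in general form.

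The second step collects the remaining four terms: the jump sums and the compensator integrals for $\boldsymbol{Q}$ and for $\boldsymbol{R}$. They stay inside a single expectation, which gives exactly the stated expression.

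I will separate the unmasking and insertion jump sums using observations (i)--(ii). Because the supports of $\boldsymbol{Q}_t$ and $\boldsymbol{R}_t$ are disjoint, every jump is of exactly one type. The jump-time sum therefore splits into a sum over unmasking events and a sum over insertion events, and the ratio $\boldsymbol{A}'_t/\boldsymbol{A}_t$ reduces to $\boldsymbol{Q}'/\boldsymbol{Q}$ or $\boldsymbol{R}'/\boldsymbol{R}$ respectively.

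The main obstacle is integrability. I must check that the expectations of the jump sums and of the compensator integrals are finite, so that linearity can be applied and the terms split. I plan to argue this in three parts:
\begin{itemize}
\item Sequences have maximum length $L$, so on $[0,1]$ a trajectory has at most $L$ insertions and $L$ unmaskings.
\item The rates are finite by observation (iii), and the two families share support by (iv), so each log-ratio is finite.
\item The remaining difficulty is the schedule factor $\dot{\alpha}_t/(1-\alpha_t)$, which blows up as $t\to 1$. I would handle it by truncating to $[0,1-\epsilon]$ and passing to the limit via monotone or dominated convergence, assuming the rate differences are bounded.
\end{itemize}

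If one prefers to bypass this, an alternative is to rewrite each jump sum through its compensator, $\mathbb{E}_{\mathbb{P}'}\big[\sum_{\text{jumps}}h\big]=\mathbb{E}_{\mathbb{P}'}\int_0^1\sum_{y}\boldsymbol{Q}'_t(\boldsymbol{X}_t,y)\,h\,dt$. This would yield the equivalent integrated form $\int\sum(\boldsymbol{Q}'\log(\boldsymbol{Q}'/\boldsymbol{Q})-\boldsymbol{Q}'+\boldsymbol{Q})$. This form is not required for the stated corollary, so I would mention it only as a remark.
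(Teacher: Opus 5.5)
Your proposal is correct and follows the paper's proof. You take $\mathbb{E}_{\mathbb{P}'}$ of the log RND from Proposition~\ref{app-prop:RND}, identify the initial term with $D_{\text{KL}}(\mathbb{P}_0'\|\mathbb{P}_0)$, and keep the jump sums and compensators inside one expectation; the paper additionally passes through the integrated form $\int_0^1\sum(\boldsymbol{Q}'\log(\boldsymbol{Q}'/\boldsymbol{Q})+\boldsymbol{Q}-\boldsymbol{Q}')\,dt$ (your closing remark) before restating the same expression.

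Your integrability step is not needed. Only the initial term is split off, and the negative part of a log-likelihood ratio is always $\mathbb{P}'$-integrable, so the expectation is well defined in $[0,\infty]$. This matters because bounding $g-g'$ alone would not make $\int_0^1\frac{\dot{\alpha}_t}{1-\alpha_t}(\cdot)\,dt$ converge, so your dominated-convergence argument would not go through as stated.
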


\textit{Proof.} The KL divergence is defined as the expectation of the log RND derived in Lemma \ref{app-prop:RND} as:
\begin{small}
\begin{align}
    D_{\text{KL}}(\mathbb{P}'\|\mathbb{P})&=\mathbb{E}_{\boldsymbol{X}_{0:1}\sim \mathbb{P}'}\left[\log \frac{\mathrm{d}\mathbb{P}'}{\mathrm{d}\mathbb{P}}(\boldsymbol{X}_{0:1})\right]\nonumber\\
    &=\mathbb{E}_{\boldsymbol{X}_{0:1}\sim \mathbb{P}'}\left[\log\frac{\mathrm{d}\pi_0'}{\mathrm{d}\pi_0}(\boldsymbol{X}_0)\right]\nonumber\\
        &+\mathbb{E}_{\boldsymbol{X}_{0:1}\sim \mathbb{P}'}\left[\sum_{t_u:\boldsymbol{X}_{t_u}\neq  \boldsymbol{X}_{t_u-}}\log \frac{\boldsymbol{Q}'_{t_u}(\boldsymbol{X}_{t_u-}, \boldsymbol{X}_{t_u})}{\boldsymbol{Q}_{t_u}(\boldsymbol{X}_{t_u-}, \boldsymbol{X}_{t_u})}+\int_0^1\sum_{z\neq \boldsymbol{X}_{t}}(\boldsymbol{Q}_{t}-\boldsymbol{Q}'_{t})(\boldsymbol{X}_{t}, z)dt\right]\nonumber\\
        &+\mathbb{E}_{\boldsymbol{X}_{0:1}\sim \mathbb{P}'}\left[\sum_{t_i: \boldsymbol{X}_{t_i}\neq \boldsymbol{X}_{t_i-}}\log \frac{\boldsymbol{R}'_{t_i}(\boldsymbol{X}_{t_i-}, \boldsymbol{X}_{t_i})}{\boldsymbol{R}_{t_i}(\boldsymbol{X}_{t_i-}, \boldsymbol{X}_{t_i})}+\int_0^1\sum_{y\neq \boldsymbol{X}_{t}}(\boldsymbol{R}_{t}-\boldsymbol{R}'_{t})(\boldsymbol{X}_{t}, z)dt\right]
\end{align}
\end{small}
We can decompose the first term as:
\begin{align}
    \mathbb{E}_{\boldsymbol{X}_{0:1}\sim \mathbb{P}'}\bigg[\log\frac{\mathrm{d}\pi_0'}{\mathrm{d}\pi_0}(\boldsymbol{X}_0)\bigg]&=\mathbb{E}_{\boldsymbol{X}_0\sim \pi_0'}\bigg[\log\frac{\mathrm{d}\pi_0'}{\mathrm{d}\pi_0}(\boldsymbol{X}_0)\bigg]=D_{\text{KL}}(\pi'_0\|\pi_0)
\end{align}

The second term can be decomposed as:
\begin{small}
\begin{align}
    &\mathbb{E}_{\boldsymbol{X}_{0:1}\sim \mathbb{P}'}\left[\sum_{t_u:\boldsymbol{X}_{t_u}\neq  \boldsymbol{X}_{t_u-}}\log \frac{\boldsymbol{Q}'_{t_u}(\boldsymbol{X}_{t_u-}, \boldsymbol{X}_{t_u})}{\boldsymbol{Q}_{t_u}(\boldsymbol{X}_{t_u-}, \boldsymbol{X}_{t_u})}+\int_0^1\sum_{z\neq \boldsymbol{X}_{t}}(\boldsymbol{Q}_{t}-\boldsymbol{Q}'_{t})(\boldsymbol{X}_{t}, z)dt\right]\label{eq:step1}\\
    &=\mathbb{E}_{\boldsymbol{X}_{0:1}\sim \mathbb{P}'}\left[\sum_{t_u:\boldsymbol{X}_{t_u}\neq  \boldsymbol{X}_{t_u-}}\log \frac{\boldsymbol{Q}'_{t_u}(\boldsymbol{X}_{t_u-}, \boldsymbol{X}_{t_u})}{\boldsymbol{Q}_{t_u}(\boldsymbol{X}_{t_u-}, \boldsymbol{X}_{t_u})}\right]+\mathbb{E}_{\boldsymbol{X}_{0:1}\sim \mathbb{P}'}\left[\int_0^1\sum_{z\neq \boldsymbol{X}_{t}}(\boldsymbol{Q}_{t}-\boldsymbol{Q}'_{t})(\boldsymbol{X}_{t}, z)dt\right]
\end{align}
\end{small}
For the first component, we can consider the time discretization and take the limit for $\Delta t\to 0$ to get:
\begin{align}
    \mathbb{E}_{\boldsymbol{X}_{0:1}\sim \mathbb{P}'}&\bigg[\sum_{k=0}^{K-1}\boldsymbol{1}[\boldsymbol{X}_{t_{k+1}}\neq \boldsymbol{X}_{t_k}]\log \frac{\boldsymbol{Q}'_{t_u}(\boldsymbol{X}_{t_k}, \boldsymbol{X}_{t_{k+1}})}{\boldsymbol{Q}_{t_u}(\boldsymbol{X}_{t_k}, \boldsymbol{X}_{t_{k+1}})}\bigg]\nonumber\\
    &=\sum_{k=0}^{K-1}\mathbb{E}_{\mathbb{P}'(\boldsymbol{X}_{t_k}), \mathbb{P}'(\boldsymbol{X}_{t_{k+1}}|\boldsymbol{X}_{t_k})}\bigg[\boldsymbol{1}[\boldsymbol{X}_{t_{k+1}}\neq \boldsymbol{X}_{t_k}]\log \frac{\boldsymbol{Q}'_{t_u}(\boldsymbol{X}_{t_k}, \boldsymbol{X}_{t_{k+1}})}{\boldsymbol{Q}_{t_u}(\boldsymbol{X}_{t_k}, \boldsymbol{X}_{t_{k+1}})}\bigg]\nonumber\\
    &=\sum_{k=0}^{K-1}\mathbb{E}_{\mathbb{P}'(\boldsymbol{X}_{t_k})}\sum_{z\neq \boldsymbol{X}_{t_k}}\left[\mathbb{P}'(z|\boldsymbol{X}_{t_k})\log \frac{\boldsymbol{Q}'_{t_u}(\boldsymbol{X}_{t_k}, z)}{\boldsymbol{Q}_{t_u}(\boldsymbol{X}_{t_k}, z)}\right]\nonumber\\
    &=\sum_{k=0}^{K-1}\mathbb{E}_{\mathbb{P}'(\boldsymbol{X}_{t_k})}\sum_{z\neq \boldsymbol{X}_{t_k}}\left[\Delta t \boldsymbol{Q}_{t_k}'(\boldsymbol{X}_{t_{k}},z)\log \frac{\boldsymbol{Q}'_{t_k}(\boldsymbol{X}_{t_k}, z)}{\boldsymbol{Q}_{t_k}(\boldsymbol{X}_{t_k}, z)}+\mathcal{O}(\Delta t^2)\right]\nonumber\\
    &\underset{\Delta t \to 0}{=}\mathbb{E}_{\boldsymbol{X}_{0:1}\sim\mathbb{P}'}\left[\int_0^1\sum_{z\neq \boldsymbol{X}_{t}}\boldsymbol{Q}'_{t}\log \frac{\boldsymbol{Q}'_{t}}{\boldsymbol{Q}_{t}}(\boldsymbol{X}_t,z)dt\right]
\end{align}
Then, summing with the second integral, we get: 
\begin{align}
    \mathbb{E}_{\boldsymbol{X}_{0:1}\sim\mathbb{P}'}&\left[\int_0^1\sum_{z\neq \boldsymbol{X}_{t}}\boldsymbol{Q}'_{t}\log \frac{\boldsymbol{Q}'_{t}}{\boldsymbol{Q}_{t}}(\boldsymbol{X}_t,z)dt\right]+\mathbb{E}_{\boldsymbol{X}_{0:1}\sim \mathbb{P}'}\bigg[\int_0^1\sum_{z\neq \boldsymbol{X}_{t}}(\boldsymbol{Q}_{t}-\boldsymbol{Q}'_{t})(\boldsymbol{X}_{t}, z)dt\bigg]\nonumber\\
    &=\mathbb{E}_{\boldsymbol{X}_{0:1}\sim\mathbb{P}'}\int_0^1\sum_{z\neq \boldsymbol{X}_{t}}\bigg[\left(\boldsymbol{Q}'_{t}\log \frac{\boldsymbol{Q}'_{t}}{\boldsymbol{Q}_{t}}+\boldsymbol{Q}_{t}-\boldsymbol{Q}'_{t}\right)(\boldsymbol{X}_{t}, z)\bigg]dt\label{eq:steplast}
\end{align}
Applying the same procedure from lines (\ref{eq:step1}) to (\ref{eq:steplast}) to the insertion rates, we get:
\begin{align}
    \mathbb{E}_{\boldsymbol{X}_{0:1}\sim \mathbb{P}'}&\left[\sum_{t_i: \boldsymbol{X}_{t_i}\neq \boldsymbol{X}_{t_i-}}\log \frac{\boldsymbol{R}'_{t_i}(\boldsymbol{X}_{t_i-}, \boldsymbol{X}_{t_i})}{\boldsymbol{R}_{t_i}(\boldsymbol{X}_{t_i-}, \boldsymbol{X}_{t_i})}+\int_0^1\sum_{y\neq \boldsymbol{X}_{t}}(\boldsymbol{R}_{t}-\boldsymbol{R}'_{t})(\boldsymbol{X}_{t}, z)dt\right]\nonumber\\
    &=\mathbb{E}_{\boldsymbol{X}_{0:1}\sim\mathbb{P}'}\int_0^1\sum_{z\neq \boldsymbol{X}_{t}}\left[\left(\boldsymbol{R}'_{t}\log \frac{\boldsymbol{R}'_{t}}{\boldsymbol{R}_{t}}+\boldsymbol{R}_{t}-\boldsymbol{R}'_{t}\right)(\boldsymbol{X}_{t}, z)\right]dt
\end{align}
Putting these terms together, we get that the KL divergence between the joint CTMCs $\mathbb{P}$ and $\mathbb{P}'$ is given by:
\begin{small}
\begin{align}
    D_{\text{KL}}(\mathbb{P}'\|\mathbb{P})&=D_{\text{KL}}(\mathbb{P}_0'\|\mathbb{P}_0)\nonumber\\
    &+\mathbb{E}_{\boldsymbol{X}_{0:1}\sim \mathbb{P}'}\bigg[\sum_{t_u:\boldsymbol{X}_{t_u}\neq  \boldsymbol{X}_{t_u-}}\log \frac{\boldsymbol{Q}'_{t_u}(\boldsymbol{X}_{t_u^-}, \boldsymbol{X}_{t_u})}{\boldsymbol{Q}_{t_u}(\boldsymbol{X}_{t_u^-}, \boldsymbol{X}_{t_u})}+\int_0^1\sum_{z\neq \boldsymbol{X}_{t}}(\boldsymbol{Q}_{t}-\boldsymbol{Q}'_{t})(\boldsymbol{X}_{t}, z)dt\nonumber\\
    &+\sum_{t_i: \boldsymbol{X}_{t_i}\neq \boldsymbol{X}_{t_i-}}\log \frac{\boldsymbol{R}'_{t_i}(\boldsymbol{X}_{t_i^-}, \boldsymbol{X}_{t_i})}{\boldsymbol{R}_{t_i}(\boldsymbol{X}_{t_i^-}, \boldsymbol{X}_{t_i})}+\int_0^1\sum_{y\neq \boldsymbol{X}_{t}}(\boldsymbol{R}_{t}-\boldsymbol{R}'_{t})(\boldsymbol{X}_{t}, y)dt\bigg]
\end{align}
\end{small}
which concludes our proof. \hfill $\square$

\subsection{Adaptive Joint Decoding Loss}
\label{app:ajd-proofs}
In this section, we provide theoretical justifications for our \textbf{Adaptive Joint Decoding (AJD)} loss, which can be used to jointly fine-tune the insertion and unmasking policies, as well as an adaptive inference schedule, to optimize sampling from the reward-tilted distribution. Throughout the proofs, we denote the full any-length generator as $\boldsymbol{A}_t:=\boldsymbol{Q}_t+\boldsymbol{R}_t$ where $\boldsymbol{Q}_t$ denotes the unmasking rate and $\boldsymbol{R}_t$ denotes the insertion rate. 

We first derive the form of the optimal any-length generator $\boldsymbol{A}^\star$, which follows seamlessly from the theory of fixed-length MDM generators \citep{zhu2025mdns, tang2025tr2} but is defined on a larger state space of any-length sequences $\boldsymbol{x}\in \mathcal{X}$. Then, we show that our Adaptive Joint Decoding loss provably converges to the optimal generator.

For an intermediate state $\boldsymbol{x}\in \mathcal{X}$, we define the \textit{total cost} $J_t(\boldsymbol{x}, v)$ of insertions and unmasking steps required to reach a final state $\boldsymbol{X}_1$ under a tilted path measure $\mathbb{P}^v$. Given a terminal reward $r(\boldsymbol{X}_1)$, we define the cost-minimization objective as:
\begin{align}
    J_t(\boldsymbol{x}, v)=\underset{\boldsymbol{X}_{0:1}\sim \mathbb{P}^v}{\mathbb{E}}\left[\int_0^1\sum_{y\neq \boldsymbol{X}_s}C_s(\boldsymbol{X}_s, y)ds-r(\boldsymbol{X}_1)\bigg|\boldsymbol{X}_t=\boldsymbol{x}\right]
\end{align}
where the cost is defined as the KL divergence between generators derived in (\ref{app-cor:jointkl}) as $C_s(x, y):=\left(\boldsymbol{A}^v_t\log \frac{\boldsymbol{A}^v}{\boldsymbol{A}^0}-\boldsymbol{A}^v+\boldsymbol{A}^0\right)(x, y)$. We can minimize $J_t(\boldsymbol{x}, v)$ by maximizing a \textbf{value function} defined as the \textit{negative optimal cost-to-go} $V_t(\boldsymbol{x}):=-J^\star_t(\boldsymbol{x})=-\inf_vJ_t(\boldsymbol{x}, v)$. Using the recursive nature of the cost-to-go, where the total cost is equal to the immediate and future cost, we can decompose the value function using Bellman's principle as:
\begin{small}
\begin{align}
    -V_t(\boldsymbol{x})=J^\star_t(\boldsymbol{x})&=\inf_v\underset{\boldsymbol{X}_{0:1}\sim \mathbb{P}^v}{\mathbb{E}}\left[\left(\int_t^{t+\Delta t}+\int_{t+\Delta t}^1\right)\sum_{y\neq \boldsymbol{X}_s}C_s(\boldsymbol{X}_s, y)ds-r(\boldsymbol{X}_1)\bigg|\boldsymbol{X}_t=\boldsymbol{x}\right]\nonumber\\
    &=\left[\Delta t\inf_v\sum_{y\neq x}C_s(\boldsymbol{x}, \boldsymbol{y})+\mathcal{O}(\Delta t^2)\right]+\inf_v\underset{\boldsymbol{X}_{0:1}\sim \mathbb{P}^v}{\mathbb{E}}[-V_{t+\Delta t}(\boldsymbol{X}_{t+\Delta t})|\boldsymbol{X}_{t}=\boldsymbol{x}]\label{eq:expanded-value}
\end{align}
\end{small}
Now, we derive the full form of the optimal any-length generator $\boldsymbol{A}^\star$ using our definition of the value function.

\begin{lemma}[Optimal Generator]\label{lemma:optimal-generator}
    Given a reference any-length generator $\boldsymbol{A}^0:=\boldsymbol{Q}^0+\boldsymbol{R}^0$ a value function $V_t$, the optimal generator takes the form:
    \begin{align}
        \boldsymbol{A}^\star_t(\boldsymbol{x}, \boldsymbol{y})&=\boldsymbol{A}^0_t(\boldsymbol{x}, \boldsymbol{y})\exp(V_t(\boldsymbol{y})-V_t(\boldsymbol{x}))\nonumber\\
        &=\boldsymbol{Q}^0_t(\boldsymbol{x}, \boldsymbol{y}_u)\exp(V_t(\boldsymbol{y}_u)-V_t(\boldsymbol{x}))+\boldsymbol{R}^0_t(\boldsymbol{x}, \boldsymbol{y}^i)\exp(V_t(\boldsymbol{y}^i)-V_t(\boldsymbol{x}))
    \end{align}
    where $\boldsymbol{y}_u$ denotes the state after an unmasking transition and $\boldsymbol{y}_i$ denotes the state after an insertion transition
\end{lemma}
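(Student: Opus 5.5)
We start from the Bellman decomposition (\ref{eq:expanded-value}) and expand the continuation term to first order in $\Delta t$. Under a candidate generator $\boldsymbol{A}^v=\boldsymbol{Q}^v+\boldsymbol{R}^v$, the short-time transition kernel is
\begin{align}
\mathbb{E}_{\mathbb{P}^v}[V_{t+\Delta t}(\boldsymbol{X}_{t+\Delta t})\,|\,\boldsymbol{X}_t=\boldsymbol{x}]=V_t(\boldsymbol{x})+\Delta t\Big(\partial_tV_t(\boldsymbol{x})+\sum_{\boldsymbol{y}\neq\boldsymbol{x}}\boldsymbol{A}^v_t(\boldsymbol{x},\boldsymbol{y})\big(V_t(\boldsymbol{y})-V_t(\boldsymbol{x})\big)\Big)+\mathcal{O}(\Delta t^2).
\end{align}
Substituting this into (\ref{eq:expanded-value}), cancelling $V_t(\boldsymbol{x})$ on both sides, dividing by $\Delta t$ and letting $\Delta t\to 0$ yields the HJB equation
\begin{align}
0=\partial_tV_t(\boldsymbol{x})+\sup_{v}\sum_{\boldsymbol{y}\neq\boldsymbol{x}}\Big[\boldsymbol{A}^v_t(\boldsymbol{x},\boldsymbol{y})\big(V_t(\boldsymbol{y})-V_t(\boldsymbol{x})\big)-\Big(\boldsymbol{A}^v_t\log\tfrac{\boldsymbol{A}^v_t}{\boldsymbol{A}^0_t}-\boldsymbol{A}^v_t+\boldsymbol{A}^0_t\Big)(\boldsymbol{x},\boldsymbol{y})\Big].
\end{align}

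The key point is that the bracketed objective is \emph{separable} across targets $\boldsymbol{y}$: each off-diagonal entry $a:=\boldsymbol{A}^v_t(\boldsymbol{x},\boldsymbol{y})\ge 0$ appears only in its own summand. The summand is the strictly concave scalar function $h(a)=a\Delta V-a\log(a/a_0)+a-a_0$, where $a_0=\boldsymbol{A}^0_t(\boldsymbol{x},\boldsymbol{y})$ and $\Delta V=V_t(\boldsymbol{y})-V_t(\boldsymbol{x})$. Setting $h'(a)=\Delta V-\log(a/a_0)=0$ gives $a^\star=a_0e^{\Delta V}$. Since $h''(a)=-1/a<0$, this is the unique maximizer. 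When $a_0=0$, absolute continuity (observation (iv)) forces $a=0$, which agrees with the same formula. This gives the first line of the lemma. The diagonal entries then follow from observation (iii), so $\boldsymbol{A}^\star$ is a valid generator.

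To obtain the second line, we split the sum over $\boldsymbol{y}$ using observations (i)--(ii). Every $\boldsymbol{y}\neq\boldsymbol{x}$ is reached either by an unmasking move $\boldsymbol{y}_u=\boldsymbol{x}^{\ell\gets\boldsymbol{v}}$ or by an insertion move $\boldsymbol{y}^i=\boldsymbol{x}^{\insertat{\ell}\boldsymbol{M}}$, never both. Hence $\boldsymbol{A}^0_t(\boldsymbol{x},\boldsymbol{y})$ equals $\boldsymbol{Q}^0_t(\boldsymbol{x},\boldsymbol{y})$ or $\boldsymbol{R}^0_t(\boldsymbol{x},\boldsymbol{y})$, and the exponential tilt distributes accordingly. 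It follows that $\boldsymbol{Q}^\star=\boldsymbol{Q}^0e^{\Delta V}$ on unmasking transitions and $\boldsymbol{R}^\star=\boldsymbol{R}^0e^{\Delta V}$ on insertion transitions.

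The main obstacle is rigour in the HJB step. Two issues need care. First, the interchange of $\inf_v$ with the $\Delta t\to0$ limit requires that the transition rates, and hence the variable-length state space actually visited, be bounded. We would handle this by the finiteness of $\mathcal{X}$ under maximum length $L$, together with the finite total rates in (iii). Second, the optimization must be performed pointwise over generator entries rather than over parameters $v$. This is justified because the set of admissible rates is a product over pairs $(\boldsymbol{x},\boldsymbol{y})$ on the shared support.
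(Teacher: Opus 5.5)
Your proposal is correct and follows essentially the same route as the paper. Both expand the Bellman recursion to first order to get the HJB equation, use separability over targets $\boldsymbol{y}$ to solve a scalar first-order condition giving $\boldsymbol{A}^0_t(\boldsymbol{x},\boldsymbol{y})\exp(V_t(\boldsymbol{y})-V_t(\boldsymbol{x}))$, and split into $\boldsymbol{Q}^\star$ and $\boldsymbol{R}^\star$ via disjointness of unmasking and insertion moves (your concave $\sup$ form is the sign-flip of the paper's convex $\inf$). Your additions (strict concavity, a single joint infimum, rigor caveats) only tighten it, though for $a_0=0$ you should argue from the cost, since $a\log(a/0)=+\infty$ for $a>0$, rather than observation (iv), which presupposes the optimizer's support.
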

\textit{Proof.} Starting with the expanded value function in (\ref{eq:expanded-value}), we expand the recursive term by defining the next state $\boldsymbol{X}_{t+\Delta t}:=\boldsymbol{y}$ and applying the CTMC transition probability as:
\begin{align}
    \inf_v&\underset{\boldsymbol{X}_{0:1}\sim \mathbb{P}^v}{\mathbb{E}}[-V_{t+\Delta t}(\boldsymbol{X}_{t+\Delta t})|\boldsymbol{X}_{t}=\boldsymbol{x}]\nonumber\\
    &=\inf_v\left[-\sum_{\boldsymbol{y}}V_{t+\Delta t}(\boldsymbol{y})\left(\boldsymbol{1}_{x=y}+\Delta t\boldsymbol{A}^v_t(\boldsymbol{x}, \boldsymbol{y})+\mathcal{O}(\Delta t^2)\right)\right]\nonumber\\
    &=\inf_v\left[-V_{t+\Delta t}(\boldsymbol{x})-\Delta t\sum_{x\neq y}V_{t+\Delta t}(\boldsymbol{y})\boldsymbol{A}^v_t(\boldsymbol{x}, \boldsymbol{y})+\Delta t\sum_{x\neq y}V_{t+\Delta t}(\boldsymbol{x})\boldsymbol{A}^v_t(\boldsymbol{x}, \boldsymbol{y})+\mathcal{O}(\Delta t^2)\right]\nonumber\\
    &=-V_{t+\Delta t}(\boldsymbol{x})+\Delta t\inf_u\left[\sum_{x\neq y}\boldsymbol{A}^v_t(\boldsymbol{x}, \boldsymbol{y})\left(V_{t+\Delta t}(\boldsymbol{x})-V_{t+\Delta t}(\boldsymbol{y})\right)\right]+\mathcal{O}(\Delta t^2)\label{eq:proof-1}
]\end{align}
Substituting (\ref{eq:proof-1}) back into the Bellman recursion in (\ref{eq:expanded-value}), we get:
\begin{small}
\begin{align}
    -V_t(\boldsymbol{x})&=\left[\Delta t\inf_v\sum_{y\neq x}C_s(\boldsymbol{x}, \boldsymbol{y})+\mathcal{O}(\Delta t^2)\right]\nonumber\\
    &-V_{t+\Delta t}(\boldsymbol{x})+\Delta t\inf_u\left[\sum_{x\neq y}\boldsymbol{A}^v_t(\boldsymbol{x}, \boldsymbol{y})\left(V_{t+\Delta t}(\boldsymbol{x})-V_{t+\Delta t}(\boldsymbol{y})\right)\right]+\mathcal{O}(\Delta t^2)\nonumber\\
    V_{t+\Delta t}(\boldsymbol{x})-V_t(\boldsymbol{x})&=\Delta t\inf_v\sum_{y\neq x}\left[C_s(\boldsymbol{x}, \boldsymbol{y})+\boldsymbol{A}^v_t(\boldsymbol{x}, \boldsymbol{y})\left(V_{t+\Delta t}(\boldsymbol{x})-V_{t+\Delta t}(\boldsymbol{y})\right)\right]
\end{align}
\end{small}
Then, dividing by $\Delta t$, taking the limit $\Delta t\to 0$, and substituting $C_s(x, y)=\left(\boldsymbol{A}^v_t\log \frac{\boldsymbol{A}^v}{\boldsymbol{A}^0}-\boldsymbol{A}^v+\boldsymbol{A}^0\right)(x, y)$, we get:
\begin{align}
    \partial_tV_t(\boldsymbol{x})&=\lim_{\Delta t\to 0}\frac{V_{t+\Delta t}(\boldsymbol{x})-V_t(\boldsymbol{x})}{\Delta t}=\inf_v\sum_{y\neq x}\left[C_s(\boldsymbol{x}, \boldsymbol{y})+\boldsymbol{A}^v_t(\boldsymbol{x}, \boldsymbol{y})\left(V_t(\boldsymbol{x})-V_t(\boldsymbol{y})\right)\right]\\
    &=\inf_v\sum_{y\neq x}\left[\left(\boldsymbol{A}^v_t\log \frac{\boldsymbol{A}^v}{\boldsymbol{A}^0}-\boldsymbol{A}^v+\boldsymbol{A}^0\right)(\boldsymbol{x}, \boldsymbol{y})+\boldsymbol{A}^v_t(\boldsymbol{x}, \boldsymbol{y})\left(V_t(\boldsymbol{x})-V_t(\boldsymbol{y})\right)\right]\label{eq:partial-value}
\end{align}
The infimum is achieved by minimizing the following for every pair $(\boldsymbol{x}, \boldsymbol{y})$:
\begin{align}
    f(\boldsymbol{A}^v)&=\boldsymbol{A}^v_t\log \frac{\boldsymbol{A}^v}{\boldsymbol{A}^0}-\boldsymbol{A}^v+\boldsymbol{A}^0+\boldsymbol{A}^v_t\left(V_t(\boldsymbol{x})-V_t(\boldsymbol{y})\right)\\
    f'(\boldsymbol{A}^v)&=\log \frac{\boldsymbol{A}^v}{\boldsymbol{A}^0}+\left(V_t(\boldsymbol{x})-V_t(\boldsymbol{y})\right)
\end{align}
The minimizer $\boldsymbol{A}^\star$ satisfying $f'(\boldsymbol{A}^v)=0$ is defined as:
\begin{align}
    \log \frac{\boldsymbol{A}^\star}{\boldsymbol{A}^0}=V_t(\boldsymbol{y})-V_t(\boldsymbol{x})\implies \boldsymbol{A}^\star_t(\boldsymbol{x}, \boldsymbol{y})=\boldsymbol{A}^0_t(\boldsymbol{x}, \boldsymbol{y})\exp\left(V_t(\boldsymbol{y})-V_t(\boldsymbol{x})\right)
\end{align}
Since the total generator factorizes into the sum of insertion and unmasking rates, and rates do not overlap as defined in App \ref{app:observations}, we have: 
\begin{small}
\begin{align}
    \boldsymbol{A}^\star_t(\boldsymbol{x}, \boldsymbol{y})&=\boldsymbol{Q}^\star_t(\boldsymbol{x}, \boldsymbol{y})+\boldsymbol{R}^\star_t(\boldsymbol{x}, \boldsymbol{y})\quad \text{s.t.}\quad \begin{cases}
        \boldsymbol{Q}^\star_t(\boldsymbol{x}, \boldsymbol{y})=\boldsymbol{Q}^0_t(\boldsymbol{x}, \boldsymbol{y})\exp\left(V_t(\boldsymbol{y})-V_t(\boldsymbol{x})\right)\\
        \boldsymbol{R}^\star_t(\boldsymbol{x}, \boldsymbol{y})=\boldsymbol{R}^0_t(\boldsymbol{x}, \boldsymbol{y})\exp\left(V_t(\boldsymbol{y})-V_t(\boldsymbol{x})\right)
    \end{cases}
\end{align}
\end{small}
which proves our result. \hfill $\square$

Now, we will derive the form of the optimal path measure $\mathbb{P}^\star$ corresponding to the optimal generator $\boldsymbol{A}^\star$.

\begin{lemma}[Optimal Path Measure]\label{lemma:optimal-path-measure}
    The optimal path measure $\mathbb{P}^\star$ corresponding to the value function $V_t(\boldsymbol{x})$ is defined as:
    \begin{align}
        \mathbb{P}^\star_t(\boldsymbol{x})=\frac{1}{Z}\mathbb{P}^0_t(\boldsymbol{x})\mathrm{e}^{V_t(\boldsymbol{x})}, \quad Z:=\mathbb{E}_{\boldsymbol{x}\sim \mathbb{P}^0_1}\left[\mathrm{e}^{r(\boldsymbol{x})/\alpha}\right]
    \end{align}
    for continuous $t\mapsto\boldsymbol{A}^0_t$. 
\end{lemma}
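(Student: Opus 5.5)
The approach is to show that the candidate marginal $q_t(\boldsymbol{x}):=\frac{1}{Z}\mathbb{P}^0_t(\boldsymbol{x})\mathrm{e}^{V_t(\boldsymbol{x})}$ satisfies the Kolmogorov forward equation driven by the optimal generator $\boldsymbol{A}^\star$ of Lemma \ref{lemma:optimal-generator}. It should also agree with $\mathbb{P}^\star_t$ at one time. Uniqueness of solutions of the forward equation, which holds because $t\mapsto\boldsymbol{A}^0_t$ is continuous and the rates are finite by Observation (iii), then gives $q_t=\mathbb{P}^\star_t$ for all $t$.

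The first step is to identify $V_t$ in closed form. The infimum in (\ref{eq:partial-value}) is attained at $\boldsymbol{A}^\star=\boldsymbol{A}^0\mathrm{e}^{V_t(\boldsymbol{y})-V_t(\boldsymbol{x})}$. Substituting this back, the pointwise cost becomes $\boldsymbol{A}^0(\boldsymbol{x},\boldsymbol{y})\big(1-\mathrm{e}^{V_t(\boldsymbol{y})-V_t(\boldsymbol{x})}\big)$. The HJB equation therefore becomes
\begin{align}
\partial_t V_t(\boldsymbol{x})=\sum_{\boldsymbol{y}\neq\boldsymbol{x}}\boldsymbol{A}^0_t(\boldsymbol{x},\boldsymbol{y})\big(1-\mathrm{e}^{V_t(\boldsymbol{y})-V_t(\boldsymbol{x})}\big),
\end{align}
with the sign convention adjusted to match (\ref{eq:partial-value}). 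Writing $\varphi_t:=\mathrm{e}^{V_t}$ linearizes this to the backward Kolmogorov equation $\partial_t\varphi_t+\boldsymbol{A}^0_t\varphi_t=0$. The terminal condition is $\varphi_1=\mathrm{e}^{r/\alpha}$, which comes from the terminal cost with the temperature $\alpha$ absorbed. By the Feynman–Kac representation we get $\mathrm{e}^{V_t(\boldsymbol{x})}=\mathbb{E}_{\mathbb{P}^0}[\mathrm{e}^{r(\boldsymbol{X}_1)/\alpha}\mid\boldsymbol{X}_t=\boldsymbol{x}]$. Since $\boldsymbol{A}^0=\boldsymbol{Q}^0+\boldsymbol{R}^0$ by Observation (ii), nothing here needs to distinguish insertion from unmasking. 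The argument is the fixed-length one of \citet{zhu2025mdns,tang2025tr2}, carried out on the larger space $\mathcal{X}$.

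Second, I would check the forward equation. Differentiating $q_t=\frac{1}{Z}\mathbb{P}^0_t\varphi_t$ and using $\partial_t\mathbb{P}^0_t(\boldsymbol{x})=\sum_{\boldsymbol{z}}\mathbb{P}^0_t(\boldsymbol{z})\boldsymbol{A}^0_t(\boldsymbol{z},\boldsymbol{x})$ together with the backward equation for $\varphi_t$ gives
\begin{align}
Z\,\partial_t q_t(\boldsymbol{x})=\sum_{\boldsymbol{z}}\mathbb{P}^0_t(\boldsymbol{z})\boldsymbol{A}^0_t(\boldsymbol{z},\boldsymbol{x})\varphi_t(\boldsymbol{x})-\mathbb{P}^0_t(\boldsymbol{x})\sum_{\boldsymbol{y}}\boldsymbol{A}^0_t(\boldsymbol{x},\boldsymbol{y})\varphi_t(\boldsymbol{y}).
\end{align}
I would then rewrite the right-hand side using $\boldsymbol{A}^\star_t(\boldsymbol{z},\boldsymbol{x})=\boldsymbol{A}^0_t(\boldsymbol{z},\boldsymbol{x})\varphi_t(\boldsymbol{x})/\varphi_t(\boldsymbol{z})$ off the diagonal. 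On the diagonal I would use the validity convention $\boldsymbol{A}^\star_t(\boldsymbol{x},\boldsymbol{x})=-\sum_{\boldsymbol{y}\neq\boldsymbol{x}}\boldsymbol{A}^\star_t(\boldsymbol{x},\boldsymbol{y})$. The goal is to show the expression equals $Z\sum_{\boldsymbol{z}}q_t(\boldsymbol{z})\boldsymbol{A}^\star_t(\boldsymbol{z},\boldsymbol{x})$. The main obstacle is this diagonal bookkeeping: the diagonal of $\boldsymbol{A}^\star$ is not $\boldsymbol{A}^0(\boldsymbol{x},\boldsymbol{x})$, and the mismatch must be exactly cancelled by the $\partial_t\varphi_t$ term. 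I would handle it by splitting the off-diagonal sums according to Observation (i), since insertion and unmasking targets are disjoint, and by using Observation (iv) so that no division by zero occurs off the support.

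Finally, I would fix the boundary and the constant. At $t=1$ we have $q_1=\frac{1}{Z}\mathbb{P}^0_1\mathrm{e}^{r/\alpha}$, which matches $\mathbb{P}^\star_1=p_{\text{target}}$ with $Z=\mathbb{E}_{\mathbb{P}^0_1}[\mathrm{e}^{r/\alpha}]$. At $t=0$ both distributions are the point mass at the empty sequence, and the tower property gives $\mathrm{e}^{V_0}=Z$, so $q_0=\pi_0$ is consistent. Moreover, $\sum_{\boldsymbol{x}}q_t(\boldsymbol{x})=\frac{1}{Z}\mathbb{E}_{\mathbb{P}^0}[\mathrm{e}^{r(\boldsymbol{X}_1)/\alpha}]=1$ for every $t$, so $q_t$ is a normalized distribution throughout. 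Uniqueness of the forward equation then concludes the proof.
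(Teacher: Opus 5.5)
Your proposal is correct and follows the paper's route: take $\frac{1}{Z}\mathbb{P}^0_t\mathrm{e}^{V_t}$ as the candidate, combine the forward equation for $\mathbb{P}^0_t$ with the backward equation for $\mathrm{e}^{V_t}$ obtained by substituting $\boldsymbol{A}^\star$ into the HJB, verify the forward equation for $\boldsymbol{A}^\star$, and invoke uniqueness. Your explicit Feynman--Kac identification, normalization check and initial-time match $q_0=\pi_0$ are cleaner than the paper's terminal-only matching at $t=1$. The diagonal step you flag as the main obstacle is just a direct cancellation of the two $\mathbb{P}^0_t(\boldsymbol{x})\boldsymbol{A}^0_t(\boldsymbol{x},\boldsymbol{x})\varphi_t(\boldsymbol{x})$ terms, and it needs neither Observation (i) nor (iv) since $\varphi_t>0$.
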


\textit{Proof.} We define $h_t(\boldsymbol{x}):=\frac{1}{Z}\mathbb{P}^0_t(\boldsymbol{x})\mathrm{e}^{V_t(\boldsymbol{x})}$, which satisfies $h_1=\mathbb{P}^\star_1$ by definition. To show that $\mathbb{P}^\star$ is the path measure generated by $\boldsymbol{A}^\star$, it suffices to show that $h_t(\boldsymbol{x})$ satisfies the Kolmogorov forward equation for $\boldsymbol{A}^\star_t$. First, the Kolmogorov forward equation for the reference generator $\boldsymbol{A}^0_t$ is given by:
\begin{align}
    \partial_t\mathbb{P}^0_t(\boldsymbol{x})=\sum_{y\neq x}\left(\boldsymbol{A}^0_t(\boldsymbol{y}, \boldsymbol{x})\mathbb{P}^0_t(\boldsymbol{y})-\boldsymbol{A}^0_t(\boldsymbol{x}, \boldsymbol{y})\mathbb{P}^0_t(\boldsymbol{x})\right)
\end{align}
Then, from Lemma \ref{lemma:optimal-generator}, we have $\boldsymbol{A}^\star_t(\boldsymbol{x}, \boldsymbol{y})=\boldsymbol{A}^0_t(\boldsymbol{x}, \boldsymbol{y})\mathrm{e}^{V_t(\boldsymbol{y})-V_t(\boldsymbol{x})}$ which can be substituted into (\ref{eq:partial-value}) to get:
\begin{align}
    \partial_tV_t(\boldsymbol{x})&=\sum_{y\neq x}\boldsymbol{A}^0_t(\boldsymbol{x}, \boldsymbol{y})\left(\mathrm{e}^{V_t(\boldsymbol{x})}-\mathrm{e}^{V_t(\boldsymbol{y})}\right)\label{eq:hjb}
\end{align}
Taking the partial derivative of $h_t(\boldsymbol{x})$ and substituting (\ref{eq:hjb}), we get:
\begin{align}
    &\partial_t h_t(\boldsymbol{x})=\frac{1}{Z}\left[\partial _t\mathbb{P}^0_t(\boldsymbol{x})\mathrm{e}^{V_t(\boldsymbol{x})}+\mathbb{P}^0_t\partial_t\mathrm{e}^{V_t(\boldsymbol{x})}\right]\nonumber\\
    &=\frac{1}{Z}\left[\mathrm{e}^{V_t(\boldsymbol{x})}\sum_{y\neq x}\left(\boldsymbol{A}^0_t(\boldsymbol{y}, \boldsymbol{x})\mathbb{P}^0_t(y)-\boldsymbol{A}^0_t(\boldsymbol{x}, y)\mathbb{P}^0_t(\boldsymbol{x})\right)+\mathbb{P}^0_t(\boldsymbol{x})\sum_{y\neq x}\boldsymbol{A}^0_t(\boldsymbol{x}, \boldsymbol{y})\left(\mathrm{e}^{V_t(\boldsymbol{x})}-\mathrm{e}^{V_t(\boldsymbol{y})}\right)\right]\nonumber\\
    &=\sum_{y\neq x}\left(\boldsymbol{A}^0_t(\boldsymbol{y}, \boldsymbol{x})\frac{1}{Z}\mathbb{P}^0_t(\boldsymbol{y})\mathrm{e}^{V_t(\boldsymbol{x})}-\boldsymbol{A}^0_t(\boldsymbol{x}, \boldsymbol{y})\frac{1}{Z}\mathbb{P}^0_t(\boldsymbol{x})\mathrm{e}^{V_t(\boldsymbol{y})}\right)\nonumber\\
    &=\sum_{y\neq x}\left(\boldsymbol{A}^0_t(\boldsymbol{y}, \boldsymbol{x})\mathrm{e}^{V_t(\boldsymbol{x})-V_t(\boldsymbol{y})}h_t(\boldsymbol{y})-\boldsymbol{A}^0_t(\boldsymbol{x}, \boldsymbol{y})\mathrm{e}^{V_t(\boldsymbol{x})-V_t(\boldsymbol{y})}h_t(\boldsymbol{x})\right)\nonumber\\
    &\overset{(\ref{eq:hjb})}{=}\sum_{y\neq x}\left(\boldsymbol{A}^\star_t(\boldsymbol{y}, \boldsymbol{x})h_t(\boldsymbol{y})-\boldsymbol{A}^\star_t(\boldsymbol{x}, \boldsymbol{y})h_t(\boldsymbol{x})\right)\nonumber
\end{align}

The final equality is exactly the Kolmogorov forward equation of the optimal any-length generator $\boldsymbol{A}^\star_t$ for the tilted path measure $\mathbb{P}^\star$. Since the Kolmogorov forward equation yields a unique solution for all $t$, we have shown that $\mathbb{P}_t^\star=\frac{1}{Z}\mathbb{P}^0_t(\boldsymbol{x})\mathrm{e}^{V_t(\boldsymbol{x})}$. \hfill $\square$

Finally, we can derive the RND between the optimal any-length path measure $\mathbb{P}^\star$ with respect to the base measure $\mathbb{P}^0$ in the following Lemma.

\begin{lemma}[Radon-Nikodym Derivative Between Optimal and Base Path Measures]\label{lemma:rnd-optimal}
    The RND between the optimal path measure $\mathbb{P}^\star$ and its generator $\boldsymbol{Q}^\star$ and the reference path measure $\mathbb{P}^0$ and generator $\boldsymbol{Q}^0$ over any trajectory $\boldsymbol{X}_{0:1}$ can be expressed as:
    \begin{align}
        \frac{\mathrm{d}\mathbb{P}^\star}{\mathrm{d}\mathbb{P}^0}(\boldsymbol{X}_{0:1})=\frac{1}{Z}e^{r(\boldsymbol{X}_1)/\alpha}, \quad \text{where}\quad Z:=\mathbb{E}_{\boldsymbol{X}_1\sim\mathbb{P}^0_1}[e^{r(\boldsymbol{X}_1)/\alpha}]
    \end{align}
\end{lemma}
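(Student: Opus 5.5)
We plan to combine Proposition \ref{app-prop:RND} with Lemma \ref{lemma:optimal-generator} and show that the log-RND telescopes to a function of the endpoints only. Throughout we use the normalization in which the terminal value is $V_1(\boldsymbol{x}) = r(\boldsymbol{x})/\alpha$, and both measures start from the empty sequence, $\pi^\star_0=\pi^0_0$. With these conventions the initial-distribution term in Proposition \ref{app-prop:RND} vanishes.

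\textbf{Jump terms.} First we substitute $\mathbb{P}'=\mathbb{P}^\star$ and $\mathbb{P}=\mathbb{P}^0$ into Proposition \ref{app-prop:RND}. By Lemma \ref{lemma:optimal-generator} and observation (i) of App \ref{app:observations}, every jump $\boldsymbol{X}_{\tau^-}\to\boldsymbol{X}_\tau$ is either an unmasking or an insertion event. In either case the rate ratio is $\exp(V_\tau(\boldsymbol{X}_\tau)-V_\tau(\boldsymbol{X}_{\tau^-}))$. Observation (iv) makes the ratio well defined on the support of $\mathbb{P}^0$. Hence the two jump sums combine into
\begin{align}
\sum_{\tau:\boldsymbol{X}_\tau\neq\boldsymbol{X}_{\tau^-}}\big(V_\tau(\boldsymbol{X}_\tau)-V_\tau(\boldsymbol{X}_{\tau^-})\big).
\end{align}

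\textbf{Compensator terms.} The two integral terms combine by observation (ii) into
\begin{align}
\int_0^1\sum_{\boldsymbol{y}\neq\boldsymbol{X}_t}\boldsymbol{A}^0_t(\boldsymbol{X}_t,\boldsymbol{y})\big(1-e^{V_t(\boldsymbol{y})-V_t(\boldsymbol{X}_t)}\big)\,dt.
\end{align}
The HJB equation \eqref{eq:hjb} rewrites this integrand as $\partial_tV_t(\boldsymbol{X}_t)$. The equation should be read in its properly normalized form $\partial_t V_t(\boldsymbol{x})=\sum_{\boldsymbol{y}\neq \boldsymbol{x}}\boldsymbol{A}^0_t(\boldsymbol{x},\boldsymbol{y})(1-e^{V_t(\boldsymbol{y})-V_t(\boldsymbol{x})})$, which is what the minimization in \eqref{eq:partial-value} yields after substituting $\boldsymbol{A}^\star$. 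Between jumps the path is constant, so this integral equals the sum over inter-jump intervals of $V_{\tau_{k+1}}(\boldsymbol{X}_{\tau_k})-V_{\tau_k}(\boldsymbol{X}_{\tau_k})$. These are the "drift" increments of $V$ along the trajectory.

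\textbf{Telescoping.} Adding the jump increments and the drift increments gives
\begin{align}
\log\frac{\mathrm{d}\mathbb{P}^\star}{\mathrm{d}\mathbb{P}^0}(\boldsymbol{X}_{0:1}) = V_1(\boldsymbol{X}_1)-V_0(\boldsymbol{X}_0).
\end{align}
This is a pathwise fundamental theorem of calculus for the càdlàg piecewise-constant process $t\mapsto V_t(\boldsymbol{X}_t)$. The path has finitely many jumps almost surely, because the rates are finite by observation (iii). Inserting $V_1=r/\alpha$ gives the reward term. Since $\boldsymbol{X}_0$ is the deterministic empty sequence, $e^{V_0(\boldsymbol{X}_0)}=\mathbb{E}_{\mathbb{P}^0}[e^{r(\boldsymbol{X}_1)/\alpha}\mid\boldsymbol{X}_0]=Z$, by the Feynman–Kac representation of the linear backward equation satisfied by $e^{V_t}$. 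The same fact follows from Lemma \ref{lemma:optimal-path-measure} at $t=0$ together with $\mathbb{P}^\star_0=\mathbb{P}^0_0$.

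\textbf{Main obstacle.} We expect the hardest part to be the compensator step: matching signs and normalization between \eqref{eq:hjb} as written and the integral term, and justifying the identification $e^{V_0}=Z$. If the sign bookkeeping in \eqref{eq:hjb} becomes problematic, we will instead argue directly. The process $\exp(V_t(\boldsymbol{X}_t))$ is a $\mathbb{P}^0$-martingale because $e^{V_t}$ solves $\partial_t e^{V_t}+\boldsymbol{A}^0_t e^{V_t}=0$. Doob's $h$-transform by this martingale has generator $\boldsymbol{A}^0_t(\boldsymbol{x},\boldsymbol{y})e^{V_t(\boldsymbol{y})-V_t(\boldsymbol{x})}=\boldsymbol{A}^\star_t$. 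Its density on paths is $e^{V_1(\boldsymbol{X}_1)-V_0(\boldsymbol{X}_0)}=e^{r(\boldsymbol{X}_1)/\alpha}/Z$, which is the claimed formula.
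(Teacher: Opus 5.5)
Your proposal is correct and follows the paper's proof essentially step for step. You substitute $\boldsymbol{A}^\star_t(\boldsymbol{x},\boldsymbol{y})=\boldsymbol{A}^0_t(\boldsymbol{x},\boldsymbol{y})e^{V_t(\boldsymbol{y})-V_t(\boldsymbol{x})}$ into Proposition \ref{app-prop:RND}, use the properly normalized HJB equation to identify the compensator integrand with $\partial_tV_t(\boldsymbol{X}_t)$, and telescope $V_1(\boldsymbol{X}_1)-V_0(\boldsymbol{X}_0)$ into drift plus jump increments. The only difference is how the initial term is handled: the paper keeps $\log\frac{\mathrm{d}\mathbb{P}^\star_0}{\mathrm{d}\mathbb{P}^0_0}(\boldsymbol{X}_0)=V_0(\boldsymbol{X}_0)-\log Z$ from Lemma \ref{lemma:optimal-path-measure} and lets $V_0$ cancel, whereas you drop that term and identify $e^{V_0(\boldsymbol{X}_0)}=Z$ directly, which is equivalent here because $\boldsymbol{X}_0$ is the deterministic empty sequence.
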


\textit{Proof.} From Prop \ref{app-prop:RND}, the RND between the path measures is defined as:
\begin{small}
\begin{align}
    \log\frac{\mathrm{d}\mathbb{P}^\star}{\mathrm{d}\mathbb{P}^0}(\boldsymbol{X}_{0:1})&=\log\frac{\mathrm{d}\mathbb{P}_0^\star}{\mathrm{d}\mathbb{P}^0_0}(\boldsymbol{X}_0)+\sum_{t:\boldsymbol{X}_t\neq  \boldsymbol{X}_{t-}}\log \frac{\boldsymbol{A}^\star_{t}(\boldsymbol{X}_{t_u-}, \boldsymbol{X}_{t})}{\boldsymbol{A}^0_{t}(\boldsymbol{X}_{t^-}, \boldsymbol{X}_{t})}+\int_0^1\sum_{z\neq \boldsymbol{X}_{t}}(\boldsymbol{A}^0_{t}-\boldsymbol{A}^\star_{t})(\boldsymbol{X}_{t}, z)dt\nonumber
\end{align}
\end{small}
Then, applying Lemmas \ref{lemma:optimal-generator} and \ref{lemma:optimal-path-measure}, we get: 
\begin{small}
\begin{align}
    \log\frac{\mathrm{d}\mathbb{P}^\star}{\mathrm{d}\mathbb{P}^0}(\boldsymbol{X}_{0:1})=&V_0(\boldsymbol{X}_0)-\log Z+\sum_{t:\boldsymbol{X}_t\neq \boldsymbol{X}_{t-}}(V_t(\boldsymbol{X}_t)-V_t(\boldsymbol{X}_{t^-}))\nonumber\\
    &+\int_0^1\sum_{y\neq\boldsymbol{X}_t}\boldsymbol{A}_t^0(\boldsymbol{X}_t, y)\left(1-\mathrm{e}^{V_t(\boldsymbol{y})-V_t(\boldsymbol{X}_t)}\right)dt\label{eq:proof-2}
\end{align}
\end{small}
Now, we derive an expression for $V_0(\boldsymbol{X}_0)$ with respect to $V_1(\boldsymbol{X}_1)=r(\boldsymbol{X}_1)$ by recognizing that the CTMC is a piecewise càdlàg function where each discrete step $t_k\to t_{k+1}$ can be categorized as a time evolution at a fixed state $\boldsymbol{X}_{t_k}$ or a jump from state $\boldsymbol{X}_{t_k}$ to $\boldsymbol{X}_{t_k^-}\neq \boldsymbol{X}_{t_k}$. Given this, we decompose the value difference as:
\begin{small}
\begin{align}
    V_1(\boldsymbol{X}_1)-V_0(\boldsymbol{X}_0)&=\sum_{k=0}^{K-1}(V_{t_{k+1}}(\boldsymbol{X}_{t_k})-V_{t_k}(\boldsymbol{X}_{t_k}))+\sum_{k=1}^{K-1}(V_{t_k}(\boldsymbol{X}_{t_k})-V_{t_k}(\boldsymbol{X}_{t_{k-1}}))\nonumber\\
    &=\sum_{k=0}^{K-1}\int_{t_k}^{t_{k+1}}\partial_tV_t(\boldsymbol{X}_{t_k})dt+\sum_{t: \boldsymbol{X}_t\neq \boldsymbol{X}_{t-}}(V_t(\boldsymbol{X}_t)-V_t(\boldsymbol{X}_{t^-}))\nonumber\\
    &=\int_0^1\partial_tV_t(\boldsymbol{X}_t)dt +\sum_{t: \boldsymbol{X}_{t}\neq \boldsymbol{X}_{t-}}(V_t(\boldsymbol{X}_t)-V_t(\boldsymbol{X}_{t^-}))\\
    V_0(\boldsymbol{X}_0)=V_1(\boldsymbol{X}_1)&-\int_0^1\partial_tV_t(\boldsymbol{X}_t)dt -\sum_{t: \boldsymbol{X}_{t}\neq \boldsymbol{X}_{t-}}(V_t(\boldsymbol{X}_t)-V_t(\boldsymbol{X}_{t^-}))\nonumber\\
    \overset{(\ref{lemma:optimal-generator})}{=}V_1(\boldsymbol{X}_1)&-\int_0^1\sum_{y\neq \boldsymbol{X}_t}\boldsymbol{A}^0_t(\boldsymbol{X}_t,\boldsymbol{y})\left(1-\mathrm{e}^{V_t(\boldsymbol{y})-V_t(\boldsymbol{X}_t)}\right)dt -\sum_{t: \boldsymbol{X}_{t}\neq \boldsymbol{X}_{t-}}(V_t(\boldsymbol{X}_t)-V_t(\boldsymbol{X}_{t^-}))
\end{align}
\end{small}
where the final equality follows from Lemma \ref{lemma:optimal-generator}. Substituting this expression for $V_0(\boldsymbol{X}_0)$ into (\ref{eq:proof-2}), terms cancel and the RND reduces to:
\begin{align}
    \log\frac{\mathrm{d}\mathbb{P}^\star}{\mathrm{d}\mathbb{P}^0}(\boldsymbol{X}_{0:1})=V_1(\boldsymbol{X}_1)-\log Z\implies \frac{\mathrm{d}\mathbb{P}^\star}{\mathrm{d}\mathbb{P}^0}(\boldsymbol{X}_{0:1})=\frac{1}{Z}\mathrm{e}^{V_1(\boldsymbol{X}_1)}
\end{align}
where $V_1(\boldsymbol{X}_1)=r(\boldsymbol{X}_1)/\alpha$, we conclude our proof. \hfill $\square$

Finally, we are ready to derive the RND between the optimal path measure and the parameterized path measure.

\label{app-prop:rndparam}
\begin{tcolorbox}[sharp corners, colback=mybg, boxrule=0pt]
\rndparam*
\end{tcolorbox}

\textit{Proof.} First, we decompose the RND as:
\begin{align}
    \log\frac{\mathrm{d}\mathbb{P}^\star}{\mathrm{d}\mathbb{P}^v}(\boldsymbol{X}_{0:1})&=\log\frac{\mathrm{d}\mathbb{P}^\star}{\mathrm{d}\mathbb{P}^0}\frac{\mathrm{d}\mathbb{P}^0}{\mathrm{d}\mathbb{P}^v}(\boldsymbol{X}_{0:1})=\log\frac{\mathrm{d}\mathbb{P}^\star}{\mathrm{d}\mathbb{P}^0}(\boldsymbol{X}_{0:1})+\log\frac{\mathrm{d}\mathbb{P}^0}{\mathrm{d}\mathbb{P}^v}(\boldsymbol{X}_{0:1})\label{eq:proof-4}
\end{align}

First, we derive the $\log\frac{\mathrm{d}\mathbb{P}^0}{\mathrm{d}\mathbb{P}^v}(\boldsymbol{X}_{0:1})$ where $\mathbb{P}^0$ is the path measure of the pretrained model $(f^{\text{pre}}, g^{\text{pre}})$ and $\mathbb{P}^v$ is the path measure generated from the tilted models $(f^v, g^v)$. Let $t_i$ denote the times of insertion events and $t_u$ denote the times of unmasking events. Then, the log RND decomposes into:
\begin{small}
\begin{align}
    \log \frac{\mathrm{d}\mathbb{P}^0}{\mathrm{d}\mathbb{P}^v}(\boldsymbol{X}_{0:1})&=\log\frac{\mathrm{d}\mathbb{P}_0^0}{\mathrm{d}\mathbb{P}_0^v}(\boldsymbol{X}_0)+\sum_{t_u:\boldsymbol{X}_{t_u}\neq  \boldsymbol{X}_{t_u-}}\log \frac{\boldsymbol{Q}^0_{t_u}(\boldsymbol{X}_{t_u-}, \boldsymbol{X}_{t_u})}{\boldsymbol{Q}^v_{t_u}(\boldsymbol{X}_{t_u-}, \boldsymbol{X}_{t_u})}+\int_0^1\sum_{\boldsymbol{z}\neq \boldsymbol{X}_{t}}(\boldsymbol{Q}^v_{t}-\boldsymbol{Q}^0_{t})(\boldsymbol{X}_{t}, \boldsymbol{z})dt\nonumber\\
    &+\sum_{t_i: \boldsymbol{X}_{t_i}\neq \boldsymbol{X}_{t_i-}}\log \frac{\boldsymbol{R}^0_{t_i}(\boldsymbol{X}_{t_i-}, \boldsymbol{X}_{t_i})}{\boldsymbol{R}^v_{t_i}(\boldsymbol{X}_{t_i-}, \boldsymbol{X}_{t_i})}+\int_0^1\sum_{\boldsymbol{y}\neq \boldsymbol{X}_{t}}(\boldsymbol{R}^v_{t}-\boldsymbol{R}^0_{t})(\boldsymbol{X}_{t}, \boldsymbol{y})dt
\end{align}
\end{small}

Now, we recall the form of the any-length MDM unmasking rate:
\begin{align}
    \boldsymbol{Q}^v_t(\boldsymbol{x}, \boldsymbol{x}^{\ell \gets \boldsymbol{v}})&=\frac{\dot{\beta}_t}{1-\beta_t}\cdot \mathbb{P}(\boldsymbol{x}_1^{s_t[i]}=\boldsymbol{v}|\boldsymbol{x}_t=\boldsymbol{x}), \quad\boldsymbol{v}\in \mathcal{V}, \boldsymbol{x}^i=\boldsymbol{M}\\
    &=\frac{\dot{\beta}_t}{1-\beta_t}\cdot f^v(\boldsymbol{x},t)[i, \boldsymbol{v}]
\end{align}
where the sum over the vocabulary $\sum_{d\in \mathcal{V}}f_\theta(\boldsymbol{x},t)[i, \boldsymbol{v}]= 1$. For a state $\boldsymbol{x}$ with $|\{\ell: \boldsymbol{x}^\ell=\boldsymbol{M}\}|$ masked positions, we have that the \textbf{exit rate} from state $\boldsymbol{x}$ reduces to: 
\begin{align}
    \sum_{\boldsymbol{y}\neq \boldsymbol{x}}\boldsymbol{Q}_t^v(\boldsymbol{x}, \boldsymbol{y})=\sum_{\ell:\boldsymbol{x}^\ell=\boldsymbol{M}}\sum_d\boldsymbol{Q}_t^v(\boldsymbol{x}, \boldsymbol{x}^{\ell\gets \boldsymbol{v}})=\frac{\dot{\beta}_t}{1-\beta_t}\sum_{\ell:\boldsymbol{x}^\ell=\boldsymbol{M}}1=\frac{\dot{\beta}_t}{1-\beta_t}\big|\{\ell:\boldsymbol{x}^\ell=\boldsymbol{M}\}\big|
\end{align}
Given the noise schedule $\beta_t$ is equal for both the pretrained and fine-tuned models, the number of masked positions along the interpolant is equal. So we can write: 
\begin{align}
    \sum_{\boldsymbol{y}\neq \boldsymbol{x}}\boldsymbol{Q}_t^0(\boldsymbol{x}, \boldsymbol{y})=\frac{\dot{\beta}_t}{1-\beta_t}\big|\{\ell:\boldsymbol{x}^\ell=\boldsymbol{M}\}\big|
\end{align}
and the exit term vanishes:
\begin{align}
    \int_0^1\sum_{y\neq \boldsymbol{X}_{t_u}}(\boldsymbol{Q}_{t_u}^v-\boldsymbol{Q}^0_{t_u})(\boldsymbol{X}_{t_u}, y)\mathrm{d}t=0
\end{align}

The insertion rate is defined as:
\begin{align}
    \boldsymbol{R}^v_t(\boldsymbol{x}, \boldsymbol{x}^{\insertat{\ell} \boldsymbol{M}})&=\frac{\dot{\alpha}_t}{1-\alpha_t}\cdot \mathbb{E}[s_t[\ell]-s_t[\ell-1]-1|\boldsymbol{x}_t=\boldsymbol{x}]\\
    &=\frac{\dot{\alpha}_t}{1-\alpha_t}g^v(\boldsymbol{x},t)[\ell]
\end{align}
Since the insertion rate depends on the model outputs, we have that $\boldsymbol{R}^0_t(\boldsymbol{x}, \boldsymbol{x}^{\insertat{\ell} \boldsymbol{M}})\neq \boldsymbol{R}^v_t(\boldsymbol{x}, \boldsymbol{x}^{\insertat{\ell} \boldsymbol{M}})$. In this case, the exit term is defined as:
\begin{small}
\begin{align}
    \int_0^1\sum_{y\neq \boldsymbol{X}_{t_u}}(\boldsymbol{R}_{t}^v-\boldsymbol{R}^0_{t})(\boldsymbol{X}_{t}, y)dt=\int_0^1\frac{\dot{\alpha}_t}{1-\alpha_t}\left(\sum_\ell g^v(\boldsymbol{X}_{t},t)[\ell]-\sum_\ell g_\theta^{\text{pre}}(\boldsymbol{X}_{t},t)[\ell]\right)dt
\end{align}
\end{small}

Putting these terms together, we get the final form of the log RND between any-length path measures:
\begin{small}
\begin{align}
    &\log\frac{\mathrm{d}\mathbb{P}^0}{\mathrm{d}\mathbb{P}^v}(\boldsymbol{X}_{0:1})=\sum_{t_u:\boldsymbol{X}_{t_u}\neq  \boldsymbol{X}_{t_u-}}\sum_{\ell: \boldsymbol{X}^\ell_{t_u-}\neq \boldsymbol{X}^\ell_{t_u}}\log \frac{f^{\text{pre}}(\boldsymbol{X}_{t_u}, t)[\ell, \boldsymbol{v}]}{f^v(\boldsymbol{X}_{t_u}, t)[\ell, \boldsymbol{v}]}\nonumber\\
    &+\sum_{t_i: \boldsymbol{X}_{t_i}\neq \boldsymbol{X}_{t_i-}}\sum_{\ell: \boldsymbol{X}_{t_i}= \boldsymbol{X}^{\insertat{\ell}\boldsymbol{M}}_{t_i^-}}\log \frac{g_\theta^{\text{pre}}(\boldsymbol{X}_{t_i}, t_i)[\ell]}{g^v(\boldsymbol{X}_{t_i}, t_i)[\ell]}+\int_0^1\frac{\dot{\alpha}_t}{1-\alpha_t}\left(\sum_\ell g^v(\boldsymbol{X}_{t},t)[\ell]-\sum_\ell g^{\text{pre}}(\boldsymbol{X}_{t},t)[\ell]\right)dt\label{eq:proof-3}
\end{align}
\end{small}
Finally, substituting the result from Lemma \ref{lemma:rnd-optimal} and (\ref{eq:proof-3}) into (\ref{eq:proof-4}), we get:
\begin{small}
\begin{align}
    \log\frac{\mathrm{d}\mathbb{P}^\star}{\mathrm{d}\mathbb{P}^v}(\boldsymbol{X}_{0:1})&=\log\frac{\mathrm{d}\mathbb{P}^\star}{\mathrm{d}\mathbb{P}^0}(\boldsymbol{X}_{0:1})+\log\frac{\mathrm{d}\mathbb{P}^0}{\mathrm{d}\mathbb{P}^v}(\boldsymbol{X}_{0:1})\nonumber\\
    &=\frac{r(\boldsymbol{X}_1)}{\alpha}-\log Z+\sum_{t_u:\boldsymbol{X}_{t_u}\neq  \boldsymbol{X}_{t_u-}}\sum_{\ell: \boldsymbol{X}^\ell_{t_u}\neq \boldsymbol{X}^\ell_{t_u^-}}\log \frac{f^{\text{pre}}(\boldsymbol{X}_{t_u}, t)[\ell, \boldsymbol{v}]}{f^v(\boldsymbol{X}_{t_u}, t)[\ell, \boldsymbol{v}]}\nonumber\\
    &+\sum_{t_i: \boldsymbol{X}_{t_i}\neq \boldsymbol{X}_{t_i-}}\sum_{\ell: \boldsymbol{X}_{t_i}= \boldsymbol{X}^{\insertat{\ell}\boldsymbol{M}}_{t_i^-}}\log \frac{g_\theta^{\text{pre}}(\boldsymbol{X}_{t_i}, t_i)[\ell]}{g^v(\boldsymbol{X}_{t_i}, t_i)[\ell]}\nonumber\\
    &+\int_0^1\frac{\dot{\alpha}_t}{1-\alpha_t}\left(\sum_\ell g^v(\boldsymbol{X}_{t},t)[\ell]-\sum_\ell g^{\text{pre}}(\boldsymbol{X}_{t},t)[\ell]\right)dt
\end{align}
\end{small}
which concludes our proof. \hfill $\square$

We use this form of the log RND to define the importance weight $W^v(\boldsymbol{X}_{0:1}):=\log \frac{\mathrm{d}\mathbb{P}^\star}{\mathrm{d}\mathbb{P}^v}(\boldsymbol{X}_{0:1})$ in the AJD loss, which yields the optimal any-length path measure $\mathbb{P}^\star$, as we will prove in Prop \ref{prop:ajd-loss} below.

\begin{proposition}[Adaptive Joint Decoding Loss]\label{prop:ajd-loss}
    The unique minimizer of the \textbf{Adaptive Joint Decoding (AJD)} loss defined as:
    \begin{small}
    \begin{align}
        \mathcal{L}_{\text{AJD}}(\theta, \phi):=\underset{\boldsymbol{X}_{0:1}\sim \mathbb{P}^v}{\mathbb{E}}\left[\frac{1}{Z}\mathrm{e}^{W^v}\big[\mathcal{L}_{\text{unmask}}(\theta;\boldsymbol{X}_1)+\mathcal{L}_{\text{insert}}(\theta;\boldsymbol{X}_1)+\mathcal{L}_{\text{UQL}}(\phi;\boldsymbol{X}_1)+\mathcal{L}_{\text{IQL}}(\phi;\boldsymbol{X}_1)\big]\right]\nonumber
    \end{align}
    \end{small}
    is the optimal unmasking generator $\boldsymbol{Q}^\star$ and insertion generator $\boldsymbol{R}^\star$ of the reward-tilted path measure $\mathbb{P}^\star$
\end{proposition}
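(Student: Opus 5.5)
The plan is to reduce the reweighted objective to an ordinary training objective under the tilted terminal law and then invoke the unique-minimizer results for each component. Since $W^v=\log\frac{\mathrm{d}\mathbb{P}^\star}{\mathrm{d}\mathbb{P}^v}$ by Prop \ref{prop:rnd-param}, and each bracketed loss depends on the trajectory only through $\boldsymbol{X}_1$, a change of measure gives
\begin{align}
\mathcal{L}_{\text{AJD}}(\theta,\phi)\propto\mathbb{E}_{\boldsymbol{X}_{0:1}\sim\mathbb{P}^\star}\big[\mathcal{L}_{\text{unmask}}(\theta;\boldsymbol{X}_1)+\mathcal{L}_{\text{insert}}(\theta;\boldsymbol{X}_1)+\mathcal{L}_{\text{UQL}}(\phi;\boldsymbol{X}_1)+\mathcal{L}_{\text{IQL}}(\phi;\boldsymbol{X}_1)\big],
\end{align}
and this equals $\mathbb{E}_{\boldsymbol{x}_1\sim p_{\text{target}}}[\cdots]$ because $\mathbb{P}^\star_1=p_{\text{target}}$. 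The mutual absolute continuity in observation (iv) of App \ref{app:observations} is what makes this change of measure legitimate. The constant $1/Z$, or the ratio $e^{W^v}/\mathbb{E}[e^{W^v}]$ in the self-normalized version, is positive and independent of $(\theta,\phi)$, so it does not affect the argmin.

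Next I separate the variables. The $\theta$-terms and the $\phi$-terms are additive. I treat $\mathcal{L}_{\text{UQL}}$ and $\mathcal{L}_{\text{IQL}}$ as depending on $\phi$ with $f_\theta,g_\theta$ fixed at the optimum, which matches the alternating scheme. Minimizing in $\theta$ is then exactly the any-length MDM training problem of \citet{kim2025any} with data distribution $p_{\text{target}}$. The Bregman loss $\phi(x,y)=y-x\log y$ is uniquely minimized pointwise at $y=\mathbb{E}[x\mid\boldsymbol{x}_t]$. The cross-entropy loss is uniquely minimized at the true posterior $\mathbb{P}(\boldsymbol{x}_1^{s_t[\ell]}=\cdot\mid\boldsymbol{x}_t)$. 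Both expectations are taken under the interpolant from $p_{\text{target}}$. These optima give the rates $\boldsymbol{R}_t$ and $\boldsymbol{Q}_t$ displayed in App \ref{app:any-length} for the target interpolant. By Props \ref{prop:unmaskloss} and \ref{prop:insertionloss}, the $\phi$-minimizers are $\mu_\star$ and $\nu_\star$, each computed with respect to $p_{\text{target}}$.

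The remaining step is to identify these interpolant rates with the generators $\boldsymbol{Q}^\star,\boldsymbol{R}^\star$ of Lemma \ref{lemma:optimal-generator}, that is, $\boldsymbol{A}^0e^{V_t(\boldsymbol{y})-V_t(\boldsymbol{x})}$. My approach is to note that the reference generator $\boldsymbol{A}^0$ is itself the interpolant generator for $p_{\text{data}}$, and that Lemma \ref{lemma:optimal-path-measure} gives $\mathbb{P}^\star_t\propto\mathbb{P}^0_t e^{V_t}$. Then $e^{V_t(\boldsymbol{x})}=\mathbb{E}_{\mathbb{P}^0}[e^{r(\boldsymbol{X}_1)/\alpha}\mid\boldsymbol{X}_t=\boldsymbol{x}]$, the Doob $h$-transform. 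Applying Bayes' rule to the posteriors $\mathbb{P}(\boldsymbol{x}_1^{s_t[\ell]}=\boldsymbol{v}\mid\boldsymbol{x}_t)$ and to the expected gap counts under the tilted data, and using that the forward corruption kernel $p_t(\cdot\mid\boldsymbol{x}_1)$ is unchanged by tilting, yields exactly $f^{\text{pre}}e^{V(\boldsymbol{y})-V(\boldsymbol{x})}$ and $g^{\text{pre}}e^{V(\boldsymbol{y})-V(\boldsymbol{x})}$. This identification is the main obstacle. For insertion it requires showing that the gap-count expectation reweights by the value of the state after a single inserted mask, which I would check by writing the conditional expectation as a sum over which gap position is inserted next. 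Uniqueness then follows from the strict convexity of each pointwise loss, on the support guaranteed by (iv).
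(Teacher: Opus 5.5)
Your argument is correct, but it takes a different route from the paper. The paper works on path space. It starts from $\mathcal{F}_{\text{CE}}=D_{\text{KL}}(\mathbb{P}^\star\|\mathbb{P}^{\theta,\phi})$, whose unique minimizer is $\mathbb{P}^\star$, rewrites it with $e^{W^v}$, and splits $-\log\mathbb{P}^{\theta,\phi}(\boldsymbol{X}_{0:1})$ into per-jump insertion and unmasking terms. It then asserts, following MDNS, that this equals $\mathcal{L}_{\text{unmask}}+\mathcal{L}_{\text{insert}}+\mathcal{L}_{\text{UQL}}+\mathcal{L}_{\text{IQL}}$ plus a constant. The identification with $(\boldsymbol{Q}^\star,\boldsymbol{R}^\star)$ is therefore inherited from the KL and never computed.

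You never touch the likelihood of $\mathbb{P}^{\theta,\phi}$. The change of measure reduces $\mathcal{L}_{\text{AJD}}$ to ordinary any-length training on $p_{\text{target}}$. Strict pointwise convexity then gives the minimizers, and a Doob $h$-transform matches them with Lemma~\ref{lemma:optimal-generator}.

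This buys rigor exactly where the paper is heuristic. The paper's identity is not literally true: its left side depends on the whole trajectory, its right side only on $\boldsymbol{X}_1$, and the quality heads do not enter the CTMC likelihood at all. Your stop-gradient on $f_\theta,g_\theta$ inside $\mathcal{L}_{\text{UQL}},\mathcal{L}_{\text{IQL}}$ is also genuinely needed. Under fully joint minimization, the $\theta$-dependence of those terms could pull the $\theta$-minimizer away from the posterior and the gap-count expectation.

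The price is the identification step, and there you should make one fact explicit. The quantity $\mathbb{E}_{\mathbb{P}^0}[e^{r(\boldsymbol{X}_1)/\alpha}\mid\boldsymbol{X}_t=\boldsymbol{x}]$ equals the interpolant quantity $\mathbb{E}_{p_{\text{data}}}[e^{r(\boldsymbol{x}_1)/\alpha}\mid\boldsymbol{x}_t=\boldsymbol{x}]$ only if the (exact) pretrained CTMC reproduces the interpolant's joint law of $(\boldsymbol{X}_t,\boldsymbol{X}_1)$, not just its one-time marginals. That does hold here. Given $\boldsymbol{x}_1$, the insertion and unmasking clocks are i.i.d.\ and independent of token values. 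So the likelihood of the path up to time $t$ depends on $\boldsymbol{x}_1$ only through $(\boldsymbol{x}_t,\boldsymbol{x}_1)$, and the interpolant mixture is itself Markov.

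With that in hand you could even skip the Bayes computations. The $p_{\text{target}}$-interpolant path law is $\frac{1}{Z}e^{r(\boldsymbol{x}_1)/\alpha}$ times the $p_{\text{data}}$-interpolant path law $\mathbb{P}^0$, hence it equals $\mathbb{P}^\star$. Its generator is exactly what your losses recover. Your size-biased check for insertion is nonetheless right: alignments of $\boldsymbol{x}_t$ with one extra mask in gap $\ell$ are alignments of $\boldsymbol{x}_t$ together with a choice among the $s_t[\ell]-s_t[\ell-1]-1$ missing positions, and the likelihood ratio depends only on $t$.

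Finally, the change of measure needs $\mathbb{P}^\star\ll\mathbb{P}^v$. That requires observation (iv) together with $f^v,g^v>0$ wherever the pretrained rates are positive, not (iv) alone.
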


\textit{Proof.} First, we recall the form of the cross-entropy loss as:
\begin{align}
    \mathcal{F}_{\text{CE}}(\mathbb{P}^{\theta, \phi}, \mathbb{P}^\star):=D_{\text{KL}}(\mathbb{P}^\star\|\mathbb{P}^{\theta, \phi})=\mathbb{E}_{\mathbb{P}^\star}\left[\log \frac{\mathrm{d}\mathbb{P}^\star}{\mathrm{d}\mathbb{P}^{\theta, \phi}}\right]=\mathbb{E}_{\mathbb{P}^{v}}\bigg[\frac{\mathrm{d}\mathbb{P}^\star}{\mathrm{d}\mathbb{P}^{v}}\log \frac{\mathrm{d}\mathbb{P}^\star}{\mathrm{d}\mathbb{P}^{\theta, \phi }}\bigg]
\end{align}
Then, we substitute $W^v(\boldsymbol{X}_{0:1})=\log \frac{\mathrm{d}\mathbb{P}^\star}{\mathrm{d}\mathbb{P}^v}(\boldsymbol{X}_{0:1})$. Since $\mathbb{P}^\star$ is fixed under $(\theta, \phi)$, minimizing the CE loss is equivlalent to minimizing:
\begin{align}
    \min_{\theta, \phi}\mathbb{E}_{\mathbb{P}^{v}}\bigg[\frac{\mathrm{d}\mathbb{P}^\star}{\mathrm{d}\mathbb{P}^{v}}\log \frac{\mathrm{d}\mathbb{P}^\star}{\mathrm{d}\mathbb{P}^{\theta, \phi }}\bigg]=\min_{\theta, \phi}\mathbb{E}_{\mathbb{P}^{v}}\bigg[\frac{1}{Z}\mathrm{e}^{W^v(\boldsymbol{X}_{0:1})}(-\log \mathrm{d}\mathbb{P}^{\theta, \phi }(\boldsymbol{X}_{0:1}))\bigg]\label{eq:minimize-ce}
\end{align}
Since the path measure $\mathbb{P}^{\theta, \phi}$ is defined by disjoint insertion and unmasking decisions, the probability of a path $\boldsymbol{X}_{0:1}$ under $\mathbb{P}^{\theta, \phi}$ can be written as the product of insertion and unmasking rates:
\begin{align}
    \mathrm{d}\mathbb{P}^{\theta, \phi}(\boldsymbol{X}_{0:1})&=\mathbb{P}_0^{\theta, \phi}(\boldsymbol{X}_0)\prod_{t:\boldsymbol{X}_{t^-}\neq \boldsymbol{X}_{t}}\mathbb{P}^{\theta, \phi}(\boldsymbol{X}_{t^-}| \boldsymbol{X}_{t})\nonumber\\
    &=\mathbb{P}_0^{\theta, \phi}(\boldsymbol{X}_0)\prod_{t_i:\boldsymbol{X}_{t_i^-}\neq \boldsymbol{X}_{t_i}}\mathbb{P}^{\theta, \phi}(\boldsymbol{X}_{t_i^-}| \boldsymbol{X}_{t_i})\prod_{t_u:\boldsymbol{X}_{t_u^-}\neq \boldsymbol{X}_{t_u}}\mathbb{P}^{\theta, \phi}(\boldsymbol{X}_{t_u^-}| \boldsymbol{X}_{t_u})
\end{align}
Taking logs, we have:
\begin{small}
\begin{align}
    -\log \mathbb{P}^{\theta, \phi}(\boldsymbol{X}_{0:1})&=-\log \mathbb{P}_0^{\theta, \phi}(\boldsymbol{X}_0)+\sum_{t_i:\boldsymbol{X}_{t_i}\neq \boldsymbol{X}_{t_i-}}-\log \mathbb{P}^{\theta, \phi}(\boldsymbol{X}_{t_i}| \boldsymbol{X}_{t_i^-})+\sum_{t_u:\boldsymbol{X}_{t_u}\neq \boldsymbol{X}_{t_u-}}-\log \mathbb{P}^{\theta, \phi}(\boldsymbol{X}_{t_u}| \boldsymbol{X}_{t_u^-})\nonumber
\end{align}
\end{small}
Instead of defining the loss with the probability of generating $\boldsymbol{X}_1\sim p_{\text{target}}$ via only the single trajectory $\mathbb{P}^{\theta, \phi}(\boldsymbol{X}_{0:1})$, we can define it with the expectation over many possible trajectories by taking an expectation over intermediate samples from the endpoint-conditioned interpolant $\tilde{\boldsymbol{x}}_t\sim p_t(\cdot|\boldsymbol{X}_1)$ and times $t\sim \mathcal{U}(0,1)$ following \citep{zhu2025mdns}. Then, we have:
\begin{small}
\begin{align}
    -\log \mathbb{P}^{\theta, \phi}(\boldsymbol{X}_{0:1})&=-\log \mathbb{P}_0^{\theta, \phi}(\boldsymbol{X}_0)+\underset{t\sim \mathcal{U}(0,1)}{\mathbb{E}}\underset{\tilde{\boldsymbol{x}}_t\sim p_t(\cdot|\boldsymbol{X}_1)}{\mathbb{E}}\left[-\log \mathbb{P}_{\text{insert}}^{\theta, \phi}(\boldsymbol{X}_{t_i}|\tilde{\boldsymbol{x}}_t, \boldsymbol{X}_1)-\log \mathbb{P}_{\text{unmask}}^{\theta, \phi}(\boldsymbol{X}_{t_u}|\tilde{\boldsymbol{x}}_t, \boldsymbol{X}_1)\right]\nonumber
\end{align}
\end{small}
Since we decompose each step into unmasking via $f_\theta$ and remasking via $\mu_\phi$ and insertion via $g_\theta$ and deletion under $\nu_\phi$ and define the losses as a form of the negative log-likelihood of the step given the \textbf{ground-truth} sequence $\boldsymbol{X}_1\sim p_{\text{target}}$, we can write:
\begin{align}
    -\log \mathbb{P}^{\theta, \phi}(\boldsymbol{X}_{0:1})=\underbrace{\mathcal{L}_{\text{unmask}}(\theta;\boldsymbol{X}_1)}_{(\ref{loss:unmask})}+\underbrace{\mathcal{L}_{\text{insert}}(\theta;\boldsymbol{X}_1)}_{(\ref{loss:insert})}+\underbrace{\mathcal{L}_{\text{UQL}}(\phi;\boldsymbol{X}_1)}_{(\ref{loss:uql})}+\underbrace{\mathcal{L}_{\text{IQL}}(\phi;\boldsymbol{X}_1)}_{(\ref{loss:iql})}+\text{const}
\end{align}
Therefore, the minimizer in (\ref{eq:minimize-ce}) is equivalent to:
\begin{small}
\begin{align}
    \min_{\theta, \phi}&\mathbb{E}_{\mathbb{P}^{v}}\bigg[\frac{\mathrm{d}\mathbb{P}^\star}{\mathrm{d}\mathbb{P}^{v}}\log \frac{\mathrm{d}\mathbb{P}^\star}{\mathrm{d}\mathbb{P}^{\theta, \phi }}\bigg]\nonumber\\
    &=\min_{\theta, \phi}\mathbb{E}_{\mathbb{P}^{v}}\bigg[\frac{1}{Z}\mathrm{e}^{W^v(\boldsymbol{X}_{0:1})}\big[\mathcal{L}_{\text{unmask}}(\theta;\boldsymbol{X}_1)+\mathcal{L}_{\text{insert}}(\theta;\boldsymbol{X}_1)+\mathcal{L}_{\text{UQL}}(\phi;\boldsymbol{X}_1)+\mathcal{L}_{\text{IQL}}(\phi;\boldsymbol{X}_1)\big]\bigg]
\end{align}
\end{small}
and defining $\mathcal{L}_{\text{AJD}}(\theta, \phi)$ as:
\begin{small}
\begin{align}
    \mathcal{L}_{\text{AJD}}(\theta, \phi):=\underset{\boldsymbol{X}_{0:1}\sim \mathbb{P}^v}{\mathbb{E}}\left[\frac{1}{Z}\mathrm{e}^{W^v}\big[\mathcal{L}_{\text{unmask}}(\theta;\boldsymbol{X}_1)+\mathcal{L}_{\text{insert}}(\theta;\boldsymbol{X}_1)+\mathcal{L}_{\text{UQL}}(\phi;\boldsymbol{X}_1)+\mathcal{L}_{\text{IQL}}(\phi;\boldsymbol{X}_1)\big]\right]
\end{align}
\end{small}
concludes our proof. \hfill $\square$

\section{Small Molecule Experiment Details}
\label{app:molecule-exp}
\subsection{Pre-Training the Any-Length Small-Molecule MDM}

\paragraph{Data} 
We represent molecules using SAFE (Sequential Attachment-based Fragment Embedding) strings using the SAFE tokenizer with a vocabulary size of $V=1880$, including the special pad ($\texttt{id}=3$) and mask ($\texttt{id}=4$) tokens \citep{noutahi2024gotta}. All sequences are tokenized with padding and truncation to a maximum length of $L=256$ tokens. 

The model is pretrained on the SAFE-GPT dataset (\texttt{datamol-io/safe-gpt}) \citep{noutahi2024gotta} hosted on HuggingFace, which contains over 1 billion drug-like molecules sourced from ZINC20 and other chemical databases. The dataset is split into $945,455,307$ training sequences, $118,451,032$ validation sequences, and $118,890,444$ test sequences. 

\paragraph{Model Architecture}
The flexible-length model architecture is a 96M-parameter bidirectional transformer based on the Diffusion Transformer (DiT) adapted for discrete masked diffusion over variable-length sequences. The architecture consists of \textbf{(1) token embedding layer} that projects the vocabulary with $V=1880$ to embeddings of dimension $d=768$, \textbf{(2) timestep conditioning} that encodes the scalar timestep $t\in [0,1]$ via sinusoidal positional embeddings (frequency dimension 256), projected through a 2-layer MLP with SiLU activation to a conditioning vector of dimension $d_c=128$, and \textbf{(3) Transformer backbone} with 12 DiT blocks with Rotary Position Embeddings (RoPE, base frequency 10,000) and FlexAttention with block masking for variable-length sequence support. There are two prediction heads: \textbf{(1) token prediction head} that predicts the unmasking posterior in the form of logits over the vocabulary dimension and \textbf{(2) length prediction head} which predicts the per-position expected gap count. The unmasking quality predictor $\mu_\phi$ and insertion quality predictor $\nu_\phi$ are lightweight heads that operate on the final-layer hidden states of the shared backbone, where each applies a LayerNorm followed by a 2-layer MLP (dimensions $d\to d\to 1$) with a GELU activation, producing a per-position scalar that is passed through a sigmoid to yield a quality value in $[0,1]$. Hyperparameters are provided in Table \ref{table:hyperparmeters-pretrain}.

\paragraph{Training Details}
The model is trained for $87,000$ gradient steps using AdamW with a learning rate of $3\times 10^{-4}$ and weight decay of 0.03. The learning rate follows a linear warmup over the first 2,000 steps from $3\times 10^{-10}$ to $3\times 10^{-3}$ followed by cosine annealing to zero over the remaining steps. Training uses a global batch size of $2,048$, implemented with a per-GPU batch size of 64 across 2 GPUs with 16 gradient accumulation steps. An exponential moving average (EMA) of model parameters is maintained with decay rate $0.9999$. 

\begin{table*}[t]
\caption{\textbf{Hyperparameters for pre-training any-length peptide and small molecule masked diffusion models.}}
\label{table:hyperparmeters-pretrain}
\begin{center}
\begin{small}
\resizebox{0.6\linewidth}{!}{
\begin{tabular}{@{}lcc@{}}
\toprule
 \textbf{Hyperparameter} & \textbf{Small Molecule} & \textbf{Peptides} \\
\midrule
Hidden dimension & $768$ & $768$ \\
Attention heads & $12$ & $12$ \\
Transformer blocks & $12$ & $12$ \\
Timestep conditioning dimension & $128$ & $128$ \\
MLP expansion ratio & $4$ & $4$ \\
Dropout & $0.05$ & $0.05$ \\
Positional encoding & Rotary (RoPE) & Rotary (RoPE) \\
Vocabulary size & $1,882$ & $587$ \\
Max sequence length & $256$ & $1,024$ \\
Global batch size & $2,048$ & $1,024$ \\
Optimizer & AdamW & AdamW \\
Gradient accumulation steps & $16$ & $4$ \\
\bottomrule
\end{tabular}
}
\end{small}
\end{center}
\end{table*}

\subsection{Rewards and Evaluation Metrics}
\paragraph{Uniqueness} 
The fraction of valid generated molecules whose canonical SMILES strings are distinct. Higher uniqueness indicates that the model has not collapsed onto a small set of high-reward modes.

\paragraph{QED}
The \textbf{Q}uantitative \textbf{E}stimate of \textbf{D}rug likeness (QED) \citep{bickerton2012quantifying} is a scalar in $[0, 1]$ that aggregates eight physicochemical properties (molecular weight, logP, hydrogen-bond donors and acceptors, polar surface area, rotatable bonds, aromatic rings, and structural alerts) into a single drug-likeness score. We compute QED using the RDKit implementation and use it both as a fine-tuning reward and as an evaluation metric.

\paragraph{Synthetic Accessibility}
The Synthetic Accessibility (SA) score \citep{ertl2009estimation} estimates the ease of synthesizing a molecule on a scale from $1$ (easy) to $10$ (hard), based on fragment contributions and structural complexity penalties. Since the reward is maximized and summed with QED, SA is flipped and rescaled so higher = more synthesizable. The SA reward is transformed as $\text{reward}_{\text{SA}} = \frac{1}{1 + \text{SA}}$. Empirically, we found that optimizing only QED without the SA in the reward also led to a decrease in SA. We report SA via RDKit, with lower values preferred. 

\paragraph{Quality} 
Following \citet{jin2020multi}, we report \emph{quality} as the fraction of generated molecules that are simultaneously valid, unique, drug-like, and synthesizable. A molecule is considered drug-like if its QED~\citep{bickerton2012quantifying} satisfies $\text{QED} \geq 0.6$ and synthesizable if its SA score~\citep{ertl2009estimation} satisfies $\text{SA} \leq 4$. Quality thus collapses validity, uniqueness, drug-likeness, and synthesizability into a single metric that measures whether the model can generate chemically reasonable and distinct molecules.

\paragraph{Diversity}
The average pairwise Tanimoto distance ($1 - \texttt{Tanimoto similarity}$) between Morgan fingerprints (radius $2$ and $2048$ bits) of all valid generated molecules. Higher diversity indicates broader coverage of chemical space.

\subsection{Fine-Tuning Details}
Fine-tuning with A2D2 follows Algorithm \ref{alg:a2d2} with hyperparameters provided in Table \ref{table:hyperparmeters-finetune}. For the A2D2 baseline without quality, we only perform the first loop in Algorithm \ref{alg:a2d2}, which fine-tunes the policy model without the planner and does not alternate to the second loop. For the baselines with only insertion or unmasking quality, we alternate to the planner fine-tuning but using only quality-based unmasking \textit{or} quality-based insertions. During buffer generation, we retain only the sequences that pass the quality threshold and weight them by their log RND added to the QED and SA rewards. Fine-tuning was performed on 1 NVIDIA A100 GPU per run.

\begin{table*}[t]
\caption{\textbf{Hyperparameters for small molecule and peptide fine-tuning.} The same base hyperparameters were used for all A2D2 baselines as well as fixed-length RL fine-tuning results.}
\label{table:hyperparmeters-finetune}
\begin{center}
\begin{small}
\resizebox{0.7\linewidth}{!}{
\begin{tabular}{@{}lcc@{}}
\toprule
 \textbf{Hyperparameter} & \textbf{Small Molecule} & \textbf{Peptides} \\
\midrule
Number of Replicates $R$  & $16$ & $8$ \\
Buffer Size $B$ & $1000$ & $1000$ \\
Buffer Refresh Fraction & $0.3$ & $0.3$ \\
Iterations Between Buffer Generation $N_{\text{resample}}$ & $10$ & $10$ \\
Gradient Steps Per Iteration $N_{\text{update}}$ & $25$ & $10$ \\ 
Alternation Frequency $N_{\text{alt}}$ & $5$ & $5$ \\
Warmup Iterations $N_{\text{warmup}}$ & $20$ & $20$ \\
Number of Sampling Steps $N_{\text{steps}}$ & $256$ & $256$ \\
Training Mini Batch Size & $20$ & $10$ \\
Reward Scaling $\alpha$ & $0.01$ & $0.1$ \\
Max Length $L$ & $256$ & $256$ \\
\bottomrule
\end{tabular}
}
\end{small}
\end{center}
\end{table*}

\section{Peptide Experiment Details}
\label{app:peptide-exp}

\subsection{Pre-Training the Any-Length Peptide MDM}

\paragraph{Data}
The any-length peptide MDM is pretrained on a corpus of approximately 11 million peptide SMILES sequences from \citet{tang2025peptune}, containing $7451$ sequences from the CycPeptMPDB database \citep{li2023cycpeptmpdb}, $825,632$ unique peptides from SmProt \citep{li2021smprot}, and approximately $10$ million modified peptides generated from CycloPs \citep{duffy2011cyclops, feller2025peptide}. We leverage the SMILES Pair Encoding tokenizer with vocabulary size $V=587$ \citep{feller2025peptide}. The data is split with a $0.8/0.2$ train/validation ratio and batched with dynamic batch sizes to handle variable-length token sequences with a per-GPU token count set to $16,000$.

\paragraph{Model Architecture and Training Details}
We leverage the same model architecture and training procedure as in the small-molecule experiment, with specific hyperparameters provided in Table \ref{table:hyperparmeters-pretrain}. Pre-training was performed on 4 NVIDIA A100 GPUs until convergence. The checkpoint at $46$ epochs was used for fine-tuning.

\subsection{Rewards and Evaluation Metrics}
\paragraph{Validity}
The fraction of generated SMILES sequences that can be decoded into a sequence of canonical and non-canonical amino acids using the SMILES2PEPTIDE filter \citep{tang2025peptune}.

\paragraph{Uniqueness} 
The fraction of distinct peptides over all valid peptide sequences.

\paragraph{Diversity}
The similarity of the generated peptide structures, measured as one minus the Tanimoto similarity between the pairwise Morgan fingerprints of generated sequences, computed as:
\begin{align}
    \text{Diversity}=1-\frac{1}{\binom{N_{\text{generated}}}{2}}\sum_{i,j}\frac{\boldsymbol{f}(\boldsymbol{x}_i)\cdot \boldsymbol{f}(\boldsymbol{x}_j)}{|\boldsymbol{f}(\boldsymbol{x}_i)|+ |\boldsymbol{f}(\boldsymbol{x}_j)|-\boldsymbol{f}(\boldsymbol{x}_i)\cdot \boldsymbol{f}(\boldsymbol{x}_j)}
\end{align}

\paragraph{Therapeutic Property Oracles}
We use lightweight property classifiers from PeptiVerse \citep{zhang2026peptiverse}, trained on experimental peptide data, to predict binding affinity to an input protein sequence, solubility, non-hemolysis, non-fouling, and cell permeability. For binding affinity, values $>9$ indicate strong $n\mathrm{M}$ to $p\mathrm{M}$ binders, $7-9$ indicates $n\mathrm{M}$ to $\mu\mathrm{M}$ medium binders, and $< 7$ indicates weak $\mu\mathrm{M}$ binders. For solubility, non-hemolysis, and non-fouling, the values are between $[0,1]$, indicating the probability of the positive class. For cell permeability, the model predicts the logarithm of the effective permeability coefficient $\log P_{\text{exp}}$, where values $\geq -6.0$ were labeled as high permeability, and values $< -6.0$ as weak permeability.

\subsection{Fine-Tuning Details}
Fine-tuning with A2D2 follows Algorithm \ref{alg:a2d2} with hyperparameters provided in Table \ref{table:hyperparmeters-finetune}. Similar to the small molecule experiment, we only perform the first loop in Algorithm \ref{alg:a2d2} for the A2D2 baseline without quality, which fine-tunes the policy model without the planner and does not alternate to the second loop. For the baselines with only insertion or unmasking quality, we alternate to the planner fine-tuning but using only quality-based unmasking \textit{or} quality-based insertions. During buffer generation, we retain only the valid sequences that pass the \texttt{SMILES2PEPTIDE} filter \citep{tang2025peptune}. The multi-objective rewards are scalarized and added to the log RND for each sequence. Fine-tuning was performed on a single NVIDIA A100 GPU per run.

\section{Language Reasoning Experiment Details}
\label{app:language}
\subsection{Pre-Training the Any-Length Language MDM}
Following the pre-training procedure from \citet{kim2025any}, we adapt the LLaDA-8B-Base model \citep{nie2026large} for any-length generation by adding a time-embedding layer with AdaLN \citep{peebles2023scalable} since the insertion expectation requires time dependence. Four AdaLN parameter sets are shared across intermediate bidirectional Transformer layers, and the insertion expectation is modeled with an additional \texttt{softplus} activation head. To adapt the backbone layers for the new any-length unmasking prediction task while maintaining transfer efficiency, we add LoRA adapters \citep{hu2022lora} to each attention layer and the unmasking posterior prediction head. The LoRA parameters are set to rank $r=128$, scaling factor $\alpha=128$, and $\texttt{dropout}=0.1$. The training data consists of an equal mixture of OpenWebText \citep{Gokaslan2019OpenWeb} and Proof-Pile-2 \citep{azerbayev2023llemma}, totalling ~$13.97$M training sequences. Training ran for approximately 3 days on 8 B200 NVIDIA GPUs for $240,000$ optimizer steps and $\approx1.10$ epochs.

\subsection{Reasoning Tasks}
We evaluate on two reasoning tasks where any-length, any-order decoding is particularly well-suited: math word-problem solving and code infilling.

\paragraph{Math reasoning (GSM8K).} GSM8K \citep{cobbe2021training} requires generating a multi-step chain-of-thought before a final numeric answer. Because the length of the reasoning trace varies substantially across problems, fixed-length decoding either truncates long derivations or wastes a large mask budget on padding. Any-length generation instead inserts tokens as the reasoning chain grows, letting the response length adapt to the difficulty of each problem. We prompt the model to emit reasoning and the answer in a fixed XML schema, \texttt{<reasoning>\dots</reasoning><answer>\textbackslash boxed\{\dots\}</answer>}, and extract the boxed value as the predicted answer.

\paragraph{Code infilling (HumanEval-infill)} We use the single-line infilling split of the HumanEval benchmark \citep{chen2021evaluating, bavarian2022efficient}, where a fixed prefix and suffix surround a masked span that the model must complete. The number of tokens needed to fill the gap is unknown a priori, making this a natural setting for insertion-based decoding: the model conditions bidirectionally on both the prefix and suffix and inserts the variable-length completion between them. We delimit the infilling region with \texttt{<infill-boundary>} tokens and score the generated span against the canonical solution.

For both tasks, we compare the A2D2-finetuned any-length model against the pretrained any-length model adapted from LLaMA-8B-Base with instruction fine-tuning under matched inference budgets. Generation uses greedy decoding (temperature $0$) with a generation length of $1024$, and we sweep the number of diffusion sampling steps $N_{\text{steps}} \in \{128, 256, 512, 1024\}$.

\subsection{Instruction Fine-Tuning}
We instruction fine-tune both the any-length MDM (obtained from the transfer-learning stage above) and the fixed-length LLaDA-8B-Base baseline separately for each task, with no cross-task data mixing. For math, we fine-tune on the GSM8K train split, formatting each example with the reasoning/answer XML schema described above and supervising only the response tokens. For code, we fine-tune on the \texttt{educational\_instruct} subset of OpenCoder \texttt{opc-sft-stage2} \citep{huang2025opencoder} and reserve the HumanEval single-line infilling tasks for evaluation. Both models are trained with LoRA adapters of rank $r=128$, scaling factor $\alpha=128$, and $\texttt{dropout}=0.1$ applied to the attention projections (\texttt{q\_proj}, \texttt{k\_proj}, \texttt{v\_proj}) and the feed-forward output projection, using a learning rate of $1\text{e-}4$ with a cosine schedule, a maximum sequence length of $1024$, and variable-length batching. We select the checkpoint with the best tracked validation correctness. Instruction fine-tuning was performed on 8 B200 NVIDIA GPUs.

\subsection{Reinforcement Learning Fine-Tuning with A2D2}
We then fine-tune the instruction-tuned any-length MDM with \textbf{A2D2} following Algorithm \ref{alg:a2d2}. The insertion–unmasking policy is initialized from the IFT checkpoint and fine-tuned with the Adaptive Joint Decoding (AJD) loss, while the unmasking- and insertion-quality predictor heads are trained from scratch using the same architecture as in the small molecule and peptide experiments. Following \citet{zhou2025co}, we alternate between updating the policy and the quality predictors every $N_{\text{alt}}=5$ iterations, and warm up by sampling the replay buffer with the policy alone for $N_{\text{warmup}}=50$ iterations before enabling quality-based sampling. Each iteration draws $12$ generations per prompt via greedy adaptive inference (temperature $0$) at a generation length of $1024$, stores them in the replay buffer, and performs $256$ gradient steps per epoch with a per-device batch size of $6$ and gradient accumulation of $2$. The policy is optimized at learning rate $1\text{e-}6$ with gradient-norm clipping of $0.2$ and weight decay $0.1$, and the AJD loss uses reward scaling $\alpha=0.01$, regularization coefficient $0.5$, and antithetic sampling of the masking schedule. Buffer generation and the inference-time schedule use quality-guided inference using a sliding window of size $64$. Sequences are weighted by their log Radon–Nikodym derivative added to the task reward. RL fine-tuning was performed on 2 B200 NVIDIA GPUs per run. Key hyperparameters are summarized in Table \ref{table:hyperparameters-language-finetune}.

\begin{table*}[t]
\caption{\textbf{Hyperparameters for language fine-tuning with A2D2.}}
\label{table:hyperparameters-language-finetune}
\begin{center}
\begin{small}
\resizebox{0.6\linewidth}{!}{
\begin{tabular}{@{}lcc@{}}
\toprule
 \textbf{Hyperparameter} & \textbf{GSM8K} & \textbf{HumanEval-infill} \\
\midrule
Number of Generations & $12$ & $12$ \\
Iterations Between Buffer Generation $N_{\text{resample}}$ & $12$ & $12$ \\
Gradient Steps Per Iteration $N_{\text{update}}$ & $256$ & $256$ \\
Alternation Frequency $N_{\text{alt}}$ & $5$ & $5$ \\
Warmup Iterations $N_{\text{warmup}}$ & $50$ & $50$ \\
Number of Sampling Steps for Buffer Generation $N_{\text{steps}}$ & $256$ & $256$ \\
Per-Device Batch Size & $6$ & $6$ \\
Gradient Accumulation Steps & $2$ & $2$ \\
Policy Learning Rate & $1\text{e-}6$ & $1\text{e-}6$ \\
Reward Scaling $\alpha$ & $0.01$ & $0.01$ \\
Max Length $L$ & $1024$ & $1024$ \\
\bottomrule
\end{tabular}
}
\end{small}
\end{center}
\end{table*}

\subsection{Datasets and Rewards}

\paragraph{GSM8K} GSM8K \citep{cobbe2021training} is a benchmark of grade-school arithmetic word problems that test multi-step numerical reasoning. The model is trained on the training partition and evaluated on the held-out test partition. Because A2D2 optimizes the policy through reward signals rather than supervised targets, we require a scalar reward that reflects how well a sampled response solves the problem. We adopt the same reward decomposition as d1 \citep{zhao2026d1}, which jointly rewards correct formatting and correct final answers:
\begin{enumerate}
    \item [(i)] \textit{XML Structure Reward}: If the opening and closing tag (\texttt{<reasoning>},\texttt{</reasoning>}, \texttt{<answer>},\texttt{</answer>}) is correctly placed, the reward gets $+0.125$ and for each token after the \texttt{</answer>} tag, the reward gets $-0.001$.
    \item[(ii)] \textit{Soft Format Reward}: If the response matchs the pattern \texttt{<reasoning>\dots </reasoning><answer>\dots</answer>}, the reward gets $+0.5$.
    \item[(iii)] \textit{Strict Format Reward}: If the response matches the format exactly with correct line breaks, the reward gets $+0.5$.
    \item[(iv)] \textit{Integer Answer Reward}: If the extracted value in the answer tags is an integer, the reward gets $+0.5$.
    \item[(v)] \textit{Correctness Reward}: For answers that are exactly correct, the reward gets $+2$.
\end{enumerate}

\paragraph{HumanEval-infill} For the code infilling experiment, we fine-tune on the \texttt{educational\_instruct} subset of OpenCoder \texttt{opc-sft-stage2} \citep{huang2025opencoder} and evaluate on the single-line infilling split of HumanEval \citep{chen2021evaluating, bavarian2022efficient}, where each task provides a prefix, a suffix, and a held-out canonical infill. Since the canonical solution is available as a reference, we define a reward that proxies infilling pass@1 by combining executability with similarity to the reference. Let $c$ denote the generated completion, $g$ the ground-truth infill, and $\texttt{edit\_sim}(c,g) = 1 - \frac{\text{Lev}(c,g)}{\max(|c|,|g|)}$ the normalized Levenshtein edit similarity. The reward decomposes as:
\begin{enumerate}
    \item [(i)] \textit{Non-Empty Reward}: If the completion is non-empty, the reward gets $+0.25$.
    \item[(ii)] \textit{Completion Compile Reward}: If the completion fragment compiles as valid Python, the reward gets $+0.5$, gated on $\texttt{edit\_sim}(c,g) > 0.3$ and the ground-truth infill compiling standalone. The edit-similarity gate prevents the model from gaming the reward by emitting arbitrary compiling code.
    \item[(iii)] \textit{Full-Program Compile Reward}: If the full reconstructed program \texttt{(prefix + completion + suffix)} compiles, the reward gets $+0.75$, gated on $\texttt{edit\_sim}(c,g) > 0.3$.
    \item[(iv)] \textit{Edit Similarity Reward}: A soft gradient toward the reference of $+1.5 \cdot \texttt{edit\_sim}(c,g)$.
\end{enumerate}
The total reward lies in the range $[0, 3.0]$, comparable to the GSM8K reward range.

\section{Ablation Studies}
We perform ablations for the small molecule experiment on two key hyperparameters: the KL regularization weight $\alpha$ and the number of sampling steps. Results are reported in Table \ref{table:molecule-ablation}.

\paragraph{KL Regularization Weight}
The coefficient $\alpha$ controls the trade-off between reward maximization and adherence to the base generative distribution. We observe a clear tension between these objectives. The smallest value ($\alpha=0.01$) achieves the highest quality and QED, reflecting strong reward optimization, but at the cost of slightly lower validity, uniqueness, and diversity. Increasing to $\alpha=0.1$ inverts this pattern: validity, uniqueness, and diversity all reach their best values, while quality drops to $44.6\%$ and QED to $0.642$, indicating that excessive regularization toward the prior dilutes the reward signal. 

\paragraph{Gradient Steps Per Iteration}
Reducing the number of gradient steps per iteration from 25 to 10 lowers reward values, while validity, uniqueness, and diversity remain comparable, suggesting that the additional steps primarily refine samples toward higher-reward regions rather than affecting coverage of the output space. These results indicate that a sufficient sampling budget is necessary to realize the benefits of reward guidance, and we use $25$ steps in all main experiments.

\begin{table*}[t]
\caption{\textbf{Ablations for \textit{de novo} small molecule design experiment.} Metrics were computed for $1000$ generated molecule sequences, and the mean and standard deviations were computed across $3$ seeds.}
\label{table:molecule-ablation}
\begin{center}
\begin{small}
\resizebox{\linewidth}{!}{
\begin{tabular}{@{}lcccccc@{}}
\toprule
\textbf{Method} & Validity ($\%$ total) & Uniqueness ($\%$ valid) & Quality ($\%$ total) & QED ($\uparrow$) & Synthetic Accessibility ($\downarrow$) & Diversity ($\uparrow$)  \\
\midrule
Pre-trained (Any Length) &  $95.800_{\pm 0.294}$ & $99.582_{\pm 0.086}$ & $44.167_{\pm 0.574}$ & $0.641_{\pm 0.002}$ & $3.401_{\pm 0.032}$  & $0.908_{\pm 0.000}$  \\
\midrule
\textbf{Reward Scaling $\alpha$} & & & & & &  \\
$\alpha=0.005$ & $97.833_{\pm 0.368}$ & $92.982_{\pm 0.363}$ & $77.633_{\pm 1.237}$ & $0.780_{\pm 0.003}$ & $2.670_{\pm 0.009}$ & $0.828_{\pm 0.001}$ \\
$\alpha=0.01$ (default) & $94.533_{\pm 0.665}$ & $93.695_{\pm 0.925}$ & $71.300_{\pm 0.852}$ & $0.762_{\pm 0.004}$ & $2.870_{\pm 0.018}$ & $0.843_{\pm  0.001}$ \\
$\alpha=0.1$ & $98.400_{\pm 0.245}$ & $57.692_{\pm 1.049}$ & $83.567_{\pm 0.591}$ & $0.783_{\pm 0.003}$ & $2.424_{\pm 0.025}$ & $0.787_{\pm 0.001}$ \\
\midrule
\textbf{Gradient Steps Per Iteration}  & & & & & & \\
$10$ & $97.333_{\pm 0.206}$ & $78.938_{\pm 0.299}$ & $64.967_{\pm 0.189}$ & $0.737_{\pm 0.004}$ & $2.733_{\pm 0.023}$ & $0.829_{\pm 0.003}$ \\
$25$ (default) & $94.533_{\pm 0.665}$ & $93.695_{\pm 0.925}$ & $71.300_{\pm 0.852}$ & $0.762_{\pm 0.004}$ & $2.870_{\pm 0.018}$ & $0.843_{\pm  0.001}$ \\ 
$50$ & $96.600_{\pm 0.572}$ & $88.989_{\pm 0.613}$ & $82.633_{\pm 0.618}$ & $0.820_{\pm 0.002}$ & $2.480_{\pm 0.001}$ & $0.771_{\pm 0.002}$ \\
\bottomrule
\end{tabular}
}
\end{small}
\end{center}
\end{table*}

\begin{table*}[t]
\caption{\textbf{Ablations for multi-objective peptide design.} Metrics were computed for $1000$ generated peptide sequences, and the mean and standard deviations were computed across $3$ seeds.}
\label{table:peptide-ablations}
\begin{center}
\begin{small}
\resizebox{\linewidth}{!}{
\begin{tabular}{@{}lcccccccc@{}}
\toprule
\textbf{Method} & Validity (\%) & Uniqueness ($\uparrow$) & Diversity ($\uparrow$) & Binding Affinity to TfR ($\uparrow$) & Solubility ($\uparrow$) & Non-hemolysis ($\uparrow$) & Non-fouling ($\uparrow$) & Permeability ($\uparrow$) \\
\midrule
 A2D2 w/o quality & $41.267_{\pm 1.520}$ & $100.000_{\pm 0.000}$ & $0.711_{\pm 0.000}$ & $10.241_{\pm 0.012}$ & $0.792_{\pm 0.011}$ & $0.831_{\pm  0.004}$ & $0.096_{\pm 0.001}$ & $-6.102_{\pm 0.006}$ \\
 A2D2 w/o insertion quality & $39.267_{\pm 1.886}$ & $100.000_{\pm 0.000}$ & $0.732_{\pm 0.004}$ & $9.746_{\pm 0.017}$ & $0.772_{\pm 0.007}$ & $0.837_{\pm 0.003}$ & $0.113_{\pm 0.001}$ & $-6.418_{\pm 0.025}$ \\
 A2D2 w/o unmasking quality & $31.600_{\pm 1.219}$ & $100.000_{\pm 0.000}$ & $0.603_{\pm 0.005}$ & $9.670_{\pm 0.058}$ & $0.864_{\pm 0.009}$ & $0.777_{\pm 0.004}$ & $0.119_{\pm 0.001}$ & $-6.337_{\pm 0.016}$ \\
 A2D2 w/ both quality & $48.600_{\pm 1.445}$ & $99.868_{\pm 0.187}$ & $0.578_{\pm 0.003}$ & $10.190_{\pm 0.033}$ & $0.776_{\pm 0.003}$ & $0.831_{\pm 0.003}$ & $0.118_{\pm 0.002}$ & $-6.338_{\pm 0.003}$ \\
\bottomrule
\end{tabular}
}
\end{small}
\end{center}
\end{table*}

\clearpage
\begin{figure*}
    \centering
    \includegraphics[width=\linewidth]{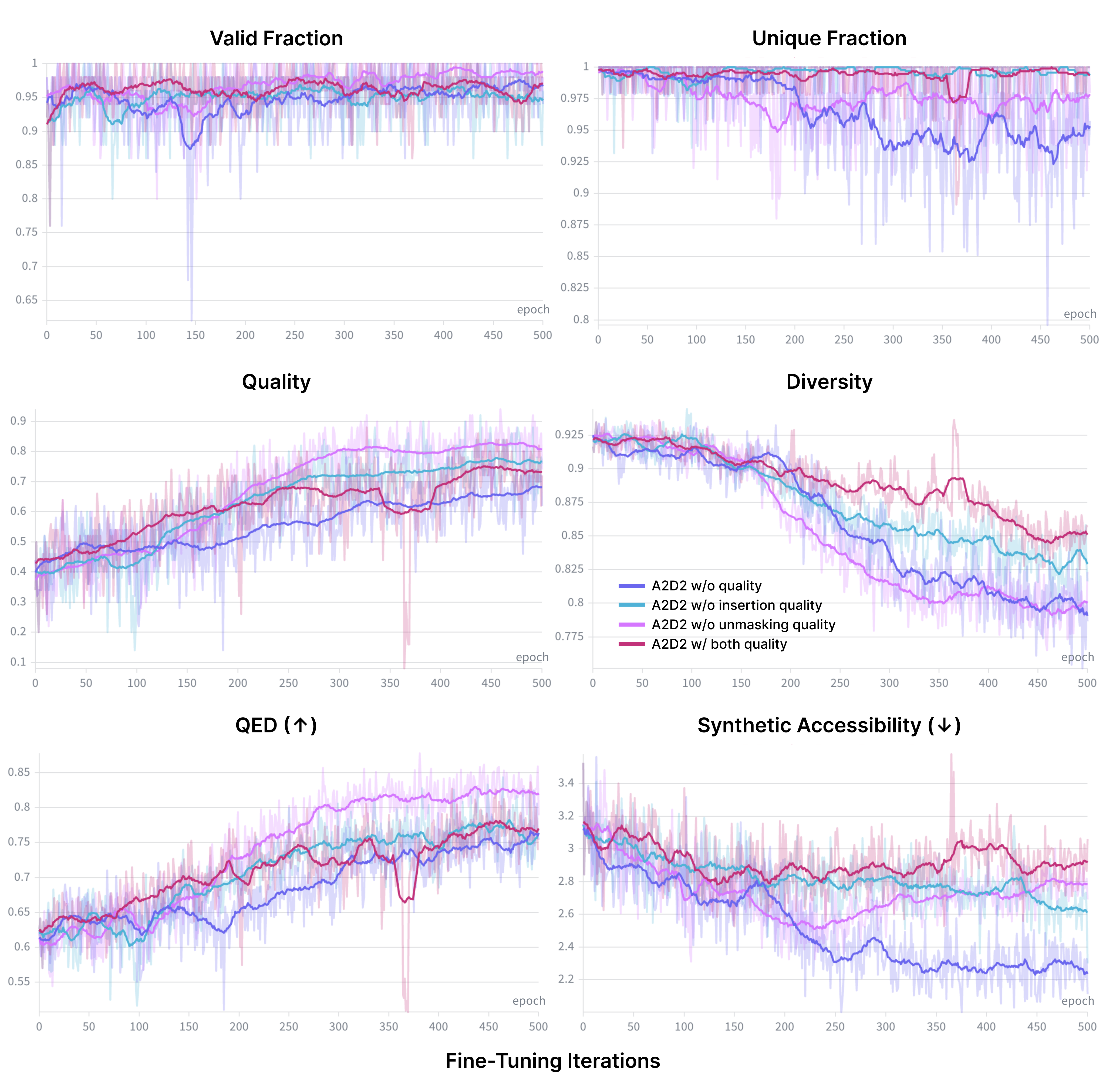}
    \caption{\textbf{Ablation study on insertion and unmasking quality for the small molecule experiment.} Average reward values of $50$ sequences sampled from the fine-tuned model after each fine-tuning epoch are plotted over a total of $500$ iterations, and a running average is shown with the smooth line. }
    \label{fig:mol-ablation}
\end{figure*}

\clearpage
\begin{figure*}
    \centering
    \includegraphics[width=\linewidth]{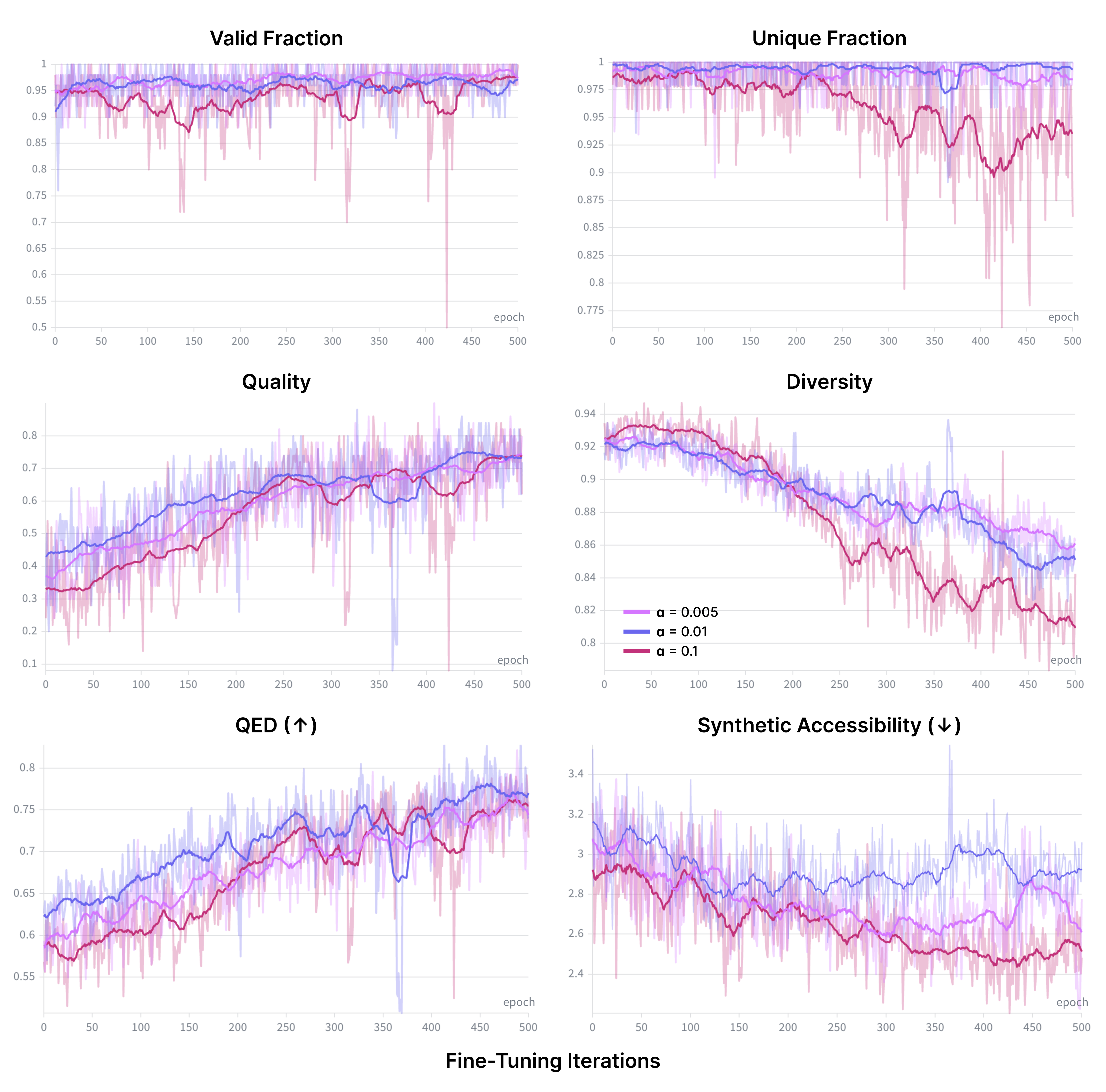}
    \caption{\textbf{Ablation study on reward scaling $\alpha$ for the small molecule experiment.} Average reward values of $50$ sequences sampled from the fine-tuned model after each fine-tuning epoch are plotted over a total of $500$ iterations, and a running average is shown with the smooth line. }
    \label{fig:alpha-ablation}
\end{figure*}

\clearpage
\begin{figure*}
    \centering
    \includegraphics[width=\linewidth]{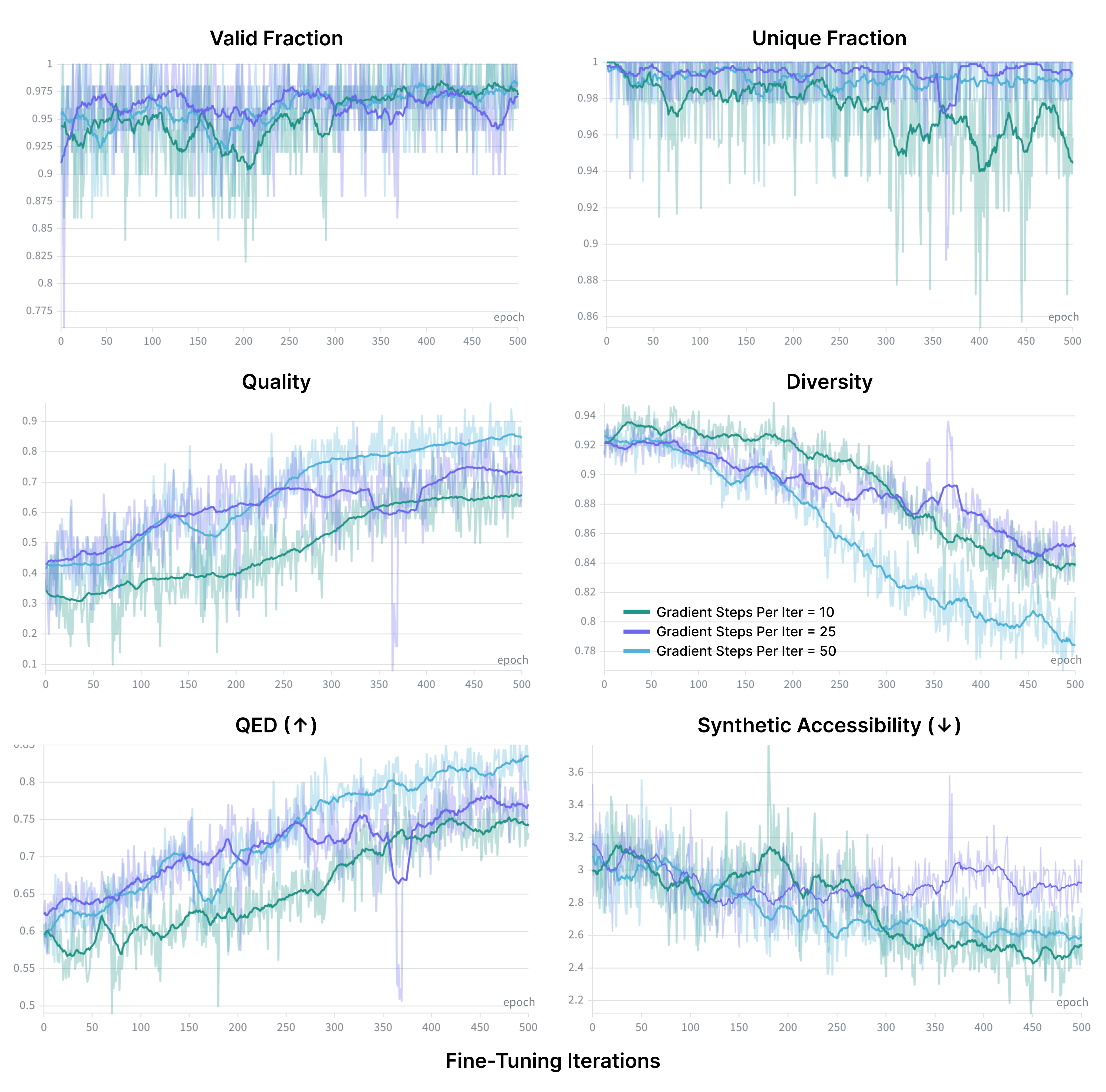}
    \caption{\textbf{Ablation study on the number of gradient steps per iteration for the small molecule experiment.} Average reward values of $50$ sequences sampled from the fine-tuned model after each fine-tuning epoch are plotted over a total of $500$ iterations, and a running average is shown with the smooth line. }
    \label{fig:steps-ablation}
\end{figure*}

\clearpage
\section{Algorithms}
\label{app:algorithms}
Here, we provide the complete pseudo-code for the algorithms used in the \textbf{A2D2} framework. Algorithm \ref{alg:a2d2} describes the joint fine-tuning procedure that alternates between fine-tuning the policy model and the quality planner model to generate from a target reward-tilted distribution. Algorithm \ref{alg:quality-sample} describes the adaptive any-length inference procedure using our trained quality predictors. Algorithm \ref{alg:sample-interpolant} describes the method used to sample an intermediate state $\boldsymbol{x}_t$ from the interpolant given a clean sequence $\boldsymbol{x}_1\sim p_{\text{target}}$. Algorithm \ref{alg:schedule-remask} describes the procedure for adaptive remasking, and Algorithm \ref{alg:schedule-delete} describes the procedure for adaptive deletion.

\begin{algorithm}[h]
\caption{\texttt{A2D2}: Adaptive Any-Length Discrete Diffusion}\label{alg:a2d2}
    \begin{algorithmic}[1]
        \State \textbf{Input:} Pretrained model $f^{\text{pre}}(\boldsymbol{x}_t, t), g^{\text{pre}}(\boldsymbol{x}_t, t)$, fine-tuned model $f_\theta(\boldsymbol{x}_t, t), g_\theta(\boldsymbol{x}_t, t)$, alternation frequency $N_{\text{alt}}$, buffer size $B$, number of repeats $R$
        \While{not converged}
        \State $\{\boldsymbol{x}^\star_{i}, W^{\bar{\theta},\bar{\phi}}\}_{i=1}^{B}\gets \texttt{BatchQualitySample}(f^{\text{pre}}, g^{\text{pre}}, f_{\theta}, g_\theta)$
        \State $\mathcal{B}\gets \{\boldsymbol{x}_{t, i}, W^{\bar{\theta},\bar{\phi}}\}_{i=1}^{B}$
        \For{\texttt{epoch} in $1, \dots, N_{\text{alt}}$}
        \LComment{Fine-tune policy model with frozen planner}
            \State $\{\tilde{\boldsymbol{x}}_{t, i}, W^{\bar{\theta},\bar{\phi}}\}\gets \texttt{SampleInterpolant}(\mathcal{B}; R)$
            \State Predict unmasking posterior $f_\theta(\tilde{\boldsymbol{x}}_{t, i}, t)$
            \State $\mathcal{L}_{\text{unmask}}(\theta; \tilde{\boldsymbol{x}}_{t, i})\gets -\frac{\dot{\beta}_t}{1-\beta_t}\sum_{\ell: \tilde{\boldsymbol{x}}_{t, j}=\boldsymbol{M}}\log f_\theta(\tilde{\boldsymbol{x}}_{t, i}, t)[\ell, \boldsymbol{x}^\star_i]$
            \State Predict insertion expectation $g_\theta(\tilde{\boldsymbol{x}}_{t, i}, t)$
            \State $\mathcal{L}_{\text{insert}}(\theta; \tilde{\boldsymbol{x}}_{t, i})\gets -\frac{\dot{\alpha}_t}{1-\alpha_t}\sum_{\ell=1}^{\text{len}(\tilde{\boldsymbol{x}}_{t, j})+1}\phi\left(s_t[\ell]-s_t[\ell-1]-1,  g_\theta(\tilde{\boldsymbol{x}}_{t, i}, t)[\ell]\right)$
            \State $\mathcal{L}_{\text{AJD}}(\theta)\gets\frac{1}{BR} \sum_{(\tilde{\boldsymbol{x}}_{t, i}, W^{\bar{\theta},\bar{\phi}})\sim \mathcal{B}}\left[e^{W^{\bar{\theta},\bar{\phi}}}(\mathcal{L}_{\text{unmask}}(\theta; \tilde{\boldsymbol{x}}_{t, i}) +\mathcal{L}_{\text{insert}}(\theta; \tilde{\boldsymbol{x}}_{t, i}))\right]$
            \State Update $\theta$ with $\nabla_\theta\mathcal{L}_{\text{AJD}}(\theta)$ 
        \EndFor
        \For{\texttt{epoch} in $1, \dots, N_{\text{alt}}$}
        \LComment{Fine-tune planner model with frozen policy}
            \State $\{\tilde{\boldsymbol{x}}_{t, i}, W^{\bar{\theta},\bar{\phi}}\}\gets \texttt{SampleInterpolant}(\mathcal{B}; R)$
            \State Predict $f_\theta(\tilde{\boldsymbol{x}}_{t, j}, t), g_\theta(\tilde{\boldsymbol{x}}_{t, j}, t)$
            \State $\tilde{\boldsymbol{x}}_{s, i}^{\text{unmask}},\mathcal{M}, \tilde{\boldsymbol{x}}_{s, i}^{\text{insert}},\mathcal{I}\gets \texttt{OneStepSampler}(\tilde{\boldsymbol{x}}_{t, i}, t,f_\theta(\tilde{\boldsymbol{x}}_{t, j}, t), g_\theta(\tilde{\boldsymbol{x}}_{t, j}, t)) $
            \State Predict insertion quality $\nu_\phi(\tilde{\boldsymbol{x}}_{t, i}^{\text{insert}}, s)$
            \State $\mathcal{L}_{\text{IQL}}(\phi;\tilde{\boldsymbol{x}}_{t, i}^{\text{insert}})\gets\sum_{i\in \mathcal{I}}\text{BCE}(\nu_\star^i,\nu_\phi^i(\tilde{\boldsymbol{x}}_{t, i}^{\text{insert}}))$
            \State Predict unmasking quality $\mu _\phi(\tilde{\boldsymbol{x}}_{t, i}^{\text{unmask}}, s)$
            \State $\mathcal{L}_{\text{UQL}}(\phi;\tilde{\boldsymbol{x}}_{t, i}^{\text{unmask}})\gets\sum_{\ell\in \mathcal{M}}\text{BCE}(\boldsymbol{1}[\tilde{\boldsymbol{x}}_{s, i}^{\text{unmask}, \ell}=\boldsymbol{x}^{\star, \ell}_i],\mu_\phi^\ell(\tilde{\boldsymbol{x}}_{s, i}^{\text{unmask}}))$
            \State $\mathcal{L}_{\text{AJD}}(\phi)\gets\frac{1}{BR} \sum_{(\tilde{\boldsymbol{x}}_{t, i}, W^{\bar{\theta},\bar{\phi}}_i)\sim \mathcal{B}}\left[e^{W^{\bar{\theta},\bar{\phi}}}(\mathcal{L}_{\text{IQL}}(\phi; \tilde{\boldsymbol{x}}^{\text{insert}}_{s, i}) +\mathcal{L}_{\text{UQL}}(\phi; \tilde{\boldsymbol{x}}_{s, i}^{\text{unmask}}))\right]$
            \State Update $\phi$ with $\nabla_\phi\mathcal{L}_{\text{AJD}}(\phi)$ 
        \EndFor
        \EndWhile
    \end{algorithmic}
\end{algorithm}

\begin{algorithm}[h]
\caption{\texttt{BatchQualitySample}: Sample a batch of sequences from the fine-tuned model and compute the Radon-Nikodym derivative importance weight}\label{alg:quality-sample}
    \begin{algorithmic}[1]
        \State \textbf{Input:} Pretrained model $f^{\text{pre}}(\boldsymbol{x}_t, t), g^{\text{pre}}(\boldsymbol{x}_t, t)$, fine-tuned model $f_\theta(\boldsymbol{x}_t, t), g_\theta(\boldsymbol{x}_t, t)$, reward model $r:\mathcal{X}\to\mathbb{R}$, reward scaling $\alpha$, sampling steps $N_{\text{steps}}$, max length $L$, tokens to remask $N_{\text{remask}}$
        \State $\Delta t\gets T/N_{\text{steps}}$
        \State $t\gets 0$
        \For{$i=1$ to $N_{\text{steps}}$}
        \State $\boldsymbol{R}^\theta_t(\boldsymbol{x}_t, \boldsymbol{x}_s)\gets\frac{\dot{\alpha}_t}{1-\alpha_t}\cdot g_\theta(\boldsymbol{x}_t,t)$\Comment{Predict insertion expectation}
        \State $\boldsymbol{Q}^\theta_t(\boldsymbol{x}_t, \boldsymbol{x}_s)\gets \frac{\dot{\beta}_t}{1-\beta_t}\cdot f_\theta(\boldsymbol{x}_t,t)$\Comment{Predict unmasking posterior}
        \State $\boldsymbol{R}^{\text{pre}}_t(\boldsymbol{x}_t, \boldsymbol{x}_s)\gets\frac{\dot{\alpha}_t}{1-\alpha_t}\cdot g^{\text{pre}}(\boldsymbol{x}_t,t)$\Comment{Predict pretrained insertion expectation}
        \State $\boldsymbol{Q}^{\text{pre}}_t(\boldsymbol{x}_t, \boldsymbol{x}_s)\gets \frac{\dot{\beta}_t}{1-\beta_t}\cdot f^{\text{pre}}(\boldsymbol{x}_t,t)$\Comment{Predict pretrained unmasking posterior}
        \State $\boldsymbol{x}_s\gets \texttt{SampleCategorical}(\boldsymbol{Q}^\theta_t(\boldsymbol{x}_t, \boldsymbol{x}_s))$\Comment{Unmasking step}
        \If{$i<N_{\text{steps}}-1$}
        \State $\boldsymbol{x}_s\gets \texttt{ScheduleAwareRemasking}(\boldsymbol{x}_s, \mu_\phi)$
        \EndIf
        \State $\texttt{log\_rnd}\gets \texttt{log\_rnd}+\sum_{\ell}\boldsymbol{1}[\boldsymbol{x}^\ell_s\neq \boldsymbol{x}^\ell_t]\cdot( \log \boldsymbol{ Q}_t^{\text{pre}}(\boldsymbol{x}_s, \boldsymbol{x}_t)[\ell, \boldsymbol{x}_s^\ell]-\log \boldsymbol{ Q}_t^\theta(\boldsymbol{x}_s, \boldsymbol{x}_t)[\ell, \boldsymbol{x}_s^\ell]) $
        \If{$i<N_{\text{steps}}-1$}
        \State $L_t\gets \text{len}(\boldsymbol{x}_t)$
        \State $I_t\sim \texttt{Poisson}(\boldsymbol{R}^\theta_t(\boldsymbol{x}_t, \boldsymbol{x}_s)\cdot \Delta t)$\Comment{Sample number of tokens to insert at each gap}
        \State $\texttt{total\_ext}\gets\sum_\ell L_t^\ell$
        \If{$\texttt{total\_ext}+L_t> L_{\text{max}}$}
        \State $L_t\gets 0$\Comment{Do not insert any masks}
        \EndIf
        \State $L_s\gets L_t+\texttt{total\_ext}$
        \State $\texttt{ext\_cum}[L_t+1]\gets \sum_{j=1}^\ell I_t^\ell$\Comment{Compute cumulative insertions up to position $\ell$}
        \State $\boldsymbol{x}_s'\gets (\text{pad})^{L_{\text{max}}}$\Comment{Initialize sequence of pad tokens}
        \State $\boldsymbol{x}_s'[: L_s+1]\gets \boldsymbol{M}$ \Comment{Initialize $L_s$ tokens to mask}
        \For{$\ell=1$ to $L_t$}
        \State $\boldsymbol{x}_s'[\ell+\texttt{ext\_cum}[\ell]]\gets \boldsymbol{x}_s[\ell]$\Comment{Set unmasked tokens to their new position}
        \EndFor
        \State $\boldsymbol{x}_s', I ^\star_t\gets \texttt{ScheduleAwareDeletion}(\boldsymbol{x}_s', \nu_\phi)$
        \State $\boldsymbol{x}_s\gets \boldsymbol{x}_s'$
        \LComment{Compute log insertion rates under Poisson}
        \State $\texttt{log\_policy\_insert}\gets I^{\star}_t\log \boldsymbol{R}^\theta_t(\boldsymbol{x}_t, \boldsymbol{x}_s)-\boldsymbol{R}^\theta_t(\boldsymbol{x}_t, \boldsymbol{x}_s)$
        \State $\texttt{log\_pre\_insert}\gets I^{\star}_t\log \boldsymbol{R}^{\text{pre}}_t(\boldsymbol{x}_t, \boldsymbol{x}_s)-\boldsymbol{R}^{\text{pre}}_t(\boldsymbol{x}_t, \boldsymbol{x}_s)$
        \State $\texttt{log\_rnd}\gets \texttt{log\_rnd}+\sum_\ell(\texttt{log\_pre\_insert}[\ell ]-\texttt{log\_policy\_insert}[\ell])$
        \EndIf
        \State $\boldsymbol{x}_t\gets \boldsymbol{x}_s$
        \State $t\gets t+\Delta t$
        \EndFor
        \State $r(\boldsymbol{x}_1)\gets \texttt{RewardFunc}(\texttt{decode}(\boldsymbol{x}_1))$
        \State $\texttt{log\_rnd}\gets \texttt{log\_rnd}+(r(\boldsymbol{x}_t)/\alpha)$
        \State \textbf{return} $\boldsymbol{x}_1, \texttt{log\_rnd}, r(\boldsymbol{x}_1)$
    \end{algorithmic}
\end{algorithm}

\begin{algorithm}[h]
\caption{\texttt{SampleInterpolant}: Partially remasks and deletes tokens in batch}\label{alg:sample-interpolant}
    \begin{algorithmic}[1]
        \State \textbf{Input:} Batch of clean sequences $\{\boldsymbol{x}_{T, i}, W^{\bar{\theta},\bar{\phi}}\}_{i=1}^B$, number of replicates $R$
        \State Repeat each sequence in batch $\{\boldsymbol{x}_{T, j}, W^{\bar{\theta},\bar{\phi}}\}_{j=1}^{B\times R}$
        \State Initialize batch of corrupted sequences $\{\boldsymbol{x}_{t, j}\}_{j=1}^{B\times R}$
        \For{$\boldsymbol{x}_{T, j}$ in $\{\boldsymbol{x}_{T, j}\}_{j=1}^{B\times R}$} 
        \State $t\sim \mathcal{U}(0,1)$
        \For{$\ell=1$ to $L$}
        \State $t_i^\ell\sim \dot{\alpha}_tdt$\Comment{Sample insertion time}
        \State $t_u^\ell\sim \boldsymbol{1}[t\geq t_i^\ell]\frac{\dot{\beta}_t}{1-\beta_{t_i^\ell}}dt$ \Comment{Sample unmasking time}
        \If{$t<t_i^\ell$}
        \State $\boldsymbol{x}_{t, j}^\ell \gets \text{(empty)}$]\Comment{Deleted token at time $t$}
        \ElsIf{$t_i^\ell \leq t< t_u^\ell$}
        \State $\boldsymbol{x}_{t, j}^\ell\gets \boldsymbol{M}$ \Comment{Masked token at time $t$}
        \Else
        \State $\boldsymbol{x}_{t, j}^\ell\gets \boldsymbol{x}_{1, j}^\ell$ \Comment{Clean token at time $t$}
        \EndIf
        \State Remove intermediate (empty) from $\boldsymbol{x}_{t, j}^\ell$ and replace as pads at end of sequence
        \EndFor
        \EndFor
        \State \textbf{return} $\{\boldsymbol{x}_{t, j}, W^{\bar{\theta},\bar{\phi}}\}_{j=1}^{B\times R}$
    \end{algorithmic}
\end{algorithm}

\begin{algorithm}[h]
\caption{\texttt{ScheduleAwareRemasking}: Ensure the number of masked tokens after remasking matches the expected mask count by remasking low-quality tokens.}\label{alg:schedule-remask}
    \begin{algorithmic}[1]
        \State \textbf{Input:} Draft sequence after unmasking $\tilde{\boldsymbol{x}}_s$, time $s$, unmasking quality model $\mu_\phi$, unmasked indices $\mathcal{M}_t$
        \State $\texttt{exp\_num\_mask}\gets \texttt{interpolant.exp\_mask\_fraction}(\tilde{\boldsymbol{x}}_s, s)$
        \State $\texttt{num\_mask}\gets |\{\ell: \tilde{\boldsymbol{x}}^\ell_s= \boldsymbol{M}\}|$
        \State $\texttt{mask\_to\_add}\gets \max(0, \texttt{exp\_num\_mask}-\texttt{num\_mask})$
        \For{$\ell  \in \mathcal{M}_t$}
        \State $\texttt{remask\_scores}[\ell]\gets -\mu_\phi^\ell(\tilde{\boldsymbol{x}}_s)$\Comment{lower confidence has higher probability of remask}
        \EndFor
        \State $\mathcal{R}_s\gets \texttt{topk\_indices}(\texttt{remask\_scores}, \texttt{mask\_to\_add})$\Comment{set of indices to remask}
        \For{$i \in \mathcal{R}_s$}
        \State $\tilde{\boldsymbol{x}}_s^\ell\gets \boldsymbol{M}$
        \EndFor 
        \State \textbf{return} $\tilde{\boldsymbol{x}}_s$
    \end{algorithmic}
\end{algorithm}

\begin{algorithm}[h]
\caption{\texttt{ScheduleAwareDeletion}: Ensure the sequence length after insertion matches the expected length by deleting low-quality insertions}\label{alg:schedule-delete}
    \begin{algorithmic}[1]
        \State \textbf{Input:} Draft sequence after insertion $\tilde{\boldsymbol{x}}_s$, time $s$, insertion quality model $\nu_\phi$, inserted indices $\mathcal{I}_t$, threshold quality $\mu_{\text{min}}$
        \State $\texttt{exp\_length}\gets \texttt{interpolant.exp\_length}(\tilde{\boldsymbol{x}}_s, s)$
        \If{$\texttt{len}(\tilde{\boldsymbol{x}}_s)\geq \texttt{exp\_length}$}
        \State $\texttt{to\_delete}\gets \texttt{len}(\tilde{\boldsymbol{x}}_s)-\texttt{exp\_length}$
        \State $\texttt{insert\_scores}[i]\gets -\nu_\phi^i(\tilde{\boldsymbol{x}}_s)$\Comment{lower confidence has higher probability of deletion}
        \State $\mathcal{D}_s\gets \texttt{topk\_indices}(\texttt{insert\_scores}, \texttt{to\_delete})$\Comment{set of indices to delete}
        \State $\tilde{\boldsymbol{x}}_s^i\gets\text{(empty)}$
        \EndIf
        \State \textbf{return} $\tilde{\boldsymbol{x}}_s$, $\mathcal{I}_t/\mathcal{D}_s$
    \end{algorithmic}
\end{algorithm}

\clearpage

\section{Example Generations}
\begin{figure}[h!]
\centering
\begin{figurepanel}

\noindent
\begin{minipage}[t]{0.485\linewidth}
\nfeheader{Sampling Steps = 128}

\vspace{8pt}
\begin{samplebox}
\noindent
\begin{minipage}[t]{0.56\linewidth}
\methodtitle{A2D2}
\end{minipage}\hfill
\begin{minipage}[t]{0.40\linewidth}
\raggedleft \textbf{answer 4} \checkmark
\end{minipage}

\vspace{4pt}
\begin{lstlisting}[style=gsmtext]
Q: A juggler can juggle 16 balls. Half of
the balls are golf balls, and half of the
golf balls are blue. How many blue golf
balls are there?

Golf balls: 16/2 = 8
Blue: 8/2 = 4 balls
Answer: \boxed{4}
\end{lstlisting}
\end{samplebox}

\vspace{8pt}
\nfeheader{Sampling Steps = 512}

\vspace{8pt}
\begin{samplebox}
\noindent
\begin{minipage}[t]{0.56\linewidth}
\methodtitle{A2D2}
\end{minipage}\hfill
\begin{minipage}[t]{0.40\linewidth}
\raggedleft \textbf{answer 16} \checkmark
\end{minipage}

\vspace{4pt}
\begin{lstlisting}[style=gsmtext]
Q: Jenny goes to the florist to buy some
flowers. Roses cost $2 each and $15 for a
dozen. If she bought 15 roses and arrived
with five 5 dollar bills and they only have
quarters for change, how many quarters does
she leave with?

First find how many individual roses Jenny
bought: 15 roses - 12 roses = 3 roses
Then multiply by the price per rose:
3 roses * $2/rose = $6
Then add the cost of the dozen roses:
$6 + $15 = $21
Then subtract from the amount Jenny paid:
$25 - $21 = $4
Then divide the change by the value of a
quarter: $4 / .25 = 16 quarters
Answer: \boxed{16}
\end{lstlisting}
\end{samplebox}
\end{minipage}
\hfill
\begin{minipage}[t]{0.485\linewidth}
\nfeheader{Sampling Steps = 256}

\vspace{8pt}
\begin{samplebox}
\noindent
\begin{minipage}[t]{0.56\linewidth}
\methodtitle{A2D2}
\end{minipage}\hfill
\begin{minipage}[t]{0.40\linewidth}
\raggedleft \textbf{answer 70} \checkmark
\end{minipage}

\vspace{4pt}
\begin{lstlisting}[style=gsmtext]
Q: Gretchen has 110 coins. There are 30
more gold coins than silver coins. How many
gold coins does Gretchen have?

Let g be the number of gold coins and s be
the number of silver coins. We know that
g + s = 110 and g = s + 30.
Substituting the second equation into the
first, we get s + 30 + s = 110
Combining like terms, we get 2s + 30 = 110
Subtracting 30 from both sides, 2s = 80
Dividing both sides by 2, we get s = 40
Substituting back into g = s + 30, we get
g = 40 + 30 = 70
Answer: \boxed{70}
\end{lstlisting}
\end{samplebox}

\vspace{8pt}
\nfeheader{Sampling Steps = 1024}

\vspace{8pt}
\begin{samplebox}
\noindent
\begin{minipage}[t]{0.56\linewidth}
\methodtitle{A2D2}
\end{minipage}\hfill
\begin{minipage}[t]{0.40\linewidth}
\raggedleft \textbf{answer 40} \checkmark
\end{minipage}

\vspace{4pt}
\begin{lstlisting}[style=gsmtext]
Q: Mico and Marco wanted to get to know
each other. They realized that the sum of
their ages is 20. What will be the sum of
their ages in 10 years?

Let's say Mico's age is m.
So Marco's age is 20 - m.
In 10 years, Mico will be m + 10 years old
and Marco will be 20 - m + 10 years old.
So the sum of their ages will be
m + 10 + (20 - m + 10) = 40.
Answer: \boxed{40}
\end{lstlisting}
\end{samplebox}
\end{minipage}

\end{figurepanel}
\caption{Example GSM8K generations from the any-length model fine-tuned with A2D2 at 128, 256, 512, and 1024 sampling steps. The model is given the problem and generates step-by-step reasoning ending in a \texttt{\textbackslash boxed\{\}} answer. All four reach the correct answer.}
\label{fig:gsm8k-samples}
\end{figure}

\begin{figure}[h!]
\centering
\begin{figurepanel}

\noindent
\begin{minipage}[t]{0.485\linewidth}
\nfeheader{Sampling Steps = 128}

\vspace{8pt}
\begin{samplebox}
\noindent
\begin{minipage}[t]{0.56\linewidth}
\methodtitle{A2D2}
\end{minipage}\hfill
\begin{minipage}[t]{0.40\linewidth}
\raggedleft \textbf{reward 2.25}
\end{minipage}

\vspace{4pt}
\begin{lstlisting}[style=infillpy]
def f(n):
    """ Implement the function f that takes n as a
    parameter, and returns a list of size n, such that
    the value of the element at index i is the factorial
    of i if i is even or the sum of numbers from 1 to i
    otherwise. i starts from 1.
    Example: f(5) == [1, 2, 6, 24, 15]
    """
    ret = []
    for i in range(1,n+1):
        if i%2 == 0:
            x = 1
            for j in range(1,i+1): x *= j  # <-- infilled
            ret += [x]
        else:
            x = 0
            for j in range(1,i+1): x += j
            ret += [x]
    return ret
\end{lstlisting}
\end{samplebox}

\vspace{8pt}
\nfeheader{Sampling Steps = 512}

\vspace{8pt}
\begin{samplebox}
\noindent
\begin{minipage}[t]{0.56\linewidth}
\methodtitle{A2D2}
\end{minipage}\hfill
\begin{minipage}[t]{0.40\linewidth}
\raggedleft \textbf{reward 2.25}
\end{minipage}

\vspace{4pt}
\begin{lstlisting}[style=infillpy]
def intersperse(numbers, delimeter):
    """ Insert a number 'delimeter' between every two
    consecutive elements of input list `numbers`.
    >>> intersperse([1, 2, 3], 4)
    [1, 4, 2, 4, 3]
    """
    if not numbers:
        return []
    result = []
    for n in numbers[:-1]:
        result.append(n)
        result.append(delimeter)
    result.append(numbers[-1])  # <-- infilled
    return result
\end{lstlisting}
\end{samplebox}
\end{minipage}
\hfill
\begin{minipage}[t]{0.485\linewidth}
\nfeheader{Sampling Steps = 256}

\vspace{8pt}
\begin{samplebox}
\noindent
\begin{minipage}[t]{0.56\linewidth}
\methodtitle{A2D2}
\end{minipage}\hfill
\begin{minipage}[t]{0.40\linewidth}
\raggedleft \textbf{reward 2.25}
\end{minipage}

\vspace{4pt}
\begin{lstlisting}[style=infillpy]
def largest_smallest_integers(lst):
    '''
    Returns a tuple (a, b), where 'a' is the largest of
    negative integers, and 'b' is the smallest of positive
    integers in a list. If there is no negative or positive
    integers, return them as None.
    Examples:
    largest_smallest_integers([2, 4, 1, 3, 5, 7]) == (None, 1)
    largest_smallest_integers([]) == (None, None)
    '''
    smallest = list(filter(lambda x: x < 0, lst))
    largest = list(filter(lambda x: x > 0, lst))  # <-- infilled
    return (max(smallest) if smallest else None,
            min(largest) if largest else None)
\end{lstlisting}
\end{samplebox}

\vspace{8pt}
\nfeheader{Sampling Steps = 1024}

\vspace{8pt}
\begin{samplebox}
\noindent
\begin{minipage}[t]{0.56\linewidth}
\methodtitle{A2D2}
\end{minipage}\hfill
\begin{minipage}[t]{0.40\linewidth}
\raggedleft \textbf{reward 2.25}
\end{minipage}

\vspace{4pt}
\begin{lstlisting}[style=infillpy]
def specialFilter(nums):
    """Returns the number of elements in the array that
    are greater than 10 and both first and last digits of
    a number are odd (1, 3, 5, 7, 9).
    specialFilter([15, -73, 14, -15]) => 1
    specialFilter([33, -2, -3, 45, 21, 109]) => 2
    """
    count = 0
    for num in nums:
        if num > 10:
            odd_digits = (1, 3, 5, 7, 9)
            number_as_string = str(num)  # <-- infilled
            if int(number_as_string[0]) in odd_digits and \
               int(number_as_string[-1]) in odd_digits:
                count += 1
    return count
\end{lstlisting}
\end{samplebox}
\end{minipage}

\end{figurepanel}
\caption{Example HumanEval single-line infilling generations from the any-length model fine-tuned with A2D2 at 128, 256, 512, and 1024 sampling steps. The model is given the function prefix and suffix and infills the line marked \texttt{\# <-- infilled}. All four reconstruct the ground-truth line exactly.}
\label{fig:infill-samples}
\end{figure}

\end{document}